\def\set@curr@file#1{\def\@curr@file{#1}} 
\newcommand{\bigto}{\wtilde{O}}
\newcommand{\cW}{\mathcal{W}}
\newcommand{\wtilde}{\widetilde}
\newcommand{\loss}{\ell}
\DeclareBoldMathCommand{\vloss}{\loss}
\DeclareBoldMathCommand{\grad}{g}
\DeclareBoldMathCommand{\fakegrad}{\mathring{\bm{g}}}
\DeclareBoldMathCommand{\e}{e}
\DeclareBoldMathCommand{\p}{p}
\DeclareBoldMathCommand{\u}{u}
\DeclareBoldMathCommand{\w}{w}
\DeclareBoldMathCommand{\x}{x}
\DeclareBoldMathCommand{\vzero}{0}
\let\top\intercal
\newcommand{\reals}{\mathbb{R}}
\renewcommand{\x}{\bm{x}}   
\newcommand{\z}{\bm{z}}
\newcommand{\g}{\bm{g}}
\DeclareMathOperator*{\argmin}{argmin}
\newcommand{\G}{\mathbf{G}}
\newcommand*{\what}[1]{\widehat{#1}}
\newcommand*{\op}[1]{\operatorname{#1}}
\newcommand{\inner}[2]{\langle#1,#2\rangle}
\newcommand{\norm}[1]{\left\lVert#1\right\rVert}
\newmdtheoremenv{condition}{Condition}
\newcommand{\vertiii}[1]{{\left\vert\kern-0.25ex\left\vert\kern-0.25ex\left\vert #1 
    \right\vert\kern-0.25ex\right\vert\kern-0.25ex\right\vert}}
\newcommand*\bcdot{\mathpalette\bigcdot@{.5}}
\newcommand*\bigcdot@[2]{\mathbin{\vcenter{\hbox{\scalebox{#2}{$\m@th#1\bullet$}}}}}
\newlength\myindent
\newcommand{\freegrad}{$\textsc{FreeGrad}$}
\newcommand{\matfreegrad}{$\textsc{Matrix-FreeGrad}$}
\newcommand{\freerange}{$\textsc{FreeRange}$}
\newtheorem{assumption}{Assumption}
\let\norm\undefined
\let\set\undefined
\newcommand{\eps}{}
\newcommand{\bV}{\bm{V}}
\let\footnotetitle\thanks
\DeclareRobustCommand{\VAN}[3]{#2} % proper Dutch 'van/de' capitalisation
\g@addto@macro\bfseries{\boldmath}
\begin{document}

	\title[Lipschitz-Comparator-Norm-Adaptivity]{Lipschitz and Comparator-Norm Adaptivity in Online Learning\protect\footnotetitle{Accepted for presentation at the Conference on Learning Theory (COLT) 2020}}
	
	\coltauthor{\Name{Zakaria Mhammedi} \Email{zak.mhammedi@anu.edu.au} \\
	\addr	The Australian National University and Data61 \\
		\Name{Wouter M.\ Koolen} \Email{wmkoolen@cwi.nl} \\
		\addr Centrum Wiskunde \& Informatica 
	}

	\maketitle

\begin{abstract}
  We study Online Convex Optimization in the unbounded setting where neither predictions nor gradient are constrained. The goal is to simultaneously adapt to both the sequence of gradients and the comparator. We first develop parameter-free and scale-free algorithms for a simplified setting with hints. We present two versions: the first adapts to the squared norms of both comparator and gradients separately using $O(d)$ time per round, the second adapts to their squared inner products (which measure variance only in the comparator direction) in time $O(d^3)$ per round. We then generalize two prior reductions to the unbounded setting; one to not need hints, and a second to deal with the range ratio problem (which already arises in prior work). We discuss their optimality in light of prior and new lower bounds. We apply our methods to obtain sharper regret bounds for scale-invariant online prediction with linear models.
\end{abstract}

\begin{keywords}
  Online Convex Optimization,
  Parameter-Free Online Learning,
  Scale-Invariant Online Algorithms
\end{keywords}

	\section{Introduction}
	\label{sec:intro}
We consider the setting of online convex optimization where the goal is to make sequential predictions to minimize a certain notion of \emph{regret}. Specifically, at the beginning of each round $t\geq 1$, a \emph{learner} predicts $\what\w_t$ in some convex set $\cW \subseteq \reals^d$ in dimension $d\in\mathbb{N}$. The \emph{environment} then reveals a convex loss function $f_t\colon \cW \rightarrow \reals$, and the learner suffers loss $f_t(\what\w_t)$. The goal of the learner is to minimize the regret $\sum_{t=1}^T f_t(\what \w_t) -  \sum_{t=1}^T f_t(\w)$ after $T\geq 1$ rounds against any ``comparator'' prediction $\w\in \cW$. Typically, an online learning algorithm outputs a vector $\what\w_t$, $t\geq 1$, based only on a sequence of observed sub-gradients $(\g_s)_{s< t}$, where $\g_s\in \partial f_s(\what\w_s), s<t$. In this paper, we are interested in online algorithms which can guarantee a good regret bound (by a measure which we will make precise below) against any comparator vector $\w\in \cW$, even when $\cW$ is unbounded, and without prior knowledge of the maximum norm $L\coloneqq\max_{t\leq T}\|\g_t\|$ of the observed sub-gradients. In what follows, we refer to $L$ as the \emph{Lipschitz constant}.

By assuming an upper-bound $D>0$ on the norm of the desired comparator vector $\w$ in hindsight, there exist \emph{Lipschitz-adaptive} algorithms that can achieve a sub-linear regret of order $L D \sqrt{T}$, without knowing $L$ in advance. A Lipschitz-adaptive algorithm is also called \emph{scale-free} (or scale-invariant) if its predictions do not change when the loss functions $(f_t)$ are multiplied by a factor $c>0$; in this case, its regret bound is expected to scale by the same factor $c$. When $L$ is known in advance and $\cW=\reals^d$, there exists another type of algorithms, so-called \emph{parameter-free}, which can achieve an $\bigto(\|\w\| L \sqrt{T})$ regret bound, where $\w$ is the desired comparator vector in hindsight (the notation $\bigto$ hides log-factors). Up to an additive lower-order term, this type of regret bound is also achievable for bounded $\cW$ via the unconstrained-to-constrained reduction \citep{cutkosky2019artificial}.

The question of whether an algorithm can simultaneously be \emph{scale-free} and \emph{parameter-free} was posed as an open problem by \cite{orabona2016open}. It was latter answered in the negative by \cite{cutkosky2017online}. Nevertheless, \cite{cutkosky2019artificial} recently presented algorithms which achieve an $\bigto(\|\w\| L\sqrt{T}+ L \|\w\|^3)$ regret bound, without knowing either $L$ or $\norm{\w}$. This does not violate the earlier lower bound of \cite{cutkosky2017online}, which insists on norm dependence $\wtilde O(\norm{\w})$.
 
 Though \cite{cutkosky2019artificial} designs algorithms that can to some extent adapt to both $L$ and $\norm{\w}$, their algorithms are still \emph{not} scale-free. Multiplying $(f_t)$, and as a result $(\g_t)$, by a positive factor $c>0$ changes the outputs $(\what\w_t)$ of their algorithms, and their regret bounds scale by a factor $c'$, not necessarily equal to $c$. Their algorithms depend on a parameter $\epsilon>0$ which has to be specified in advance. This parameter appears in their regret bounds as an additive term and also in a logarithmic term of the form $\log (L^{\alpha} /\epsilon)$, for some $\alpha>1$. As a result of this type of dependence on $\epsilon$ and the fact that $\alpha>1$, there is no prior choice of $\epsilon$ which can make their regret bounds scale-invariant. What is more, without knowing $L$, there is also no ``safe'' choice of $\epsilon$ which can prevent the $\log (L^{\alpha} /\epsilon)$ term from becoming arbitrarily large relative to $L$ (it suffices for $\epsilon$ to be small enough relative to the ``unknown'' $L$).
 
\paragraph{Contributions.} Our main contribution is a new scale-free, parameter-free learning algorithm for OCO with regret at most $O(\|\w\| \sqrt{V_T\log (\|\w\|T)})$, for any comparator $\w \in \cW$ in a bounded set $\cW$, where $V_T \coloneq \sum_{t=1}^T \|\g_t\|^2$. When the set $\cW$ is unbounded, the algorithm achieves the same guarantee up to an additive $O(L \sqrt{\max_{t\leq T}B_t}+ L \|\w\|^3)$, where $B_t\coloneqq  \sum_{s=1}^t\| \g_s \|/L_t$ and $L_t \df \max_{s \le t} \norm{\g_s}$, for all $t\in[T]$. In the latter case, we also show a matching lower bound; when $\cW$ is unbounded and without knowing $L$, any online learning algorithm which insists on an $\bigto(\sqrt{T})$ bound, has regret at least $\Omega (L \sqrt{B_T}+ L \|\w\|^3)$. We also provide a second scale-invariant algorithm which replaces the leading $\|\w\| \sqrt{V_T}$ term in the regret bound of our first algorithm by $\sqrt{\w^\top \bV_T \w \ln \det \bm V_T}$, where $\bV_T \coloneqq \sum_{t=1}^T\g_t \g_t^\top$. Our starting point for designing our algorithms is a known potential function which we show to be controlled for a unique choice of output sequence $(\what\w_t)$.

As our main application, we show how our algorithms can be applied to learn linear models. The result is an online algorithm for learning linear models whose label predictions are invariant to coordinate-wise scaling of the input feature vectors. The regret bound of the algorithm is naturally also scale-invariant and improves on the bounds of existing state-of-the-art algorithms in this setting \citep{Kotlowski17,KempkaKW19}.

\paragraph{Related Work}
For an overview of Online Convex Optimization in the bounded setting, we refer to the textbook \citep{HazanOCOBook2016}. The unconstrained case was first studied by \cite{McMahanStreeter2010}. A powerful methodology for the  unbounded case is Coin Betting by \cite{orabona2016coin}. Even though not always visible, our potential functions are inspired by this style of thinking. We build our unbounded OCO learner by targeting a specific other constrained problem. We further employ several general reductions from the literature, including gradient clipping \cite{cutkosky2019artificial}, the constrained-to-unconstrained reduction \cite{cutkosky2018}, and the restart wrapper to pacify the final-vs-initial scale ratio appearing inside logarithms by \cite{mhammedi19}. Our analysis is, at its core, proving a certain minimax result about sufficient-statistic-based potentials reminiscent of the Burkholder approach pioneered by \cite{FosterRakhlinSridharan2017,Foster2018}. Applications for scale-invariant learning in linear models were studied by \cite{KempkaKW19}. For our multidimensional learner we took inspiration from the Gaussian Exp-concavity step in the analysis of the MetaGrad algorithm by \cite{Erven2016}.

\paragraph{Outline} In Section~\ref{sec:prelim}, we present the setting and notation, and formulate our goal. In Section~\ref{sec:mainalg}, we present our main algorithms. In Section~\ref{sec:lower}, we present new lower-bounds for algorithms which adapt to both the Lipschitz constant and the norm of the comparator. In Section~\ref{sec:linear}, we apply our algorithms to online prediction with linear models.

\section{Preliminaries}\label{sec:prelim}
Our goal is to design scale-free algorithms that adapt to the Lipschitz constant $L$ and comparator norm $\norm{\w}$. We will first introduce the setting, then discuss existing reductions, and finally state what needs to be done to achieve our goal.

\subsection{Setting and Notation}
Let $\cW \subseteq \reals^d, d\in \mathbb{N},$ be a convex set, and assume without loss of generality that $\bm{0}\in \cW$. We allow the set $\cW$ to be unbounded, and we define its (possibly infinite) diameter $D\coloneqq\sup_{\w,\w' \in \cW} \|\w -\w'\|\in[0,+\infty]$. We consider the setting of Online Convex Optimization (OCO) where at the beginning of each round $t\geq 1$, the learner outputs a prediction $\what\w_t \in \cW$, before observing a convex loss function $f_t: \cW \rightarrow  \reals$, or an element of its sub-gradient $\g_t \in \partial f_t(\what \w_t)$ at $\what\w_t$. The goal of the learner is to minimize the regret after $T\geq 0$ rounds, which is given by
\begin{align}
\sum_{t=1}^T f_t(\what\w_t) -     \sum_{t=1}^T f_t(\w) \quad\label{eq:linupper}
\end{align}
for any comparator vector $\w\in \cW$. In this paper, we do not assume that $T$ is known to the learner, and so we are after algorithms with so called \emph{any-time} guarantees. By convexity, we have 
\begin{align}
\sum_{t=1}^T f_t(\what\w_t) -     \sum_{t=1}^T f_t(\w) \leq \sum_{t=1}^T \inner{\g_t}{\what\w_t -\w}, \quad \text{for all $\w\in \cW$}, \label{eq:lin}
\end{align}
and thus for the purpose of minimizing the regret, typical OCO algorithms minimize the RHS of~\eqref{eq:lin}, which is known as the  \emph{linearized regret}, by generating outputs $(\what\w_t)$ based on the sequence of observed sub-gradients $(\g_t)$. Likewise, we focus our attention exclusively on linear optimization.

Given a sequence of sub-gradients $(\grad_t)$, it will be useful to define the running maximum gradient norm and the clipped sub-gradients
\begin{align} \label{eq:Lip} L_t \coloneqq \max_{s\in [t]} \| \g_s\| \quad \text{and} \quad \bar\g_t\coloneq \g_t \cdot  L_{t-1}/L_t, \end{align}
for $t\geq 1$, with the convention that $L_0=0$. We also drop the subscript $t$ from $L_t$ when $t=T$, \emph{i.e.} we write $L$ for $L_T$.

We denote by $\text{A}(\g_1,\dots, \g_{t-1};h_t)$ the output in round $t \ge 1$ of an algorithm \textsc{A}, which uses the observed sub-gradients so far and a \emph{hint} $h_t \ge L_t$ on the upcoming sub-gradient $\g_t$.
As per Section~\ref{sec:intro}, we say that an algorithm is \emph{scale-free} (or scale-invariant) if its predictions are invariant to any common positive scaling of the loss functions $(f_t)$ and, if applicable, the hints.

\paragraph{Additional Notation.} Given a closed convex set $\mathcal{X} \subseteq \reals^d$, we denote by $\Pi_{\mathcal{X}}(\bm x)$ the Euclidean projection of a point $\bm x \in \reals^d$ on the set $\mathcal X$; that is, $\Pi_{\mathcal{X}}(\bm x) \in \argmin_{\tilde{\bm{x}} \in \mathcal X} \| \bm{x} -\tilde{\bm{x}}\|$.

	\subsection{Helpful Reductions}\label{sec:helpful}
The difficulty behind designing scale-free algorithms lies partially in the fact that $L_{t}$ is unknown at the start of round $t$; before outputting $\what\w_{t}$. The following result due to \cite{cutkosky2019artificial} quantifies the additional cost of proceeding with the plug-in estimate $L_{t-1}$ for $L_t$:
	\begin{lemma}
		\label{lem:reduc1}
		Let $\textsc{A}$ be an online algorithm which at the start of each round $t\geq 1$, has access to a hint $h_t\geq L_t$, and outputs $\textsc{A}(\g_1,\dots,\g_{t-1};h_t)\in \cW$, before observing $\g_t$. Suppose that $\textsc{A}$ guarantees an upper-bound $R^{\textsc{A}}_T(\w)$ on its linearized regret for the sequence $(\g_t)$ and for all $\w\in \cW, T\geq 1$. Then, algorithm $\textsc{B}$ which at the start of each round $t\geq 1$ outputs $\what\w_t = \textsc{A}(\bar{\g}_1,\dots, \bar{\g}_{t-1};L_{t-1})$, guarantees \begin{align} \sum_{t=1}^T \inner{\what\w_t-\w}{\g_t}  \leq  R^{\textsc{A}}_T(\w) + \max_{t\in[T]} \|\what\w_t\| L_t  +\| \w \| L , \quad \text{$\forall \w\in \cW, T\geq 1.$}  \label{eq:boundtransfer}  \end{align}
	\end{lemma}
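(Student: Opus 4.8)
The plan is to recognize that $\textsc{B}$ is nothing but $\textsc{A}$ run on the \emph{clipped} gradient stream $(\bar\g_t)$, fed the cheap plug-in hint $h_t = L_{t-1}$ in round $t$, and then to pay separately for having presented $\textsc{A}$ the clipped gradient $\bar\g_t$ in place of the true gradient $\g_t$. First I would check that this plug-in hint is \emph{legitimate for the clipped stream}: since $\bar\g_t = \g_t\, L_{t-1}/L_t$ and $\|\g_t\|\le L_t$, we get $\|\bar\g_t\|\le L_{t-1}$ for every $t\ge 1$ (and $\bar\g_1 = \bm{0}$, because $L_0 = 0$); hence $\|\bar\g_s\|\le L_{s-1}\le L_{t-1}$ for all $s\le t$, so $L_{t-1}$ is a valid hint on the upcoming $\bar\g_t$ for an instance of $\textsc{A}$ that has so far observed $\bar\g_1,\dots,\bar\g_{t-1}$. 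Since $\what\w_t = \textsc{A}(\bar\g_1,\dots,\bar\g_{t-1};L_{t-1})$ by construction, the postulated guarantee of $\textsc{A}$ applies verbatim to the clipped stream and yields $\sum_{t=1}^T \inner{\what\w_t-\w}{\bar\g_t}\le R^{\textsc{A}}_T(\w)$ for all $\w\in\cW$ and $T\ge 1$, where $R^{\textsc{A}}_T(\w)$ is understood as $\textsc{A}$'s regret bound evaluated on the sequence actually fed to it.

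Next I would split the true linearized regret along $\g_t = \bar\g_t + (\g_t - \bar\g_t)$,
\[
  \sum_{t=1}^T \inner{\what\w_t-\w}{\g_t} \;=\; \sum_{t=1}^T \inner{\what\w_t-\w}{\bar\g_t} \;+\; \sum_{t=1}^T \inner{\what\w_t-\w}{\g_t-\bar\g_t} \;\le\; R^{\textsc{A}}_T(\w) \;+\; \sum_{t=1}^T \inner{\what\w_t-\w}{\g_t-\bar\g_t},
\]
and bound the clipping-error sum. The structural remark is that $\g_t - \bar\g_t = \g_t\,(L_t-L_{t-1})/L_t$ vanishes unless $L_t > L_{t-1}$, and on such rounds $L_t = \|\g_t\|$, i.e.\ the running maximum is freshly attained. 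By Cauchy--Schwarz and $\|\g_t\|\le L_t$, on every round
\[
  \inner{\what\w_t-\w}{\g_t-\bar\g_t} \;\le\; (\|\what\w_t\|+\|\w\|)\,\|\g_t\|\,\frac{L_t-L_{t-1}}{L_t} \;\le\; (\|\what\w_t\|+\|\w\|)\,(L_t-L_{t-1}).
\]
Summing over $t\in[T]$ and telescoping the nonnegative increments via $\sum_{t=1}^T(L_t-L_{t-1}) = L_T - L_0 = L$, the $\|\w\|$-part contributes exactly $\|\w\|\,L$, while the $\|\what\w_t\|$-part is absorbed by the running-maximum prediction norm and contributes at most the term $\max_{t\in[T]}\|\what\w_t\|\,L_t$; combining these gives exactly~\eqref{eq:boundtransfer}.

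I do not expect a genuine obstacle here; the single load-bearing step is the first one — the observation that \emph{clipping} is precisely the operation that makes the otherwise-unavailable quantity $L_t$ unnecessary, since $\|\bar\g_t\|\le L_{t-1}$ turns the already-known estimate $L_{t-1}$ into a valid hint, so $\textsc{A}$ can be invoked as a black box on $(\bar\g_t)$. Everything afterwards is a one-line Cauchy--Schwarz estimate plus telescoping of the increments $L_t - L_{t-1}$; the only bookkeeping care needed is to exploit that the clipping correction is supported exactly on the rounds where the Lipschitz estimate jumps, which is what lets $\|\g_t\|$ be replaced by $L_t$ and keeps the telescoping clean.
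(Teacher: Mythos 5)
Your argument is the standard clipping reduction (the paper itself gives no proof of this lemma, attributing it to Cutkosky (2019)), and its two main steps are sound: the observation that $\|\bar\g_s\| = \|\g_s\|L_{s-1}/L_s \le L_{s-1} \le L_{t-1}$ makes $L_{t-1}$ a legitimate hint for $\textsc{A}$ run on the clipped stream, and the decomposition $\g_t = \bar\g_t + \g_t(L_t-L_{t-1})/L_t$ followed by Cauchy--Schwarz and telescoping of the nonnegative increments $L_t - L_{t-1}$.

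The one place where what you derive and what you claim to conclude diverge is the final absorption step. Pulling the running maximum out of the sum and telescoping yields
\[
\sum_{t=1}^T \|\what\w_t\|\,(L_t - L_{t-1}) \;\le\; \Bigl(\max_{t\in[T]}\|\what\w_t\|\Bigr)\, L_T,
\]
whereas the lemma's term $\max_{t\in[T]} \|\what\w_t\| L_t$, read literally as $\max_{t\in[T]}\bigl(\|\what\w_t\|\, L_t\bigr)$, is a smaller quantity that this argument does not establish: with $\|\what\w_t\| = 1/t$ and $L_t = t$ (every round a jump round), the left-hand sum equals the harmonic number $H_T$ while $\max_t \|\what\w_t\| L_t = 1$. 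So what you have actually proved is the bound with $(\max_t\|\what\w_t\|)\,L_T$ in place of $\max_t(\|\what\w_t\| L_t)$. That weaker form is the one appearing in Cutkosky's original statement and is all the paper ever uses downstream (both versions collapse to $DL$ once $\|\what\w_t\|\le D$), so this is arguably a looseness in the lemma's notation rather than a defect in your reasoning; but as written, your sentence ``combining these gives exactly \eqref{eq:boundtransfer}'' overstates what the telescoping delivers, and you should either state the conclusion with $(\max_t\|\what\w_t\|)L_T$ or supply a different argument if the product-max form is genuinely intended.
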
 
First, we note that Lemma \ref{lem:reduc1} is only really useful when $\cW$ is bounded; otherwise, depending on algorithm $\textsc{A}$, the term $\max_{t\in[T]}L_t \|\what\w_t\|$ on the RHS of~\eqref{eq:boundtransfer} could in principle be arbitrarily large even for fixed $\w, L$, and $T$. The moral of Lemma~\ref{lem:reduc1} is that as long as the set $\cW$ is bounded, one does not really need to know $L_t$ before outputting $\what\w_t$ to guarantee a ``good'' regret bound against any $\w \in \cW$. For example, suppose that $\cW$ has a bounded diameter $D$ and algorithm \textsc{A} in Lemma \ref{lem:reduc1} is such that ${R}^{\textsc{A}}_T(\w) = \wtilde{O}(\|\w\| L \sqrt{T} + D L)$, for all $\w\in \cW$. Then, from~\eqref{eq:boundtransfer} and the fact that $\|\what\w_t\|\leq D$ (since $\what\w_t\in \cW$), it is clear that algorithm $\textsc{B}$ in Lemma \ref{lem:reduc1} also guarantees the same regret bound $R_T^{\textsc{A}}(\w)$ up to an additive $2 DL$, despite not having had the hints $(h_t)$.
	
	It is possible to extend the result of Lemma \ref{lem:reduc1} so that the regret bound of algorithm \textsc{B} remains useful even in the case where $\cW$ is unbounded. An approach suggested by \cite{cutkosky2019artificial} is to restrict the outputs $(\what\w_t)$ of algorithm \textsc{B} to be in a non-decreasing sequence $(\cW_t)$ of \emph{bounded} convex subsets of $\cW$. In this case, the diameters $(D_t) \subset \reals$ of $(\cW_{t})$ need to be carefully chosen to achieve a desired regret bound. This approach, which essentially combines the idea of Lemma \ref{lem:reduc1} and the unconstrained-to-constrained reduction due to \cite{cutkosky2018}, is formalized in the next lemma (essentially due to \cite{cutkosky2019artificial}):
	\begin{lemma}
	\label{lem:reduc2}
	Let algorithm $\textsc{A}$ be as in Lemma \ref{lem:reduc1}, and let $(\cW_t)$ be a sequence of non-decreasing closed convex subsets of $\cW$ with diameters $(D_t)\subset \reals_{>0}$. Then, algorithm $\textsc{B}$ which at the start of round $t\geq 1$ outputs $\what\w_t = \Pi_{\cW_t}(\wtilde{\w}_t)$, where \begin{gather}  \wtilde{\w}_t\coloneqq \textsc{A}(\wtilde{\g}_1,\dots, \wtilde{\g}_{t-1};L_{t-1}) \  \ \ \text{and} \ \ \ \wtilde{\g}_s \coloneqq (\bar \g_s + \|\bar \g_s\|  \cdot (\wtilde{\w}_s -\what{\w}_s)   /\| \wtilde{\w}_s -\what{\w}_s\|)/2, \ \ s< t, \shortintertext{guarantees, for all $\w\in \cW$ and $T\geq 1,$}  \sum_{t=1}^T \inner{\what\w_t-\w}{\g_t} \leq R^{\textsc{A}}_T(\w)   + \sum_{t=1}^T \|\g_t\| \cdot \|\w -\Pi_{\cW_t}(\w)\|+ L  D_T +L \| \w\|.\label{eq:boundtransfer2}  \end{gather}
\end{lemma}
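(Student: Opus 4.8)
Algorithm $\textsc{B}$ is $\textsc{A}$ wrapped by two reductions applied in succession. The inner one is \emph{gradient clipping} (the idea behind Lemma~\ref{lem:reduc1}): it lets us run with the plug-in hint $L_{t-1}$ in place of the unknown $L_t$ by feeding the algorithm gradients built from $\bar\g_t=\g_tL_{t-1}/L_t$ rather than $\g_t$. The outer one is the \emph{constrained-to-unconstrained reduction} of \cite{cutkosky2018}: the (possibly unbounded) iterate $\wtilde\w_t$ of $\textsc{A}$ is projected onto the bounded set $\cW_t$ to form $\what\w_t$, and $\textsc{A}$ is fed the surrogate $\wtilde\g_t$ in place of $\bar\g_t$. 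So the plan is to fix $\w\in\cW$ and $T\ge1$, write $\w_t\coloneqq\Pi_{\cW_t}(\w)$ and $\vth_t\coloneqq(\wtilde\w_t-\what\w_t)/\|\wtilde\w_t-\what\w_t\|$ (the unit outward normal at the projection point; on rounds with $\wtilde\w_t\in\cW_t$ we have $\what\w_t=\wtilde\w_t$ and take $\wtilde\g_t=\bar\g_t$), split each linearized-regret term as
\begin{align*}
\inner{\g_t}{\what\w_t-\w}
&= \inner{\g_t-\bar\g_t}{\what\w_t-\w_t}
 + \inner{\g_t-\bar\g_t}{\w_t-\w} \\
&\quad + \inner{\bar\g_t}{\what\w_t-\w_t}
 + \inner{\bar\g_t}{\w_t-\w},
\end{align*}
bound each of the four groups after summing over $t$, and add up.

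The three ``clipping/comparator'' groups supply the three additive penalties in~\eqref{eq:boundtransfer2}. Since $\|\g_t-\bar\g_t\|=\|\g_t\|(L_t-L_{t-1})/L_t\le L_t-L_{t-1}$ and $\what\w_t,\w_t\in\cW_t$, the first group telescopes: $\sum_t\inner{\g_t-\bar\g_t}{\what\w_t-\w_t}\le\sum_t(L_t-L_{t-1})D_t\le D_T\sum_t(L_t-L_{t-1})=LD_T$. Using additionally $\|\w_t-\w\|=\|\Pi_{\cW_t}(\w)-\w\|\le\|\w\|$ (using $\bm{0}\in\cW_t$) gives $\sum_t\inner{\g_t-\bar\g_t}{\w_t-\w}\le\sum_t(L_t-L_{t-1})\|\w\|=L\|\w\|$, and Cauchy--Schwarz with $\|\bar\g_t\|\le\|\g_t\|$ gives $\sum_t\inner{\bar\g_t}{\w_t-\w}\le\sum_t\|\g_t\|\,\|\w-\Pi_{\cW_t}(\w)\|$.

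It remains to show $\sum_t\inner{\bar\g_t}{\what\w_t-\w_t}\le R^{\textsc{A}}_T(\w)$, which is the heart of the proof. Here I would invoke the defining inequality of Euclidean projection, $\inner{\wtilde\w_t-\what\w_t}{u-\what\w_t}\le0$ for all $u\in\cW_t$ (applied with $u=\w_t$), together with the two structural properties of $\wtilde\g_t$: (i) $\inner{\bar\g_t+\|\bar\g_t\|\vth_t}{\vth_t}=\|\bar\g_t\|+\inner{\bar\g_t}{\vth_t}\ge0$, which is exactly what makes $\wtilde\g_t$ a valid surrogate in the sense of the constrained-to-unconstrained reduction of \cite{cutkosky2018}, so that playing $\what\w_t$ against $\bar\g_t$ is controlled by playing $\wtilde\w_t$ against $\wtilde\g_t$; and (ii) that the factor $\tfrac12$ is calibrated so that $\|\wtilde\g_t\|\le\tfrac12(\|\bar\g_t\|+\|\bar\g_t\|)=\|\bar\g_t\|=\|\g_t\|L_{t-1}/L_t\le L_{t-1}$, which certifies that the hint $L_{t-1}$ passed to $\textsc{A}$ indeed dominates $\max_{s\le t}\|\wtilde\g_s\|$, licensing $\sum_t\inner{\wtilde\g_t}{\wtilde\w_t-\w}\le R^{\textsc{A}}_T(\w)$. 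Combining the four bounds then yields~\eqref{eq:boundtransfer2}.

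The step I expect to be the main obstacle is exactly this surrogate-gradient argument: the factor $\tfrac12$ that keeps $\|\wtilde\g_t\|$ within the hint simultaneously weakens the linearization, so the delicate point (following \cite{cutkosky2019artificial}) is to show that only the lower-order $LD_T$-type penalties --- and no constant factor multiplying the leading $R^{\textsc{A}}_T(\w)$ --- are lost in passing from $\inner{\bar\g_t}{\what\w_t-\w_t}$ to $\inner{\wtilde\g_t}{\wtilde\w_t-\w}$. Handling cleanly the rounds where $\wtilde\w_t\in\cW_t$ (so $\vth_t$ is undefined) and the fact that $\w_t=\Pi_{\cW_t}(\w)$ moves from round to round as $\cW_t$ grows is routine bookkeeping, but must be carried out with care.
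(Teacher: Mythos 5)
Your architecture --- gradient clipping composed with the constrained-to-unconstrained reduction --- is the intended one, and your bounds for the three penalty terms are correct: $\sum_t\inner{\g_t-\bar\g_t}{\what\w_t-\w_t}\le L D_T$ by telescoping $L_t-L_{t-1}$ against $D_t\le D_T$, $\sum_t\inner{\g_t-\bar\g_t}{\w_t-\w}\le L\|\w\|$, and $\sum_t\inner{\bar\g_t}{\w_t-\w}\le\sum_t\|\g_t\|\,\|\w-\Pi_{\cW_t}(\w)\|$. The problem is that the step you defer, $\sum_t\inner{\bar\g_t}{\what\w_t-\w_t}\le R^{\textsc{A}}_T(\w)$, is the entire content of the lemma, and your sketch does not establish it. The correct assembly of your two ingredients is: for any $\u\in\cW_t$, writing $N_t=\|\wtilde\w_t-\what\w_t\|$ and $\vth_t=(\wtilde\w_t-\what\w_t)/N_t$,
\[
\inner{\wtilde\g_t}{\wtilde\w_t-\u}
=\tfrac12\inner{\bar\g_t}{\what\w_t-\u}
+\tfrac12 N_t\bigl(\inner{\bar\g_t}{\vth_t}+\|\bar\g_t\|\bigr)
+\tfrac12\|\bar\g_t\|\inner{\vth_t}{\what\w_t-\u}
~\ge~\tfrac12\inner{\bar\g_t}{\what\w_t-\u},
\]
using your property (i) for the middle term and the projection inequality for the last. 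This gives $\inner{\bar\g_t}{\what\w_t-\u}\le 2\inner{\wtilde\g_t}{\wtilde\w_t-\u}$, and the factor $2$ is tight for this surrogate (take $\cW_t$ a large half-disc $\{y\le 0\}$ in $\reals^2$, $\wtilde\w_t=(0,1)$, $\bar\g_t=(1,0)$, $\u=(-10,0)$: the left side is $10$, the right side is $2\times 5.5$). So the worry you raise about ``no constant factor multiplying the leading $R^{\textsc{A}}_T(\w)$'' is not something to be hoped away: with the stated $\wtilde\g_t$ the leading term of this argument is $2R^{\textsc{A}}_T(\w)$, and your proposal contains no mechanism for removing the factor (the paper's attribution ``essentially due to'' is absorbing exactly this kind of constant). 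Your property (ii), $\|\wtilde\g_t\|\le\|\bar\g_t\|\le L_{t-1}$, is correct and is indeed what licenses the hint $L_{t-1}$ for $\textsc{A}$.

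A second, smaller issue is the varying comparator in your Group 3: $\w_t=\Pi_{\cW_t}(\w)$ changes every round, so $\sum_t\inner{\wtilde\g_t}{\wtilde\w_t-\w_t}$ is not a regret of $\textsc{A}$ against any fixed point, and converting it back to the fixed $\w$ costs another $\sum_t\|\wtilde\g_t\|\,\|\w-\w_t\|$, i.e.\ a second copy of the projection penalty you already spent in Group 4. The cleaner route is to run the surrogate inequality directly against the fixed $\w\notin\cW_t$: the projection inequality then fails by at most $\inner{\vth_t}{\w_t-\w}\ge-\|\w-\w_t\|$, yielding $\inner{\bar\g_t}{\what\w_t-\w}\le 2\inner{\wtilde\g_t}{\wtilde\w_t-\w}+\|\g_t\|\,\|\w-\Pi_{\cW_t}(\w)\|$ per round, which sums (together with your clipping bounds) to the stated inequality with leading term $2R^{\textsc{A}}_T(\w)$. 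Until the surrogate-gradient inequality is actually proved and this bookkeeping is fixed, the proposal is a plan rather than a proof.
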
 
We see that compared to Lemma \ref{lem:reduc1}, the additional penalty that algorithm \textsc{B} incurs for restricting its predictions to the sets $\cW_1,\dots, \cW_T \subseteq \cW$ is the sum $\sum_{t=1}^T \|\g_t\| \cdot \|\w -\Pi_{\cW_t}(\w)\|$. The challenge is now in choosing the diameters $(D_t)$ to control the trade-off between this sum and the term $L D_T$ on the RHS of~\eqref{eq:boundtransfer2}. If $T$ is known in advance, one could set $D_1=\dots=D_T=\sqrt{T}$, in which case the RHS of~\eqref{eq:boundtransfer2} is at most \begin{align}R_T^{\textsc{A}}(\w) +  L(\|\w\|^3+\|\w\|) + L\sqrt{T}. \label{eq:naive} \end{align}
We now instantiate the bound of Lemma \ref{lem:reduc2} for another choice of $(D_t)$ when $T$ is unknown:
\begin{corollary}
	\label{cor:instan}
In  the setting of Lemma \ref{lem:reduc2}, let $\cW_t$ be the ball of diameter $D_t \coloneqq \sqrt{\max_{s\leq t} B_s}$, $t\geq 1$, where $B_t\coloneqq \sum_{s=1}^t \|\g_s\|/L_t$, and let $\cW=\reals^d$. Then the RHS of~\eqref{eq:boundtransfer2} is bounded from above by
\begin{align}
R_T^{\textsc{A}}(\w)+    L \|\w\|^3 +  L \sqrt{\max_{t\in[T]} B_t} +L \|\w\|, \quad \forall \w \in \cW =\reals^d, T\geq 1.  \label{eq:neq}
\end{align}
\end{corollary}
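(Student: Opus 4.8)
The plan is to specialize the bound~\eqref{eq:boundtransfer2} of Lemma~\ref{lem:reduc2} to the stated sequence $(\cW_t)$ and then control the only data-dependent penalty it leaves, namely $\sum_{t=1}^T \|\g_t\|\,\|\w-\Pi_{\cW_t}(\w)\|$. Since $\cW=\reals^d$ and each $\cW_t$ is a ball centered at the origin, the Euclidean projection acts by radial shrinkage: $\Pi_{\cW_t}(\w)=\w$ when $\w$ already lies in $\cW_t$, and otherwise $\Pi_{\cW_t}(\w)$ rescales $\w$ onto the boundary, so that $\|\w-\Pi_{\cW_t}(\w)\| = (\|\w\|-D_t)_+$ (I suppress the immaterial diameter-versus-radius constant; with the literal ``diameter $D_t$'' reading one picks up an absolute factor, harmless for the lower-order term $L\|\w\|^3$). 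The other two terms on the RHS of~\eqref{eq:boundtransfer2} are already in the target form: $L D_T = L\sqrt{\max_{t\in[T]}B_t}$ by definition of $D_T$, and $L\|\w\|$ is carried over verbatim.

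The heart of the argument is the bound $\sum_{t=1}^T \|\g_t\|\,(\|\w\|-D_t)_+ \le L\|\w\|^3$. I would introduce the threshold round $\tau \coloneqq \max\{t\in[T] : D_t < \|\w\|\}$, with $\tau\coloneqq 0$ if that set is empty (this also disposes of $\w=\bm 0$). Because $(D_t)$ is non-decreasing, $D_t \ge \|\w\|$ for every $t>\tau$, so those summands vanish; for $t\le\tau$ I bound crudely $(\|\w\|-D_t)_+ \le \|\w\|$, giving $\sum_{t=1}^T \|\g_t\|\,(\|\w\|-D_t)_+ \le \|\w\|\sum_{t=1}^\tau\|\g_t\|$. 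Then I rewrite $\sum_{t=1}^\tau \|\g_t\| = L_\tau B_\tau$ from the definition $B_\tau = \sum_{s\le\tau}\|\g_s\|/L_\tau$, and use $L_\tau \le L$ together with $B_\tau \le \max_{s\le\tau}B_s = D_\tau^2 < \|\w\|^2$, where the equality is the definition of $D_\tau$ and the last strict inequality is the definition of $\tau$. This yields $\sum_{t=1}^\tau\|\g_t\| < L\|\w\|^2$, hence $\sum_{t=1}^T \|\g_t\|\,(\|\w\|-D_t)_+ < L\|\w\|^3$, and substituting into~\eqref{eq:boundtransfer2} produces~\eqref{eq:neq}.

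The only real subtlety — and the one place where the precise definition $D_t = \sqrt{\max_{s\le t}B_s}$ rather than $\sqrt{B_t}$ matters — is that $(B_t)$ is \emph{not} monotone: a large gradient arriving late inflates the running denominator $L_t$ and can make $B_t < B_{t-1}$. Consequently one cannot identify $D_\tau^2$ with $B_\tau$; the step that actually carries the argument is $B_\tau \le \max_{s\le\tau}B_s = D_\tau^2$, after which the threshold bound $D_\tau^2 < \|\w\|^2$ applies. Apart from watching this monotonicity point (and the routine diameter/radius bookkeeping inside the absolute constants), the proof is a short computation with no genuine obstacle.
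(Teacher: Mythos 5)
Your argument is correct, and since the paper states Corollary~\ref{cor:instan} without proof, it is evidently the intended one: split the penalty $\sum_{t=1}^T \|\g_t\|\,\|\w-\Pi_{\cW_t}(\w)\|$ at the last round $\tau$ where the ball is too small, bound the active summands by $\|\w\|\sum_{t\le\tau}\|\g_t\| = \|\w\| L_\tau B_\tau \le L\|\w\| D_\tau^2 < L\|\w\|^3$, and read off $LD_T = L\sqrt{\max_{t\le T}B_t}$. You also correctly isolate the two points that make this work --- the monotonicity of $D_t=\sqrt{\max_{s\le t}B_s}$ (as opposed to the non-monotone $\sqrt{B_t}$), and the identity $\sum_{s\le\tau}\|\g_s\|=L_\tau B_\tau$ --- and the only blemish, the absolute factor (at most $4$) hidden by reading ``diameter'' as ``radius,'' is a looseness already present in the paper's own statement rather than a gap in your reasoning.
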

We see that by the more careful choice of $(D_t)$ in Corollary \ref{cor:instan}, one can replace the $L \sqrt{T}$ term in \eqref{eq:naive} by the smaller quantity $L \sqrt{\max_{t\in[T]} B_t}$; whether this can be improved further to $\sqrt{V_T}$, where $V_T = \sum_{t=1}^T \|\g_t\|^{2}$, was raised as an open question by \cite{cutkosky2019artificial}. We will answer this in the negative in Theorem \ref{thm:lower1}. We will also show in Theorem \ref{thm:lower2} below that, if one insists on a regret of order $\wtilde{O}(\sqrt{T})$, it is essentially not possible to improve on the penalty $L \|\w\|^3$ in \eqref{eq:neq}.

\subsection{Outlook}
The conclusion that should be drawn from Lemmas \ref{lem:reduc1} and \ref{lem:reduc2} is the following; if one seeks an algorithm \textsc{B} with a regret bound of the form $\wtilde O(\|\w\| L \sqrt{T})$ up to some lower-order terms in $T$, without knowledge of $L$ and regardless of whether $\cW$ is bounded or not, it suffices to find an algorithm \textsc{A} which guarantees the sought type of regret whenever it has access to a sequence of hints $(h_t)$ satisfying  (as in Lemmas \ref{lem:reduc1} and \ref{lem:reduc2}), $h_t \geq L_t$, for all $t\geq 1$. Thus, our first goal in the next section is to design a \emph{scale-free} algorithm $\textsc{A}$ which accesses such a sequence of hints and ensures that its linearized regret is bounded from above by:
\begin{align}
O\left(\|\w\|  \sqrt{V_T  \ln (\|\w\| V_T)}\right),  \ \ \ \text{where} \ \ V_T \coloneqq h_1^2+ \sum_{t=1}^T \|\g_t\|^2, \label{eq:freegrad0}
\end{align}
for all $\w  \in \reals^d, T\geq 0$, and $(\g_t)\subset\reals^d$. We show an analogous ``full-matrix'' upgrade of order $\sqrt{\w^\top \bm V \w \ln \del*{\w^\top \bm V \w \det \bm V}}$, with $\bm V = \sum_{t=1}^T \g_t \g_t^\top$. We note that if Algorithm \textsc{A} in Lemmas \ref{lem:reduc1} and \ref{lem:reduc2} is scale-free, then so is the corresponding Algorithm  \textsc{B}. 

If the desired set $\cW$ has bounded diameter $D>0$, then using the unconstrained-to-constrained reduction due to \cite{cutkosky2018}, it is straightforward to design a new algorithm based on $\textsc{A}$ with regret also bounded by \eqref{eq:freegrad0} up to an additive $L D$, for $\w\in \cW$ (this is useful for Lemma \ref{lem:reduc1}).

Finally, we also note that algorithms which can access hints $(h_t)$ such that $h_t \geq L_t$, for all $t\geq 1$,
are of independent interest; in fact, it is the same algorithm $\textsc{A}$ that we will use in Section \ref{sec:linear} as a scale-invariant algorithm for learning linear models.

\section{Scale-Free, Parameter-Free Algorithms for OCO}
\label{sec:mainalg}
In light of the conclusions of Section \ref{sec:prelim}, we will design new unconstrained scale-free algorithms which can access a sequence of hints $(h_t)$ (as in Lemma \ref{lem:reduc1}) and guarantee a regret bound of the form given in \eqref{eq:freegrad0}.
In this section, we will make the following assumption on the hints $(h_t)$:
\begin{assumption}
	\label{assum:assum2}
We assume that (i) $(h_t)$ is a non-decreasing sequence; (ii) $h_t \geq L_t$, for all $t\geq 1$; and (iii) if the sub-gradients $(\g_s)$ are multiplied by a factor $c>0$, then the hints $(h_t)$ are multiplied by the same factor $c$.
\end{assumption}
The third item of the assumption ensures that our algorithms are scale-free. We note that Assumption \ref{assum:assum2} is satisfied by the sequence of hints that Algorithm \textsc{B} constructs when invoking Algorithm \textsc{A} in Lemmas \ref{lem:reduc1} and \ref{lem:reduc2}. To avoid an uninteresting case distinction, we will also make the following assumption, which is without loss of generality, since the regret is zero while $\g_t = \vzero$.
\begin{assumption}
  \label{assum:assum1}
  We assume that $L_1 = \norm{\g_1} > 0$.
\end{assumption}

\subsection{\freegrad: An Adaptive Scale-Free Algorithm}\label{sec:commonVgrad}
In this subsection, we design a new algorithm based on a time-varying potential function, where the outputs of the algorithm are uniquely determined by the gradients of the potential function at its iterates---an approach used in the design of many existing algorithms \citep{CesaBianchiEtAl1997}.

Let $t\geq 1$, let $(\g_s)_{s\leq t}\subset\reals^d$ be a sequence of sub-gradients satisfying Assumption~\ref{assum:assum1}, and let $(h_t)$ be a sequence of hints satisfying Assumption \ref{assum:assum2}. Consider the following potential function:
	\begin{align}
	&\Phi_t \coloneqq S_t + \frac{h_1^2}{\sqrt{V_t}} \cdot   \exp\left( \frac{ \| \mathbf{G}_t \|^2}{2V_t + 2h_{t}  \|\mathbf{G}_t \| }  \right),  \quad t\geq 0, \label{eq:potential}   \\ 
	\text{where }\  \ \ \   &S_t \coloneqq \sum_{s=1}^t  \inner{\grad_s}{\what\w_s}, \quad \mathbf{G}_t \coloneqq \sum_{s=1}^t \grad_{s}, \quad V_t \coloneqq  h_1^2 + \sum_{s=1}^t \|\g_s\|^2 . \label{eq:statistics}
	\end{align}
        This potential function has appeared as a by-product in the analyses of previous algorithms such as the ones in \citep{cutkosky2018,cutkosky2019artificial}. The expression of $\Phi_t$ in \eqref{eq:potential} is interesting to us since it can be shown via the \emph{regret-reward duality} \citep{mcmahan2014} (as we do in the proof of Theorem \ref{thm:firstbound} below) that any algorithm which outputs vectors $(\what\w_t)$ such that $(\Phi_{t})$ is non-increasing for any sequence of sub-gradients $(\g_t)$, also guarantees a regret bound of the form \eqref{eq:freegrad0}. We will now design such an algorithm.

        \begin{definition}[\freegrad]
In round $t$, based on the sequence of past sub-gradients $(\g_s)_{s< t}$ and the available hint $h_{t} \ge L_{t}$, the {\freegrad} algorithm outputs the unconstrained iterate
	\begin{align}
	\what \w_{t} \coloneqq -\G_{t-1} \cdot    \frac{ (2V_{t-1} +h_{t} \|\G_{t-1}\|)\cdot h_1^2  }{2(V_{t-1}+h_{t} \|\G_{t-1} \|)^2 \ \sqrt{\eps V_{t-1}}} \cdot \exp\left(\frac{\|\G_{t-1}\|^2}{2 V_{t-1} + 2h_{t} \|\G_{t-1}\|} \right), \label{eq:predictunbounded}
	\end{align}  
        where $(\G_t)$ and $(V_t)$ are as in \eqref{eq:statistics}.
      \end{definition}

      The prediction \eqref{eq:predictunbounded} is forced by our design goal of decreasing potential $\Phi_t \le \Phi_{t-1}$. To see why, observe that at zero $\g_t=\bm{0}$ we have $\Phi_t = \Phi_{t-1}$. The weights $\what\w_{t}$ cancel the derivative $\nabla_{\g_{t}}\Phi_{t}$ at $\g_{t}=\bm{0}$, ensuring there is no direction of linear increase (which would present a violation for tiny $\g_t$).

Our main technical contribution in this subsection is to show that, in fact, with the choice of $(\what\w_{t})_{t\geq 1}$ as in \eqref{eq:predictunbounded}, the potential functions $(\Phi_t)$ are non-increasing for any sequence of sub-gradients $(\g_t)$:
	\begin{theorem}	
		\label{thm:unconstlearner}
		For $(\what\w_{t})$, and $(\Phi_t)$ as in \eqref{eq:predictunbounded}, and \eqref{eq:potential}, under Assumptions~\ref{assum:assum2} and~\ref{assum:assum1}, we have:
		\begin{align}
		\Phi_T \leq \dots \leq \Phi_0 =h_1, \quad \text{for all $T\geq 1$.}
		\end{align}
	\end{theorem}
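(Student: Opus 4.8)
The plan is to prove the stronger per‑round statement $\Phi_t\le\Phi_{t-1}$ for each $t\ge 1$; the chain then follows by induction together with $\Phi_0=h_1^2/\sqrt{V_0}=h_1$ (as $\G_0=\vzero$, $V_0=h_1^2$). Expanding \eqref{eq:potential}--\eqref{eq:statistics} with $S_t=S_{t-1}+\inner{\g_t}{\what\w_t}$, $\G_t=\G_{t-1}+\g_t$, $V_t=V_{t-1}+\|\g_t\|^2$, the $S$‑terms partially cancel, and since $(h_t)$ is non‑decreasing (Assumption~\ref{assum:assum2}(i)) while $h\mapsto\exp\!\big(\|\G_{t-1}\|^2/(2V_{t-1}+2h\|\G_{t-1}\|)\big)$ is non‑increasing, replacing $h_{t-1}$ by $h\coloneqq h_t$ inside $\Phi_{t-1}$ only enlarges it. Writing $G\coloneqq\G_{t-1}$, $V\coloneqq V_{t-1}>0$, $g\coloneqq\g_t$ (so $\|g\|\le L_t\le h$ by Assumption~\ref{assum:assum2}(ii)), and
\[
\psi(G,V)\ \coloneqq\ \frac{1}{\sqrt V}\exp\!\left(\frac{\|G\|^2}{2V+2h\|G\|}\right),
\]
it therefore suffices to prove $\inner{g}{\what\w_t}+h_1^2\,\psi(G+g,\,V+\|g\|^2)\le h_1^2\,\psi(G,V)$. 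A direct computation of $\nabla_G\psi$ identifies {\freegrad}'s update \eqref{eq:predictunbounded} as exactly $\what\w_t=-h_1^2\,\nabla_G\psi(G,V)$ (with hint $h_t$), so the displayed inequality is equivalent to the \emph{tangent bound}
\[
\psi(G+g,\,V+\|g\|^2)\ \le\ \psi(G,V)+\inner{g}{\nabla_G\psi(G,V)}\qquad\text{for all }G\in\reals^d,\ V>0,\ \|g\|\le h. \qquad (\star)
\]
Everything reduces to $(\star)$.

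To prove $(\star)$ I would use symmetry and scaling to cut down the dimension. Both sides depend on $(G,g)$ only through $\|G\|$, $\|g\|$ and $\inner{G}{g}$ (note $\nabla_G\psi(G,V)\parallel G$), so $(\star)$ lives in the plane $\Span\{G,g\}$, and it is invariant under $(G,V,h,g)\mapsto(cG,c^2V,ch,cg)$ (under which $\psi\mapsto c^{-1}\psi$), so one may set $h=1$. For a fixed unit vector $v$, put $\eta_v(t)\coloneqq h_1^2\psi(G+tv,\,V+t^2)$ and $\phi_v(t)\coloneqq\eta_v(0)+t\,\dot\eta_v(0)-\eta_v(t)$; by construction $\phi_v(0)=\dot\phi_v(0)=0$, and $(\star)$ says precisely that $\phi_v(t)\ge 0$ for all unit $v$ and all $t\in[-1,1]$. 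Using the explicit form of $\psi$ this is a one‑variable inequality in $t$ with parameters $\|G\|,V$ and $\inner{G}{v}$, which I expect to settle by showing $\dot\phi_v(t)$ has the sign of $t$ on $[-1,1]$ (equivalently, $\dot\eta_v$ crosses the level $\dot\eta_v(0)$ only at $t=0$, downward); a monotonicity computation reduces the generic case to $v$ colinear with $G$, collapsing the problem further. The degenerate case $\|G\|=0$ is handled directly: there $\nabla_G\psi(0,V)=\vzero$ and $(\star)$ becomes $\tfrac{w}{1+2w}\le\ln(1+w)$ with $w=\|g\|^2/V\ge 0$ (after using $\|g\|\le h$), which is elementary.

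The main obstacle is this one‑variable inequality, because $(\star)$ is essentially tight --- as $\|G\|\to 0$ with $g\to-G$ it holds with equality to second order --- so the argument must quantify how the $2h\|G\|$ term in the exponent, together with $\|g\|\le h$, provides exactly the needed third‑order slack, uniformly in $\|G\|$ and $V$; in particular it is \emph{not} enough that $\eta_v$ be concave, since $\eta_v$ need not be concave once $\|G\|$ is large relative to $V$. Finding the substitution (e.g.\ in terms of $w=\|g\|^2/V$ and $\|G\|^2/V$, or of the ``bet direction'' $-\nabla_G\psi(G,V)/\psi(G,V)$, whose norm one checks to be at most $1/(2h)$) that makes the single‑sign‑change statement transparent is where the real work lies. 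That norm bound is also the conceptual point: $-\nabla_G\psi(G,V)/\psi(G,V)$ is the betting fraction of a coin‑bettor with wealth $h_1^2\psi$, and since $\|g\|\le h$ the effective bet $\inner{g}{\cdot}$ against $g$ has magnitude at most $1/2$, keeping the update in the regime governed by a second‑order inequality of the type $\ln(1+z)\ge z-z^2$ --- which is exactly what forces the prediction \eqref{eq:predictunbounded} and makes $(\star)$ true.
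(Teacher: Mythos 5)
Your setup is sound and in fact mirrors the paper's architecture almost exactly: the per-round decrease $\Phi_t\le\Phi_{t-1}$, the use of monotonicity in the hint to replace $h_{t-1}$ by $h_t$ in the old potential, the identification of \eqref{eq:predictunbounded} as $-h_1^2\nabla_{\G}\psi(\G_{t-1},V_{t-1})$, the reduction to $\g_t$ aligned with $\G_{t-1}$ (the paper does this via Lemma~\ref{lem:firstterm} and a hyperplane argument about the maximizer over the ball of radius $h$), the scaling that sets $h=1$, and the correct dispatch of the degenerate case $\G_{t-1}=\vzero$. Your observation that $\|\nabla_{\G}\psi\|/\psi\le 1/(2h)$ is also correct and is indeed the conceptual reason the scheme works.

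The genuine gap is that you never prove the resulting one-variable inequality, which is the entire technical content of the theorem. After alignment and scaling, your tangent bound $(\star)$ is precisely the statement $\Theta(G,V,g)\le 0$ of the paper's Lemma~\ref{lem:secondkeylem}, and you explicitly defer it ("I expect to settle by showing $\dot\phi_v(t)$ has the sign of $t$ \dots\ where the real work lies"). As you yourself note, the inequality is tight to second order, so no soft convexity or concavity argument closes it. The paper's proof is correspondingly heavy: it passes to the logarithm $\Gamma$ of the ratio, proves that $\Gamma$ is \emph{increasing in $V$} (a different slicing from your proposal of varying $t=\|g\|$ at fixed $V$) with $\lim_{V\to\infty}\Gamma=0$, and the monotonicity itself requires writing $\partial\Gamma/\partial V$ as a ratio of polynomials in $V$ and certifying, by computer algebra, that all six coefficient polynomials $\alpha_0,\dots,\alpha_5$ in $(g,G)$ are nonnegative for $g\in[-1,1]$, $G\ge 0$. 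Nothing in your sketch establishes the single-sign-change property of $\dot\phi_v$, and there is no reason to expect your slicing to be less laborious; until $(\star)$ is actually verified, the proof is incomplete.
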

The proof of the theorem is postponed to Appendix \ref{sec:potentialdiff}. Theorem \ref{thm:unconstlearner} and the regret-reward duality \citep{mcmahan2014} yield a regret bound for {\freegrad}. In fact, if $\Phi_T \leq \Phi_0$, then by the definition of $\Phi_T$ in \eqref{eq:potential}, we have \begin{align}\hspace{-0.4cm}
\sum_{t=1}^T \inner{\g_t}{\what\w_t}  \leq  \Phi_0 -\Psi_T(\G_T),  \ \ \text{where} \ \  \Psi_T(\G)\coloneqq  \frac{h_1^2}{\sqrt{\eps V_T}}    \exp\left( \frac{ \| \mathbf{G} \|^2}{2V_T + 2h_{T}  \|\mathbf{G} \| }  \right),\ \  \G \in \reals^d. \label{eq:dual}  \end{align}Now by  Fenchel's inequality, we have $-\Psi_T(\G_T) \leq  \inner{\w}{\G_T} +\Psi^\star_T(-\w)$, for all $\w\in \reals^d$, where  $\Psi^{\star}_T(\w)\coloneqq \sup_{\z \in \reals^d}\{ \inner{\w}{\bm{z}} -\Psi_T(\z)\}$, $\w\in \reals^d$, is the Fenchel dual of $\Psi_T$ \citep{Hiriart-Urruty}. Combining this with \eqref{eq:dual}, we obtain:
\begin{align}
\sum_{t=1}^T \inner{\g_t}{\what\w_t} \leq  \inf_{\w\in \reals^d}\left\{ \sum_{t=1}^T \inner{\g_t}{\w} + \Psi^\star_T(-\w) +\Phi_0 \right\}. \label{eq:regret}
\end{align}
Rearranging \eqref{eq:regret} for a given $\w\in \reals^d$ leads to a regret bound of $\Psi^\star_T(-\w)+\Phi_0$. Further bounding this quantity using existing results due to \cite{cutkosky2018,cutkosky2019artificial,mcmahan2014}, leads to the following regret bound (the proof is in Appendix \ref{sec:epsilon}):
\begin{theorem}
  \label{thm:firstbound}
  Under Assumptions~\ref{assum:assum2} and~\ref{assum:assum1},
for $(\what\w_{t})$ as in \eqref{eq:predictunbounded}, we have, with $\ln_+ (\cdot) \coloneqq 0\vee \ln (\cdot)$,
	\begin{align*}
	\sum_{t=1}^T \inner{\g_t}{\what \w_t-\w}  \leq  \left[ 2 \|\w\| \sqrt{V_T\ln_+ \left(\frac{ 2\|\w\| V_T }{h_1^2} \right)}   \right]  \vee \left[ 4 h_T \|\w\|  \ln \left(  \frac{4 h_T \|\w\| \sqrt{V_T}  }{ h_1^2} \right)   \right]+h_1 , 
	\end{align*}
	for all $\w \in \cW = \reals^d, T\geq 1$.
\end{theorem}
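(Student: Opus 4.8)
The plan is to feed Theorem~\ref{thm:unconstlearner} into the regret--reward duality argument already sketched in \eqref{eq:dual}--\eqref{eq:regret}, and then do the real work: bounding the Fenchel conjugate $\Psi_T^\star$. Concretely, Theorem~\ref{thm:unconstlearner} gives $\Phi_T\le\Phi_0=h_1$, which by the definition \eqref{eq:potential} of $\Phi_T$ is exactly $\sum_{t=1}^T\inner{\g_t}{\what\w_t}\le h_1-\Psi_T(\G_T)$. Applying the Fenchel--Young inequality to $\Psi_T$ at $\G_T$ against $-\w$, and using $\inner{\w}{\G_T}=\sum_{t=1}^T\inner{\w}{\g_t}$, turns this into $\sum_{t=1}^T\inner{\g_t}{\what\w_t-\w}\le\Psi_T^\star(-\w)+h_1$ (this is \eqref{eq:regret} rearranged). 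So the entire content of the theorem is the estimate $\Psi_T^\star(-\w)\le[\text{first bracket}]\vee[\text{second bracket}]$.

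To get that estimate I would first reduce to one dimension. Since $\Psi_T(\G)$ depends on $\G$ only through $\|\G\|$, writing $\psi_T(r)\coloneqq\tfrac{h_1^2}{\sqrt{V_T}}\exp\!\big(\tfrac{r^2}{2V_T+2h_Tr}\big)$ we have $\Psi_T^\star(-\w)=\sup_{r\ge0}\{r\|\w\|-\psi_T(r)\}$, because aligning $\G$ antiparallel to $\w$ at fixed norm $r$ contributes $r\|\w\|$. Next I would weaken the potential: from $V_T+h_Tr\le 2\max(V_T,h_Tr)$ we get $\tfrac{r^2}{2V_T+2h_Tr}\ge\min\!\big(\tfrac{r^2}{4V_T},\tfrac{r}{4h_T}\big)$, hence $\psi_T(r)\ge\tfrac{h_1^2}{\sqrt{V_T}}\exp\!\big(\min(\tfrac{r^2}{4V_T},\tfrac{r}{4h_T})\big)$. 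The two terms in the $\min$ cross at $r=V_T/h_T$; splitting the supremum at that point and then dropping the now-redundant one-sided constraints gives
\begin{align*}
  \Psi_T^\star(-\w)\;\le\;\max\Big\{\ \sup_{r\ge0}\big\{r\|\w\|-\tfrac{h_1^2}{\sqrt{V_T}}e^{r^2/(4V_T)}\big\},\ \ \sup_{r\ge0}\big\{r\|\w\|-\tfrac{h_1^2}{\sqrt{V_T}}e^{r/(4h_T)}\big\}\ \Big\}.
\end{align*}

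It then remains to bound the two scalar conjugates. The exponential one is explicit: maximizing $r\mapsto r\theta-\gamma e^{r/\delta}$ over $r\ge0$ gives at most $\delta\theta\ln(\delta\theta/\gamma)$ (the maximizer is $\delta\ln(\delta\theta/\gamma)$ when $\delta\theta\ge\gamma$, and the value is negative otherwise), which with $\delta=4h_T,\ \gamma=h_1^2/\sqrt{V_T},\ \theta=\|\w\|$ is the second bracket. For the Gaussian--exponential one I would use the standard estimate $\sup_{r\ge0}\{r\theta-\gamma e^{r^2/(2\beta)}\}\le\theta\sqrt{2\beta\,\ln_+(\theta\sqrt{2\beta}/\gamma)}$: at the (unique) maximizer $r^\star$ one has $\gamma e^{(r^\star)^2/(2\beta)}=\beta\theta/r^\star$, so the value equals $\theta(r^\star-\beta/r^\star)\le\theta r^\star$; the same identity shows $e^{(r^\star)^2/(2\beta)}\le\theta\sqrt{2\beta}/\gamma$ and $(r^\star)^2/(2\beta)\ge\tfrac12$ whenever $r^\star\ge\sqrt\beta$, so then $r^\star\le\sqrt{2\beta\ln_+(\theta\sqrt{2\beta}/\gamma)}$, while if $r^\star<\sqrt\beta$ the value $\theta(r^\star-\beta/r^\star)$ is already negative and the (nonnegative) bound holds trivially. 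Taking $\beta=2V_T,\ \gamma=h_1^2/\sqrt{V_T},\ \theta=\|\w\|$ gives the first bracket, and adding the leftover $h_1$ finishes the proof. These two scalar conjugate bounds are exactly the kind of estimate used by \cite{mcmahan2014,cutkosky2018,cutkosky2019artificial}, so I would either cite them or include the short computations.

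The only genuinely delicate point is the Gaussian--exponential conjugate bound in the last step: pinning down the stated constant and, especially, producing $\ln_+$ (not $\ln$) requires the case split on whether the maximizer exceeds $\sqrt\beta$ together with a sanity check in the small-$\|\w\|$ regime, where the conjugate is negative. Everything else is bookkeeping: the potential telescoping is Theorem~\ref{thm:unconstlearner}, the passage to the conjugate is Fenchel--Young, and the exponent relaxation plus the domain split at $r=V_T/h_T$ are elementary.
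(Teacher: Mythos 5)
Your proposal is correct, and it follows the paper's overall skeleton (telescoping the potential via Theorem~\ref{thm:unconstlearner}, then Fenchel--Young, then bounding $\Psi_T^\star(-\w)$, exactly as sketched in \eqref{eq:dual}--\eqref{eq:regret}); the paper's actual proof is Theorem~\ref{thm:scaled} with $\epsilon=1$. Where you genuinely diverge is in the key step of bounding the one-dimensional conjugate $\psi_T^\star$. The paper reduces to one dimension via \cite[Lemma~3]{mcmahan2014}, then \emph{cites} \cite[Lemmas~18,~19]{cutkosky2018} and \cite[Lemma~18]{orabona2016coin} to express the conjugate through the Lambert function $W$, and finally invokes $W(x)^{1/2}-W(x)^{-1/2}\le\sqrt{\ln_+ x}$ (its Lemma~\ref{eq:Xfn}). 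You instead give a self-contained elementary derivation: relaxing the denominator via $V_T+h_T r\le 2\max(V_T,h_T r)$ splits the conjugate into a Gaussian-type and a pure-exponential-type scalar conjugate, each of which you bound directly, recovering exactly the two brackets with the correct constants (including the $\ln_+$ in the first bracket, via your case split on whether the maximizer exceeds $\sqrt{\beta}$, versus the plain $\ln$ in the second, which is harmless in the maximum since the first bracket is nonnegative). Your route buys transparency and independence from the cited lemmas --- it essentially reconstructs what those lemmas prove --- at the cost of a page of calculus; the paper's route is shorter on the page but opaque without chasing three external references. Both are valid; if you write yours up, include the two scalar conjugate computations in full, since they carry all the constants.
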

\paragraph{Range-Ratio Problem.} 
While the outputs $(\what\w_t)$ in \eqref{eq:predictunbounded} of {\freegrad} are scale-free for the sequence of hints $(h_t)$ satisfying Assumption \ref{assum:assum2}, there remains one serious issue; the fractions $V_T/h_1^2$ and $h_T/h_1$ inside the log-terms in the regret bound of Theorem \ref{thm:firstbound} could in principle be arbitrarily large if $h_1$ is small enough relative to $h_T$. Such a problematic ratio has appeared in the regret bounds of many previous algorithms which attempt to adapt to the Lipschitz constant $L$ \citep{ross2013normalized,Wintenberger2017,Kotlowski17,mhammedi19,KempkaKW19}.

When the output set $\cW$ is bounded with diameter $D>0$, this ratio can be dispensed of using a recently proposed restart trick due to \cite{mhammedi19}, which restarts the algorithm whenever $L_t/L_1> \sum_{s=1}^t \|\g_s\|/L_s$. The price to pay for this is merely an additive $O(LD)$ in the regret bound. However, this trick does not directly apply to our setting since in our case $\cW$ may be unbounded. Fortunately, we are able to extend the analysis of the restart trick to the unbounded setting where a sequence of hints $(h_t)$ satisfying Assumption \ref{assum:assum2} is available; the cost we incur in the regret bound is an additive lower-order $\wtilde{O}(\|\w\| L)$ term. Algorithm \ref{alg:wrapper} displays our restart ``wrapper'', \freerange, which uses the outputs of {\freegrad} to guarantee the following regret bound (the proof is in Appendix \ref{sec:proofs}):
\begin{theorem}
	\label{thm:freegrad1}
Let $(\what\w_t)$ be the outputs of \freerange{} (Algorithm \ref{alg:wrapper}). Then, 
\begin{gather*}
\sum_{t=1}^T \inner{\g_t}{\what \w_t-\w}  \leq 2 \|\w\| \sqrt{2V_T\ln_+ \left(\|\w\| b_T \right)}     +  h_T\cdot (16 \|\w\| \ln_+ (2 \|\w\| b_T)+2\|\w\| + 3),
\end{gather*}
for all $\w \in \reals^d, T\geq 1$, and $(\g_t)\subset \reals^d$, where $b_T\coloneqq  2\sum_{t=1}^T\del[\big]{\sum_{s=1}^{t-1} \frac{\|\g_s\|}{h_s}+2}^2\leq (T+1)^3$.
	\end{theorem}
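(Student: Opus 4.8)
The plan is to view \freerange{} as running \freegrad{} in \emph{epochs}, restarting it (i.e.\ discarding the sufficient statistics $\G_t,V_t,S_t$) whenever a monitored range ratio gets too large, and then to glue the per-epoch guarantees of Theorem~\ref{thm:firstbound} together. Let $1=\tau_1<\tau_2<\cdots<\tau_m\le T$ be the restart rounds of Algorithm~\ref{alg:wrapper}, set $\tau_{m+1}:=T+1$, and let $E_i:=\{\tau_i,\dots,\tau_{i+1}-1\}$. On $E_i$, \freerange{} plays exactly \freegrad{} fed the gradients $(\g_t)_{t\in E_i}$ with the hints $(h_t)_{t\in E_i}$, which still satisfy Assumption~\ref{assum:assum2} relative to the epoch-local Lipschitz constants (restricting a non-decreasing sequence keeps it non-decreasing, $h_t\ge L_t$ dominates the epoch maximum, and $h_{\tau_i}\ge h_1>0$). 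Hence Theorem~\ref{thm:firstbound} applies verbatim inside each epoch with $h_1$ replaced by $h_{\tau_i}$ and $V_T$ by $V^{(i)}:=h_{\tau_i}^2+\sum_{t\in E_i}\|\g_t\|^2$, giving, with $h^{(i)}:=h_{\tau_{i+1}-1}$,
\begin{align*}
\sum_{t\in E_i}\inner{\g_t}{\what\w_t-\w}\le\Bigl[\,2\|\w\|\sqrt{V^{(i)}\,\ln_+\bigl(\tfrac{2\|\w\|V^{(i)}}{h_{\tau_i}^2}\bigr)}\,\Bigr]\vee\Bigl[\,4h^{(i)}\|\w\|\ln\bigl(\tfrac{4h^{(i)}\|\w\|\sqrt{V^{(i)}}}{h_{\tau_i}^2}\bigr)\,\Bigr]+h_{\tau_i}.
\end{align*}
Summing over $i\in[m]$ reduces the theorem to controlling this sum of per-epoch bounds.

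\textbf{Step 1: taming the logarithms.} The restart rule of Algorithm~\ref{alg:wrapper} (the hint-based analogue of the rule of \cite{mhammedi19}) is designed so that, as long as epoch $i$ is alive, $h_t/h_{\tau_i}$ cannot exceed $\sum_{s=\tau_i}^{t}\|\g_s\|/h_s+2$ --- the failure of this inequality is precisely what triggers a restart. Since $\|\g_s\|\le h_s$, this bounds $h^{(i)}/h_{\tau_i}$ and $\max_{t\in E_i}\|\g_t\|/h_{\tau_i}$, whence $V^{(i)}/h_{\tau_i}^2\le 1+\sum_{t\in E_i}\bigl(\sum_{s=\tau_i}^{t}\|\g_s\|/h_s+2\bigr)^2\le b_T$, with $b_T$ as in the statement; and $b_T\le(T+1)^3$ because each inner sum is at most $T$. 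Substituting these crude bounds, every logarithm in the per-epoch displays becomes a logarithm of $\|\w\|b_T$ up to absolute constants, matching the claimed form.

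\textbf{Step 2: summing over epochs.} What remains is to bound $\sum_i\sqrt{V^{(i)}}$, $\sum_i h^{(i)}$ and $\sum_i h_{\tau_i}$. A restart at $\tau_{i+1}$ forces (via the $+2$ slack above) $h_{\tau_{i+1}}\ge h_{\tau_{i+1}-1}>2h_{\tau_i}$, so the scales $h_{\tau_i}$ grow at least geometrically across epochs; hence $\sum_i h_{\tau_i}$ and $\sum_i h^{(i)}$ are both $O(h_T)$, which, combined with the second branch of each per-epoch bound and the $+h_{\tau_i}$ terms, produces the additive $h_T\bigl(16\|\w\|\ln_+(2\|\w\|b_T)+2\|\w\|+3\bigr)$ term. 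For the leading term, split the epochs according to which branch of the per-epoch bound is active; on the epochs where the first branch dominates, the geometric growth of the $h_{\tau_i}$ (the initialization parts of the $V^{(i)}$) together with $\sum_i\sum_{t\in E_i}\|\g_t\|^2\le V_T$ forces the $V^{(i)}$ themselves to grow geometrically, so $\sum_i\sqrt{V^{(i)}}$ is dominated by its last term and is $O(\sqrt{V_T})$; tracking constants yields $2\|\w\|\sqrt{2V_T\ln_+(\|\w\|b_T)}$.

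\textbf{Expected main obstacle.} The delicate point is Step 2: adding up the per-epoch $\sqrt{V^{(i)}}$ terms without picking up a $\sqrt{m}$ factor in the number $m$ of epochs. This requires the restart condition to be calibrated so that each epoch accumulates at least a constant fraction of the earlier epochs' variance before restarting, and the argument must remain valid despite the possible looseness of the hints $h_t$ relative to the true norms $\|\g_t\|$ --- this is exactly where the specific restart rule of Algorithm~\ref{alg:wrapper} and the $+2$ slack (visible in $b_T$) are used; everything else is bookkeeping layered on top of Theorem~\ref{thm:firstbound}.
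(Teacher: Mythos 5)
Your setup (epoch decomposition, per-epoch invocation of Theorem~\ref{thm:firstbound}, taming the logarithms via $V^{(i)}/h_{\tau_i}^2\le b_T$, and geometric growth of the restart scales $h_{\tau_{i+1}}>2h_{\tau_i}$) matches the paper. But Step~2 has a genuine gap exactly at the point you flag as delicate. Your claim that the restart rule forces the $V^{(i)}$ to grow geometrically, so that $\sum_i\sqrt{V^{(i)}}=O(\sqrt{V_T})$, is false. The restart condition constrains the \emph{hints}, not the per-epoch gradient energy $W_i:=\sum_{t\in E_i}\|\g_t\|^2$: an epoch can be arbitrarily long (many gradients at a fixed scale never trigger a restart, since they only increase the monitored sum), so $V^{(i)}=h_{\tau_i}^2+W_i$ can exceed $V^{(i+1)}$ by an unbounded factor (e.g.\ take $\Theta(S)$ gradients at scale $h_{\tau_i}S$ in epoch $i$, giving $V^{(i)}\approx S^3h_{\tau_i}^2$, followed by a restart with $h_{\tau_{i+1}}\approx S h_{\tau_i}$ and only tiny gradients afterwards, giving $V^{(i+1)}\approx S^2 h_{\tau_i}^2$). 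Worse, one can arrange $k$ epochs each carrying equal energy $W$, in which case $\sum_i\sqrt{V^{(i)}}\ge k\sqrt{W}=\sqrt{k\,V_T^{\mathrm{grad}}}$, so the $\sqrt{k}$ factor you are trying to avoid is genuinely present if you sum the leading terms of \emph{all} epochs.

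The paper's resolution is different and is the one key idea missing from your proposal: for every epoch $i\le k-2$ one uses the restart condition at $t=\tau_{i+2}$ to show $V^{(i)}\le h_{\tau_i}^2+h_{\tau_{i+1}}h_{\tau_{i+2}}\le 2h_{\tau_{i+2}}^2$ (via $\sum_{t\in E_i}\|\g_t\|^2\le h_{\tau_{i+1}}^2\sum_{s\le\tilde\tau_{i+2}}\|\g_s\|/h_s< h_{\tau_{i+1}}h_{\tau_{i+2}}$). Hence the $\sqrt{V^{(i)}}$ leading terms of all but the \emph{last two} epochs are demoted to hint-scale quantities $\sqrt{2}\,h_{\tau_{i+2}}$, which sum geometrically to $O(h_T)$ and are absorbed into the additive $h_T\ln_+$ term; only the last two epochs retain the $\sqrt{V^{(i)}}$ form, and $\sqrt{V^{(k-1)}}+\sqrt{V^{(k)}}\le\sqrt{2(V^{(k-1)}+V^{(k)})}$ yields the $2\|\w\|\sqrt{2V_T\ln_+(\|\w\|b_T)}$ leading term. (Note this argument cannot be run on the last two epochs themselves, since it needs a \emph{subsequent} restart.) A minor additional slip: the algorithm's restart test uses the global sum $\sum_{s=1}^{t-1}\|\g_s\|/h_s$, not the epoch-local sum $\sum_{s=\tau_i}^{t}$; this does not affect your Step~1 materially but should be stated correctly.
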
 

\begin{algorithm}[btp]
	\caption{\freerange: A Restart Wrapper for the Range-Ratio Problem (under Assumption~\ref{assum:assum1}).}
	\label{alg:wrapper}
	\begin{algorithmic}[1]
		\REQUIRE Hints $(h_t)$ satisfying Assumption \ref{assum:assum2}.
		\STATE Set $\tau=1$; \label{line:1}
		\FOR{$t=1,2,\dots$}
		\STATE Observe hint $h_{t}$;
		\IF{${h_{t}}/{h_{\tau}} >    \sum_{s=1}^{t-1} {\|\g_s\|}/{h_s} +2$}
		\STATE Set $\tau=t$;
		\ENDIF  \label{line:6}
		\STATE  Output $\what\w_{t}$ as in \eqref{eq:predictunbounded} with $(h_1,V_{t-1},\G_{t-1})$ replaced by $(h_\tau$,\ $h_\tau^2 +\sum_{s=\tau}^{{t-1}}\|\g_s\|^2$, \ $\sum_{s=\tau}^{{t-1}}\g_s$); \label{line:7}
		\ENDFOR
	\end{algorithmic}
\end{algorithm}

\noindent
We next introduce our second algorithm, in which the variance is only measured in the comparator direction; the algorithm can be viewed as a ``full-matrix'' version of \freegrad{}.
\subsection{\matfreegrad: Adapting to Directional Variance}\label{sec:multidim}

Reflecting on the previous subsection, we see that the potential function that we ideally would like to use is $S_t+ h_1 \exp \del*{\frac{1}{2} \G_t^\top \bm V_t^{-1} \G_t - \frac{1}{2} \ln \det \bm V_t}$, $t\geq 1$, where $\bm V_t =\sum_{s=1}^t \g_s \g_s^\top$. However, as we saw, this is a little too greedy even in one dimension, and we need to introduce some slack to make the potential controllable. In the previous subsection we did this by increasing the scalar denominator from $V$ to $V + \norm{\G}$, which acts as a barrier function restricting the norm of $\what\w_t$. In this section, we will instead employ a hard norm constraint. We will further need to include a fudge factor $\gamma > 1$ multiplying $\bm V$ to turn the above shape into a bona fide potential. To describe its effect, we define
  \begin{align}
    \rho(\gamma)
    \coloneqq 
    \frac{1}{
      2 \gamma
    }\del[\Big]{
      \sqrt{(\gamma +1)^2-4 e^{\frac{1}{2 \gamma } - \frac{1}{2}} \gamma ^{3/2}}
      + \gamma
      - 1
    }, \quad \text{for $\gamma \ge 1$.} \label{def.xi}
    \end{align}
The increasing function $\rho$ satisfies $\lim_{\gamma \to 1} \rho(\gamma) = 0$, $\lim_{\gamma \to \infty} \rho(\gamma) = 1$, and $\rho(2) = 0.358649$.

\bigskip\noindent
The potential function of this section is parameterized by a \emph{prod factor} $\gamma > 1$ (which we will set to some universal constant). We define
\begin{align}
  \Psi(\G, \bm V, h)
  \coloneqq
   \frac{h_1 \exp \del*{
      \inf_{\lambda \ge 0} \set*{
        \frac{1}{2} \G^\top \del*{\gamma h_1^2 \bm I + \gamma \bm V + \lambda \bm I}^{-1} \G
        + \frac{\lambda \rho(\gamma)^2}{2 h^2}
      }
    }
  }{
    \sqrt{\det\del*{\bm I + \frac{1}{h_1^2} \bm V}}
  }
  ,
  \label{eq:matrixpotential}
\end{align}
where $\G \in \reals^d$, $\bV \in \reals^{d\times d}$, and $h>0$. We introduce the following algorithm to control $\Psi$.
\begin{definition}[\matfreegrad]
In round $t$, given past sub-gradients $(\g_s)_{s< t}$ and a hint $h_{t}\geq L_T$, the {\matfreegrad} prediction is obtained from the gradient of $\Psi$ in the first argument by
\begin{equation}\label{eq:FTLR.multid}
  \what\w_{t}
  ~\df~
  - \nabla^{(1,0,0)} \Psi(\G_{t-1}, \bm V_{t-1}, h_{t}),
\end{equation}
where $\G_{t-1} = \sum_{s=1}^{t-1} \g_s$ and $\bm V_{t-1} \coloneqq \sum_{s=1}^{t-1} \g_s \g_s^\top$.
\end{definition}
We can compute $\what\w_{t}$ in $O(d^3)$ time per round by first computing an eigendecomposition of $\bm V_{t-1}$, followed by a one-dimensional binary search for the $\lambda_\star$ which achieves the $\inf$ in \eqref{eq:matrixpotential} with $(\G,\bV, h) = (\G_{t-1},\bV_{t-1},h_t)$. Then the output is given by \begin{align}\what\w_t = - \Psi(\G_{t-1}, \bm V_{t-1}, h_t) \cdot
  \del*{\gamma h_1^2 \bm I + \gamma \bm V_{t-1} + \lambda_\star \bm I}^{-1} \G_{t-1}.\end{align}
Our heavy-lifting step in the analysis is the following, which we prove in Appendix~\ref{sec:pf.multidim}:
\begin{lemma}\label{lemma:multidim.control}
  For any vector $\g_t\in \reals^d$ and $h_t>0$ satisfying $\norm{\g_t} \leq h_t$, the vector $\what\w_t$ in \eqref{eq:FTLR.multid} ensures
  \[
    \g_t^\top \what\w_t
    ~\le~
    \Psi(\G_{t-1}, \bm V_{t-1}, h_t) -
    \Psi(\G_t, \bm V_t, h_t).
  \]
\end{lemma}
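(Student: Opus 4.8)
The plan is to reduce the claimed ``one-step potential drop'' inequality to a scalar-exponential inequality that can be handled by a $\prod$-style argument, in the spirit of the Gaussian exp-concavity step in the analysis of \metagrad{} \citep{Erven2016}. Write $\Psi_{t-1} \coloneqq \Psi(\G_{t-1},\bm V_{t-1},h_t)$ and $\Psi_t \coloneqq \Psi(\G_t,\bm V_t,h_t)$ (note both use the \emph{same} hint $h_t$, since the hint is held fixed across the step). Unpacking the definition \eqref{eq:matrixpotential}, the denominator changes from $\sqrt{\det(\bm I + h_1^{-2}\bm V_{t-1})}$ to $\sqrt{\det(\bm I + h_1^{-2}\bm V_{t-1} + h_1^{-2}\g_t\g_t^\top)}$, and by the matrix determinant lemma the ratio of denominators is $\sqrt{1 + h_1^{-2}\g_t^\top(\bm I + h_1^{-2}\bm V_{t-1})^{-1}\g_t}$, a factor $\ge 1$. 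So after substituting $\what\w_t = -\nabla^{(1,0,0)}\Psi_{t-1}$, it suffices to show that the numerator-exponent, viewed as a function of $\g_t$ and evaluated via its gradient at $\G_{t-1}$, is ``exp-concave enough'' to absorb both the determinant growth and the linear term $\g_t^\top\what\w_t$.

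Concretely, I would first compute $\nabla^{(1,0,0)}\Psi_{t-1}$ explicitly. Letting $\lambda_\star$ attain the $\inf$ in \eqref{eq:matrixpotential} at $(\G_{t-1},\bm V_{t-1},h_t)$ and $\bm A_\star \coloneqq \gamma h_1^2\bm I + \gamma\bm V_{t-1} + \lambda_\star\bm I$, the envelope theorem gives $\nabla^{(1,0,0)}\Psi_{t-1} = \Psi_{t-1}\,\bm A_\star^{-1}\G_{t-1}$, which matches the displayed closed form for $\what\w_t$. Hence $\g_t^\top\what\w_t = -\Psi_{t-1}\,\g_t^\top\bm A_\star^{-1}\G_{t-1}$, and the inequality to prove becomes, after dividing by $\Psi_{t-1}>0$,
\begin{align}
1 - \g_t^\top \bm A_\star^{-1}\G_{t-1}
~\le~
\frac{\exp\del[\big]{\inf_{\lambda\ge0}\set[\big]{\tfrac12\G_t^\top(\gamma h_1^2\bm I+\gamma\bm V_t+\lambda\bm I)^{-1}\G_t + \tfrac{\lambda\rho(\gamma)^2}{2h_t^2}}}}{\exp\del[\big]{\inf_{\lambda\ge0}\set[\big]{\tfrac12\G_{t-1}^\top\bm A_\star^{-1}\G_{t-1}+\tfrac{\lambda_\star\rho(\gamma)^2}{2h_t^2}}}\cdot\sqrt{1+h_1^{-2}\g_t^\top(\bm I+h_1^{-2}\bm V_{t-1})^{-1}\g_t}}.
\end{align}
The natural way to handle the right-hand $\inf$ over $\lambda$ is to plug in a \emph{suboptimal} but convenient choice $\lambda = \lambda_\star$ (which only increases the numerator, so the bound is only made harder but stays true if we succeed). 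Then the numerator exponent contains $\tfrac12\G_t^\top(\bm A_\star + \gamma\g_t\g_t^\top)^{-1}\G_t$; applying Sherman--Morrison to $(\bm A_\star+\gamma\g_t\g_t^\top)^{-1}$ and using $\G_t = \G_{t-1}+\g_t$ reduces everything to scalar quantities $a\coloneqq\g_t^\top\bm A_\star^{-1}\g_t$, $b\coloneqq\g_t^\top\bm A_\star^{-1}\G_{t-1}$, and $c\coloneqq\G_{t-1}^\top\bm A_\star^{-1}\G_{t-1}$, together with the scalar $u\coloneqq h_1^{-2}\g_t^\top(\bm I+h_1^{-2}\bm V_{t-1})^{-1}\g_t$. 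The determinant-lemma factor also becomes $\sqrt{1+u}$, and the two can be related because $\bm A_\star \succeq \gamma h_1^2(\bm I + h_1^{-2}\bm V_{t-1})$ gives $a \le u/\gamma$. After this reduction the claim is a concrete inequality in the real variables $a,b,c,u$ subject to $a\le u/\gamma$, $\norm{\g_t}\le h_t$ (which controls $a$ and $u$ in terms of $\lambda_\star$ via $\bm A_\star\succeq\lambda_\star\bm I$), and the defining identity of $\rho(\gamma)$ from \eqref{def.xi}.

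I expect the main obstacle to be precisely this last scalar optimization: showing that $\rho(\gamma)$ as defined in \eqref{def.xi} is exactly the threshold that makes the inequality
$(1-b)\sqrt{1+u}\,e^{g(a,c,u;\gamma)} \le e^{h(b,c,a;\gamma)}$
hold for all admissible $a,b,c,u$ — i.e., that $\rho$ is the largest constant for which a $\log(1+x)\le x$ / $e^x(1-x)\le 1$ type bound closes after the Sherman--Morrison bookkeeping. This is where the specific algebraic form of $\rho(\gamma)$ (a root of a quadratic involving $e^{\frac{1}{2\gamma}-\frac12}$) must drop out: one sets the worst case (likely $\norm{\g_t}=h_t$ and $\g_t$ anti-aligned with $\G_{t-1}$ in the $\bm A_\star$-geometry, so $\lambda_\star$ is on the boundary or the derivative conditions for $\lambda_\star$ are tight) and solves for the constant that makes the two exponentials balance. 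I would organize this as: (i) envelope-theorem gradient computation; (ii) determinant-lemma simplification of the denominator ratio; (iii) suboptimal $\lambda=\lambda_\star$ substitution plus Sherman--Morrison to get to scalars; (iv) a self-contained scalar lemma, proved by calculus, that the resulting inequality holds iff the multiplier is $\le\rho(\gamma)$, closing the argument. The remaining steps (i)--(iii) are routine matrix calculus; step (iv) is the crux and is where I would spend the bulk of the Appendix~\ref{sec:pf.multidim} argument.
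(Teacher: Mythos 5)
Your plan follows the paper's proof almost step for step: the envelope-theorem computation of the gradient, dividing through by $\Psi(\G_{t-1},\bm V_{t-1},h_t)>0$, substituting the suboptimal $\lambda=\lambda_\star$ into the numerator's infimum, the matrix determinant lemma for the ratio of denominators, and Sherman--Morrison together with $\G_t=\G_{t-1}+\g_t$ to reduce to scalars. Two bookkeeping remarks: in the paper the terms $\G_{t-1}^\top\bm\Sigma\G_{t-1}$ and $\lambda_\star\rho(\gamma)^2/(2h_t^2)$ cancel between numerator and denominator, and the determinant factor is absorbed by monotonicity of the matrix inverse (your relation $a\le u/\gamma$, used as $\ln(1+u)\ge\ln(1+\gamma z)$ with $z=\g_t^\top\bm\Sigma\g_t$), so the final scalar problem involves only the two variables $r=\g_t^\top\bm\Sigma\G_{t-1}$ and $z=\g_t^\top\bm\Sigma\g_t$, not four.

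The genuine gap is in your step (iv), which you defer. The mechanism that closes the scalar inequality is not a worst-case guess over alignments but the first-order optimality condition for $\lambda_\star$: since $\frac{\mathrm d}{\mathrm d\lambda}\,\G_{t-1}^\top(\sigma^{-2}\bm I+\gamma\bm V_{t-1}+\lambda\bm I)^{-1}\G_{t-1}$ equals $-\|\bm\Sigma\G_{t-1}\|^2$ at $\lambda=\lambda_\star$, optimality forces $\|\bm\Sigma\G_{t-1}\|=\rho(\gamma)/h_t$ (with $\le$ in the boundary case $\lambda_\star=0$), and combined with $\|\g_t\|\le h_t$ this yields the a priori cap $|r|\le\rho(\gamma)$. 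This cap is indispensable: after reorganization the claim becomes $2r-\gamma r^2\le -z+(1+\gamma z)\bigl(\ln(1+\gamma z)+2\ln(1+r)\bigr)$, the convex right-hand side is minimized over $z\ge0$ in closed form, and $\rho(\gamma)$ in \eqref{def.xi} is defined precisely so that the resulting one-variable inequality holds with equality at the extreme $r=-\rho(\gamma)$; without first establishing $|r|\le\rho(\gamma)$ the inequality simply fails. So your intuition that ``the derivative conditions for $\lambda_\star$ are tight'' points at the right place, but it must be converted into the explicit bound on $\|\bm\Sigma\G_{t-1}\|$ \emph{before} the scalar optimization rather than discovered inside it, and the remaining one-variable verification on $[-\rho(\gamma),\rho(\gamma)]$ (which is where the algebraic form of $\rho$ as a root of a quadratic in $e^{\frac{1}{2\gamma}-\frac12}$ actually appears) still has to be carried out.
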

From here, we obtain our main result using telescoping and regret-reward duality:
\begin{theorem}\label{thm:multidim}
  Let $\bm \Sigma^{-1}_T \df \gamma h_1^2 \bm I + \gamma \bm V_T$. For $(\what\w_t)$ as in \eqref{eq:FTLR.multid}, we have
  \[
    \sum_{t=1}^T \tuple*{\what\w_t - \w, \g_t}
    ~\le~
  h_1
  +
  \sqrt{Q_T^\w \ln_+\del*{\frac{
      \det\del*{\gamma h_1^2 \bm \Sigma_T}^{-1}
    }{
      h_1^2
    } Q_T^\w}},\quad \text{for all $\w\in \reals^d$, where}
\]
\[
  Q_T^\w
 \coloneqq 
  \max \set*{
    \w^\top \bm \Sigma^{-1}_T \w
    ,
    \frac{1}{2} \del*{\frac{h_T^2 \norm{\w}^2}{\rho(\gamma)^2} \ln \del*{
        \frac{
          \det\del*{\gamma h_1^2 \bm \Sigma_T}^{-1}
        }{
          h_1^2
        }
        \frac{h_T^2 \norm{\w}^2}{\rho(\gamma)^2}
      }
      + \w^\top \bm \Sigma^{-1}_T \w}
  }
  .
\]
\end{theorem}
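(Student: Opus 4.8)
The plan is to follow exactly the recipe already laid out for \freegrad{} in Section~\ref{sec:commonVgrad}: first telescope Lemma~\ref{lemma:multidim.control} to control the running potential, then convert the resulting ``reward'' bound into a regret bound by Fenchel/regret-reward duality, and finally bound the Fenchel conjugate of the potential explicitly. Concretely, I would first apply Lemma~\ref{lemma:multidim.control} with $h_t$ replaced by the final hint $h_T$ (which is legitimate since $h_t \le h_T$ by monotonicity in Assumption~\ref{assum:assum2}, and $\Psi$ is monotone in its last argument in the right direction, so $\Psi(\G_{t-1},\bm V_{t-1},h_t) \le \Psi(\G_{t-1},\bm V_{t-1},h_T)$ — this needs a quick check that increasing $h$ increases $\Psi$, which holds because $\lambda\rho(\gamma)^2/(2h^2)$ decreases in $h$ and the prefactor $\sqrt{\det(\bm I + \bm V/h_1^2)}$ does not involve $h$, only $h_1$). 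Then telescoping over $t=1,\dots,T$ gives
\[
  \sum_{t=1}^T \tuple*{\what\w_t,\g_t}
  ~\le~
  \Psi(\vzero,\vzero,h_T) - \Psi(\G_T,\bm V_T,h_T)
  ~=~
  h_1 - \Psi(\G_T,\bm V_T,h_T),
\]
using that at the empty history $\Psi(\vzero,\vzero,h_T) = h_1$ (the $\inf_{\lambda\ge0}$ is attained at $\lambda=0$ with value $0$, and the determinant is $1$).

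Next I would apply Fenchel's inequality $-\Psi(\G_T,\bm V_T,h_T) \le \tuple{\w,\G_T} + \Psi^\star(-\w)$ where $\Psi^\star$ is the conjugate of $\z \mapsto \Psi(\z,\bm V_T,h_T)$, exactly as in \eqref{eq:dual}--\eqref{eq:regret}. This yields
\[
  \sum_{t=1}^T \tuple*{\what\w_t - \w, \g_t}
  ~\le~
  h_1 + \Psi^\star(-\w),
  \qquad \w \in \reals^d,
\]
so the whole theorem reduces to showing $\Psi^\star(-\w) \le \sqrt{Q_T^\w \ln_+(\cdots)}$. To compute $\Psi^\star$, I would use the variational form of $\Psi$: since $\Psi(\z,\bm V_T,h_T) = c\exp(\inf_{\lambda\ge0} q_\lambda(\z))$ with $q_\lambda(\z) = \frac12 \z^\top (\bm\Sigma_T^{-1} + \lambda\bm I)^{-1}\z + \lambda\rho(\gamma)^2/(2h_T^2)$ and $c = h_1/\sqrt{\det(\bm I + \bm V_T/h_1^2)} = h_1\sqrt{\det(\gamma h_1^2\bm\Sigma_T)}\big/\sqrt{\det(\gamma h_1^2\bm I)} \cdot (\dots)$ — i.e. $1/c = \sqrt{\det(\gamma h_1^2\bm\Sigma_T)^{-1}}\cdot h_1^{\,?}$, a constant I will track carefully — the conjugate of $\z\mapsto c\exp(\cdot)$ is computed by the standard one-dimensional calculation: for any scalar $a>0$ and norm-like quadratic, $\sup_\z\{\tuple{\w,\z} - a e^{r(\z)}\}$ is governed by the $\z$ where $\nabla r(\z) = \w/(a e^{r(\z)})$, giving a bound of the form $\|\w\|_{\star,\lambda}\sqrt{2\ln_+(\|\w\|_\star/(\text{const}))}$ up to lower-order terms. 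The $\inf_\lambda$ inside the exponent becomes a $\sup_\lambda$ outside after conjugation (min-max swap, valid because $q_\lambda$ is convex in $\z$ and concave — in fact affine — in $\lambda$, and one can apply Sion), and the quadratic form $\z^\top(\bm\Sigma_T^{-1} + \lambda\bm I)^{-1}\z$ conjugates to $\w^\top(\bm\Sigma_T^{-1}+\lambda\bm I)\w = \w^\top\bm\Sigma_T^{-1}\w + \lambda\|\w\|^2$. Collecting, $\Psi^\star(-\w)$ is bounded by the supremum over $\lambda\ge0$ of an expression where the $\lambda\rho(\gamma)^2/(2h_T^2)$ term in the exponent produces a $-\lambda\rho(\gamma)^2/(2h_T^2)$ penalty that must be balanced against the $+\lambda\|\w\|^2$ growth of the quadratic; optimizing this trade-off (again a one-line calculation: it caps $\lambda$ at roughly $h_T^2\|\w\|^2/\rho(\gamma)^2$ up to a log) is exactly what produces the second branch of the $\max$ in $Q_T^\w$, while the $\lambda=0$ term gives the first branch $\w^\top\bm\Sigma_T^{-1}\w$.

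The main obstacle I anticipate is bookkeeping rather than conceptual: getting the constants inside the logarithm exactly right. One has to be careful that $1/c$ assembles precisely into $\sqrt{\det(\gamma h_1^2\bm\Sigma_T)^{-1}}/h_1$ (note $\det(\bm I + \bm V_T/h_1^2) = \det(h_1^2\bm I + \bm V_T)/h_1^{2d} = \det(\gamma h_1^2\bm\Sigma_T)/(\gamma^d h_1^{2d})$, so the $\gamma^d$ and $h_1^{2d}$ factors have to cancel against something — I would double-check whether the theorem's $\det(\gamma h_1^2\bm\Sigma_T)^{-1}/h_1^2$ absorbs these or whether the statement implicitly normalizes them away), and that the self-referential ``$\ln_+(\cdots Q_T^\w)$'' arising from the $\sqrt{2\ln_+(\|\w\|_\star/\text{const})}$ step is closed off correctly (this is the usual ``$x \le a\sqrt{\ln(bx)} \implies x \le \sqrt{a^2\ln_+(a^2 b^2 \cdot\text{stuff})}$'' manipulation, which the paper has presumably isolated as a helper lemma from \cite{mcmahan2014,cutkosky2018}). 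The min-max swap and the attainment of the infimum defining $\lambda_\star$ are routine given that everything is convex in $\z$ and the feasible set $\lambda\ge 0$ is a half-line on which the objective is affine; no delicate compactness argument is needed because the quadratic coercively dominates. Modulo these constant-chasing details, the proof is a direct transcription of the \freegrad{} argument with scalars replaced by the matrices $\bm\Sigma_T^{-1}+\lambda\bm I$.
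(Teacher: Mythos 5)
Your overall route is the same as the paper's: telescope Lemma~\ref{lemma:multidim.control} using monotonicity of $\Psi$ in its last argument, note $\Psi(\vzero,\vzero,\cdot)=h_1$, apply Fenchel/regret--reward duality, and bound the conjugate of $\G\mapsto\Psi(\G,\bm V_T,h_T)$ by pulling the $\inf_\lambda$ out of the exponential and optimizing $\lambda$, which produces exactly the two branches of $Q_T^\w$. The bookkeeping of the determinant constants that you flag does work out ($\det(\bm I+\bm V_T/h_1^2)=\det(\gamma h_1^2\bm\Sigma_T)^{-1}$ with $\bm\Sigma_T^{-1}=\gamma h_1^2\bm I+\gamma\bm V_T$), and the paper handles the Lambert-function step via an explicit bound $X(\theta)\le\sqrt{\ln_+\theta}$ rather than a self-referential inequality.

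There is, however, one step in your telescoping that fails as written. You claim $\Psi(\G_{t-1},\bm V_{t-1},h_t)\le\Psi(\G_{t-1},\bm V_{t-1},h_T)$ because ``increasing $h$ increases $\Psi$,'' justified by the observation that $\lambda\rho(\gamma)^2/(2h^2)$ decreases in $h$. That observation proves the \emph{opposite}: each $q_\lambda$ is non-increasing in $h$, hence so is the infimum and hence $\Psi$ is \emph{decreasing} in $h$, so the inequality you need goes the wrong way. Moreover, you cannot simply ``apply the lemma with $h_t$ replaced by $h_T$,'' since the prediction $\what\w_t$ and the optimal $\lambda_\star$ in the lemma's proof are tied to the $h_t$ actually used in round $t$. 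The correct fix is the paper's: apply the lemma with $h_t$ per round, and use the decreasingness of $\Psi$ in $h$ on the \emph{other} side of the telescope, namely $\Psi(\G_t,\bm V_t,h_{t+1})\le\Psi(\G_t,\bm V_t,h_t)$, so that consecutive terms cancel and the sum collapses to $\Psi(\vzero,\vzero,h_1)-\Psi(\G_T,\bm V_T,h_T)=h_1-\Psi(\G_T,\bm V_T,h_T)$. Two smaller points: $q_\lambda$ is convex, not affine, in $\lambda$ (it involves $(\bm Q+\lambda\bm I)^{-1}$), but no minimax theorem is needed anyway --- the identity $-\exp(\inf_\lambda q_\lambda)=\sup_\lambda(-\exp(q_\lambda))$ follows from monotonicity of $\exp$, after which the two suprema commute trivially.
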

Note in particular that the result is scale-free. Expanding the main case of the theorem (modest $\norm{\w}$), we find regret bounded by
\[
  \sum_{t=1}^T \tuple*{\what\w_t - \w, \g_t}
  ~\le~
  h_1
  +
  h_1
  \sqrt{\gamma \w^\top \bm Q \w \ln_+\del*{\gamma
      \w^\top \bm Q \w
      \det \bm Q
    }
  }
  \quad
  \text{where}
  \quad
  \bm Q = \bm I + \bm V_T/h_1^2
  .
\]
This bound looks almost like an ideal upgrade of that in Theorem \ref{thm:firstbound}, though technically, the bounds are not really comparable since the $\ln \det \bm Q$ can be as large as $d \ln T$, potentially canceling the advantage of having $\w^\top \bm Q \w$ instead of $\|\w\|^2 \sum_{t=1}^T \|\g_t\|^2$ inside the square-root. The matrix $\bm Q$ and hence any directional variance $\w^\top \bm Q \bm w$ is scale-invariant. The only fudge factor in the answer is the $\gamma > 1$. We currently cannot tolerate $\gamma = 1$, for then $\rho(\gamma) = 0$ so the lower-order term would explode. We note that a bound of the form given in the previous display, with the $\ln \det \bm Q$ replaced by the larger term $d \ln \op{tr} \bm Q$, was achieved by a previous (not scale-free) algorithm due to \cite{cutkosky2018}.
\begin{remark}
	\label{rem:restartremark}
	As Theorem~\ref{thm:freegrad1} did in the previous subsection, our restarts method allows us to get rid of problematic scale ratios in the regret bound of Theorem \ref{thm:multidim}; this can be achieved using \freerange{} with $(\what\w_t)$ set to be as in \eqref{eq:FTLR.multid} instead of \eqref{eq:predictunbounded}. The key idea behind the proof of Theorem~\ref{thm:freegrad1} is to show that the regrets from all but the last two epochs add up to a lower-order term in the final regret bound. This still holds when $(\what\w_t)$ are the outputs of \matfreegrad\ instead of \freegrad, since by Theorem \ref{thm:multidim}, the regret bound of \matfreegrad\ is of order at most $d$ times the regret of \freegrad\ within any given epoch.
	\end{remark}
As a final note about the algorithm, we may also develop a ``one-dimensional'' variant by replacing matrix inverse and determinant by their scalar analogues applied to $V_T = \sum_{t=1}^T \norm{\g_t}^2$. One effect of this is that the minimization in $\lambda$ can be computed in closed form. The resulting potential and corresponding algorithm and regret bound are very close to those of Section~\ref{sec:commonVgrad}.

\paragraph{Conclusion}
The algorithms designed in this section can now be used in the role of algorithm \textsc A in the reductions presented in Section~\ref{sec:helpful}. This will yield algorithms which achieve our goal; they adapt to the norm of the comparator and the Lipschitz constant and are completely scale-free, for both bounded and unbounded sets, without requiring hints. We now show that the penalties incurred by these reductions are not improvable.

\section{Lower Bounds}\label{sec:lower}
As we saw in Corollary \ref{cor:instan}, given a base algorithm \textsc{A}, which takes a sequence of hints $(h_t)$ such that $h_t\geq L_t$ for all $t\geq 1$, and which suffers regret $R^{\textsc{A}}_T(\w)$ against comparator $\w\in \cW$, there exists an algorithm \textsc{B} for the setting without hints which suffers the same regret against $\w$ up to an additive penalty $L_T \|\w\|^3+ L_T \sqrt{\max_{t\in[T]} B_t},$ where $B_t = \sum_{s=1}^t \|\g_s\|/L_t$. In this section, we show that the penalty $L_T \|\w\|^3$ is not improvable if one insists on a regret bound of order $\wtilde{O}(\sqrt{T})$. We also show that it is not possible to replace the penalty $L_T \sqrt{\max_{t\in[T]} B_t}$ by the typically smaller quantity $\sqrt{V_T}$, where $V_T=\sum_{t=1}^T \|\g_t\|^2$. Our starting point is the following lemma:
\begin{lemma}
	\label{lem:firstbound}
	For all $t \ge 1$, past sub-gradients $(\g_s)_{s<t}$ and past and current outputs $(\what\w_s)_{s \le t} \in \reals^d$,
	\begin{align}
\exists \g_t\in \reals^d, \quad 	\sum_{s=1}^{t} \inner{\g_s}{\what\w_s}  \geq  \|\what\w_t \| \cdot L_t/2,    \quad \text{where $L_t=\max_{s\leq t}\|\g_s\|$.} \label{eq:baselower}
	\end{align}
	\end{lemma}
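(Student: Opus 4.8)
The plan is to exhibit a single adversarial gradient $\g_t$ making the running inner-product sum large relative to $\norm{\what\w_t} L_t$. The crucial observation is that the learner's outputs $(\what\w_s)_{s\le t}$ are all fixed before $\g_t$ is chosen (in particular $\what\w_t$ itself must be committed before $\g_t$), so the adversary is free to react. First I would dispose of the trivial case: if $\what\w_t = \vzero$ the right-hand side of \eqref{eq:baselower} is zero, and we may simply take $\g_t = \vzero$, which also leaves $L_t = L_{t-1}$ and keeps the partial sum unchanged (it equals $\sum_{s<t}\inner{\g_s}{\what\w_s}$); but actually even simpler, the claim holds with any choice since the RHS is $0$ only if additionally the past sum is $\ge 0$, so let me instead handle it uniformly below. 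Assume $\what\w_t \neq \vzero$.

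The key step is a sign/scale choice. Consider the two candidate gradients $\g_t = \pm c\, \what\w_t/\norm{\what\w_t}$ for a scalar $c>0$ to be fixed. For either sign, $\norm{\g_t} = c$, so if we pick $c \geq L_{t-1}$ then $L_t = \max(L_{t-1},c) = c$, hence $\norm{\what\w_t} L_t = c\norm{\what\w_t}$. Meanwhile
\[
  \sum_{s=1}^{t} \inner{\g_s}{\what\w_s}
  = \sum_{s=1}^{t-1}\inner{\g_s}{\what\w_s} \pm c\,\norm{\what\w_t}.
\]
Writing $P \coloneqq \sum_{s<t}\inner{\g_s}{\what\w_s}$ for the (fixed, known-to-adversary) past sum, the adversary picks the sign of the $\pm$ to match the sign of $P$, i.e.\ chooses $\g_t = \sgn(P)\, c\,\what\w_t/\norm{\what\w_t}$ (breaking ties arbitrarily, say $+$). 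Then
\[
  \sum_{s=1}^{t}\inner{\g_s}{\what\w_s} = |P| + c\,\norm{\what\w_t} \ \geq\ c\,\norm{\what\w_t} \ =\ \norm{\what\w_t}\,L_t,
\]
which is in fact stronger than the claimed factor of $1/2$. The factor $1/2$ in the statement presumably provides slack for the case $L_{t-1} > 0$ where one might not want to inflate $L_t$; in that situation, choosing instead $c = L_{t-1}$ gives $L_t = L_{t-1}$, and the same computation yields $\sum_{s\le t}\inner{\g_s}{\what\w_s} \ge |P| + L_{t-1}\norm{\what\w_t} \ge \norm{\what\w_t} L_t \ge \norm{\what\w_t}L_t/2$. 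Either way the bound \eqref{eq:baselower} holds, and the trivial case $\what\w_t = \vzero$ is covered by $\g_t = \vzero$ (with the convention $L_0 = 0$ when $t=1$, and $P \ge$ anything is irrelevant as the RHS is $0$) — more carefully, when $\what\w_t=\vzero$ take $\g_t$ to be any vector of norm $\ge L_{t-1}$ aligned so that $\inner{\g_t}{\what\w_t}=0$ and the past sum argument is vacuous since RHS $=0$; if $P<0$ we instead need $\g_t=\vzero$ so that $L_t$ is unchanged and $\norm{\what\w_t}L_t/2 = 0 \le \sum_{s<t}\inner{\g_s}{\what\w_s}$ fails — so in the degenerate case one really does want $\what\w_t=\vzero \Rightarrow$ RHS $=0\le$ LHS only if $P\ge 0$; since that need not hold, the honest reading is that $\what\w_t=\vzero$ forces RHS $=0$ and we may ignore sign of $P$ because we are claiming existence and can also allow the adversary to have made $P\ge0$ — but this is circular, so the clean resolution is: the lemma is only nontrivial when $\what\w_t\neq\vzero$, and its proof is the one-line alignment argument above.

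The only real subtlety — and the single place I would be careful — is the bookkeeping around $L_t$ versus $L_{t-1}$: one must make sure the chosen $\g_t$ has norm at least $L_{t-1}$ so that $L_t = \norm{\g_t}$, which is exactly why the statement carries the harmless $1/2$ (it would let one get away with $c$ slightly below $L_{t-1}$ too, though we don't need that). No deeper obstacle arises; the lemma is essentially the statement that an online learner with unconstrained, unnormalized gradients cannot avoid paying $\Omega(\norm{\what\w_t}L_t)$ on some round because the adversary sees the iterate first.
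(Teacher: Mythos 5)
There is a genuine gap in the sign-matching step, and it stems from a misreading of what the factor $1/2$ is for. You claim that choosing $\g_t = \sgn(P)\, c\, \what\w_t/\|\what\w_t\|$ yields $\sum_{s=1}^{t}\inner{\g_s}{\what\w_s} = |P| + c\|\what\w_t\|$. That identity only holds when $P \ge 0$: when $P<0$ your choice makes the last term equal to $-c\|\what\w_t\|$, so the sum is $P - c\|\what\w_t\| = -(|P|+c\|\what\w_t\|)$, which is \emph{negative}, while the right-hand side $\|\what\w_t\|L_t/2 = c\|\what\w_t\|/2$ is positive. The inequality in the lemma is one-sided (it bounds the sum itself, not its absolute value), so flipping the sign of $\g_t$ to ``match'' a negative past sum makes things strictly worse. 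Your subsequent variant with $c=L_{t-1}$ relies on the same false identity and fails for the same reason.

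The correct resolution --- and the paper's actual argument --- is to always align $\g_t$ \emph{positively} with $\what\w_t$ and take its norm $x$ large enough. Then the sum is $P + x\|\what\w_t\|$ while the target is $(L_{t-1}\vee x)\|\what\w_t\|/2$, so for $x \ge L_{t-1}$ the gap between them is $P + x\|\what\w_t\|/2$, which grows linearly in $x$ and eventually dominates any fixed negative $P$; the paper phrases this as the left-hand side of its inequality \eqref{eq:ineq} being piecewise linear and tending to $+\infty$. This is precisely the role of the factor $1/2$: increasing $\|\g_t\|$ inflates the target $\|\what\w_t\|L_t$ at the same rate as it increases the learner's loss, so without the $1/2$ a negative past sum could never be overcome, whereas with it the net slope in $x$ is $\|\what\w_t\|/2>0$. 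Your remark that the $1/2$ is ``harmless slack'' for not inflating $L_t$ is therefore backwards. (Your digression on the degenerate case $\what\w_t=\vzero$ correctly senses that the statement is vacuous or problematic there, but that is not where the substantive issue lies; the fix above handles every $\what\w_t\ne\vzero$, which is the only case used downstream.)
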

\begin{proof}
	We want to find $\g_t$ such that $\inner{\g_t}{\what\w_t}\geq  \|\w_t\| L_{t}/2 - S_{t-1}$, where $S_{t-1}\coloneqq \sum_{s=1}^{t-1}  \inner{\g_s}{\what\w_s}$. By restricting $\g_t$ to be aligned with $\what\w_t$, the problem reduces to finding $x = \|\g_t\|$ such that 
	\begin{align}
	x \|\what\w_t\| -  \left|\|\what\w_t\| \cdot (L_{t-1}\vee x)/2 -  S_{t-1}\right|\geq 0. \label{eq:ineq}
	\end{align}
	The LHS of \eqref{eq:ineq} is a piece-wise linear function in $x$ which goes to infinity as $x\to \infty$. Therefore, there exists a large enough $x\geq0$ which satisfies \eqref{eq:ineq}.
\end{proof}
Observe that if $\|\what\w_t\|\geq D_t>0$, for $t\geq 1$, then by Lemma \ref{lem:firstbound}, there exists a sub-gradient $\g_t$ which makes the regret against $\w=\bm{0}$ at round $t$ at least $D_t L/2$. This essentially means that if the sub-gradients $(\g_t)$ are unbounded, then the outputs $(\what\w_t)$ must be in a bounded set whose diameter will depend on the desired regret bound; if one insists on a regret of order $\wtilde{O}(\sqrt{T})$, then the outputs $\what\w_t, t\geq 1,$ must be in a ball of radius at most $\wtilde{O}(\sqrt{T})$. 

\cite{cutkosky2019artificial} posed the question of whether there exists an algorithm which can guarantee a regret bound of order $ \|\w\| \sqrt{V_T \ln (\|\w\|T)} + \sqrt{V_T \ln T} + L \|\w\|^3$, with $V_T=\sum_{t=1}^T \|\g_t\|^2$, while adapting to both $L$ and $\|\w\|$ (which essentially means replacing $L \sqrt{\max_{t\in[T]} B_t}$ in Corollary~\ref{cor:instan} by $\sqrt{V_T \ln T}$). Here, we ask the question whether $\|\w\| \sqrt{V_T \ln (\|\w\|T)} + \sqrt{V_T \ln T}  + L\|\w\|^\nu$ is possible for any $\nu \geq 1$. If such an algorithm exists, then by Lemma \ref{lem:firstbound}, there exists a constant $b>0$ such that its outputs $(\what \w_t)$ satisfy $\|\what\w_t\| \leq b \sqrt{V_t \ln t}/L_t$, for all $t\geq 1$. The next lemma, when instantiated with $\alpha=2$, gives us a regret lower-bound on such algorithms (the proof is in Appendix \ref{sec:lowerproof}):
\begin{lemma}
	\label{lem:seconbound}
	For all $b,c,\beta \geq 0$, $\nu\geq 1$, and $\alpha \in ]1,2]$, there exists $(\g_t)\in\reals^d$, $T\geq 1$, and $\w\in \reals^d$, such that for any sequence $(\what\w_t)$ satisfying $\|\what\w_t\| \leq b \cdot \sqrt{V_{\alpha, t} \ln(t)/L^\alpha_t}$, for all $t\in \mathbb[T]$, where $V_{\alpha,t} \coloneqq  \sum_{s=1}^t \|\g_s\|^\alpha$, we have
	\begin{align}
	\sum_{t=1}^T \inner{\what\w_t-\w}{\g_t} \geq  c \cdot \ln (1+\|\w\| T)^{\beta} \cdot (L_T \|\w\|^\nu +  L_T^{1-\alpha/2} (\|\w\|+1) \sqrt{V_{\alpha,T} \ln T}).
	\end{align}
\end{lemma}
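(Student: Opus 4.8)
The plan is to construct a bad gradient sequence explicitly and then argue the stated lower bound by a two-phase argument against a carefully chosen comparator. The core idea is to exploit the constraint $\|\what\w_t\|\leq b\sqrt{V_{\alpha,t}\ln t}/L_t^{\alpha/2}$ directly: this restricts the learner's iterates to a ball whose radius grows only like $\sqrt{\ln t}$ relative to the relevant scale, so a comparator placed far outside this ball, or aligned cleverly with it, will accumulate regret faster than the learner can. I would split the sequence into two epochs with two very different gradient magnitudes.

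\textbf{Phase 1 (creating the $L_T\|\w\|^\nu$ penalty).} First I would run $\tau_1$ rounds with unit-norm gradients $\g_t = \e_1$ (so $L_t = 1$ and $V_{\alpha,t}=t$), which forces $\|\what\w_t\|\leq b\sqrt{t\ln t}$. After these rounds, introduce a single large gradient of norm $M\gg 1$ in the direction $-\e_1$, so that $L_T = M$. Choosing the comparator $\w = -c'\,\e_1$ with $\|\w\|$ of a suitable size, the inner product $\inner{-\w}{\g_t}$ on the big round contributes $\sim M\|\w\|$, while the learner's own contribution on that round is bounded using the constraint at the previous step; the key quantitative trick is that the constraint forces the learner to ``commit'' to a small iterate before the big gradient arrives, so it cannot hedge. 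Tuning $\tau_1$, $M$, and $\|\w\|$ against each other, and using that $\|\w\|^\nu$ vs.\ $\|\w\|$ separate for large $\|\w\|$ (since $\nu\geq 1$), yields the $c\cdot\ln(1+\|\w\|T)^\beta\cdot L_T\|\w\|^\nu$ contribution. One has to be careful to absorb the $\ln(1+\|\w\|T)^\beta$ factor: I would pick $T$ polynomial in the other parameters so this log factor is itself bounded by a constant times the dominant term, or simply verify the inequality holds with the claimed constants after the fact.

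\textbf{Phase 2 (creating the $L_T^{1-\alpha/2}(\|\w\|+1)\sqrt{V_{\alpha,T}\ln T}$ penalty).} Here I would use the standard ``randomized sign'' or ``alternating Rademacher'' lower-bound construction: feed gradients $\g_t = \sigma_t\, g\,\e_1$ with $|\sigma_t|=1$ chosen adversarially (or in expectation over random signs) and common magnitude $g$ tied to $L_T$. With $T'$ such rounds, $V_{\alpha,T}\approx T' g^\alpha$ and $L_T = g$, so $L_T^{1-\alpha/2}\sqrt{V_{\alpha,T}} \approx g^{1-\alpha/2}\cdot\sqrt{T'}\,g^{\alpha/2} = g\sqrt{T'}$, which is exactly the scale at which an $\Omega(g\sqrt{T'})$ minimax regret lower bound lives. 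The $(\|\w\|+1)$ factor comes from taking $\w$ of norm $\Theta(1)$ aligned adversarially with the partial sums, combined with the fact that a comparator at the origin still forces $\Omega(\sqrt{T'\ln T})$ regret (the extra $\sqrt{\ln T}$ by the sharper Khinchine/maximal-inequality version of the bound, or by concatenating many such blocks at geometrically increasing scales and taking the worst). I would invoke the iterate-norm constraint again to ensure the learner cannot track the comparator direction well enough to avoid this.

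\textbf{Combining and main obstacle.} Finally I would concatenate the two phases (placing Phase 2 first, then Phase 1, or interleaving so that $L_T$ is determined by the single big gradient) and add the two regret contributions; since both comparators can be taken collinear with $\e_1$, a single $\w$ suffices, and the lower bounds add up to the claimed sum. The main obstacle I anticipate is the bookkeeping around the $\ln(1+\|\w\|T)^\beta$ multiplier on the right-hand side: the construction naturally produces regret of order $L_T\|\w\|^\nu + g\sqrt{T'\ln T}$ \emph{without} that extra log-power, so I must choose $T$ (and the number of blocks in Phase 2) large enough — yet as a function only of $b,c,\beta,\nu$ — that $\ln(1+\|\w\|T)^\beta$ is dominated by the construction's regret, while simultaneously keeping the iterate-norm constraint $\|\what\w_t\|\leq b\sqrt{V_{\alpha,t}\ln t}/L_t^{\alpha/2}$ strong enough to prevent hedging; threading this needle (especially handling small $\|\w\|$ where $\|\w\|^\nu\leq\|\w\|$ and the second term must carry the bound) is the delicate part of the argument.
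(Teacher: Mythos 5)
Your construction misses the one idea the lemma actually turns on: the exponent $\alpha>1$. In both of your phases the gradients have constant magnitude within the phase, so $V_{\alpha,t}\approx t\,g^{\alpha}$ and the constraint radius $b\sqrt{V_{\alpha,t}\ln t/L_t^{\alpha}}\approx b\sqrt{t\ln t}$ is the same for every $\alpha$; likewise $L_T^{1-\alpha/2}\sqrt{V_{\alpha,T}\ln T}\approx g\sqrt{T\ln T}$ is $\alpha$-independent. But the lemma is \emph{false} at $\alpha=1$ (the paper's own algorithm, via Corollary~\ref{cor:instan}, essentially achieves a bound of this form with $\alpha=1$), so any argument that never uses $\alpha>1$ in an essential way cannot succeed. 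Concretely, your Phase~2 fails: a Rademacher construction on constant-magnitude gradients yields regret $\Theta(g\sqrt{T'})$ against comparators of norm $O(1)$, which is of the \emph{same order} as the term $L_T^{1-\alpha/2}(\|\w\|+1)\sqrt{V_{\alpha,T}\ln T}$ that you must exceed by an arbitrary factor $c\cdot\ln(1+\|\w\|T)^{\beta}$ --- indeed it already falls short of that term by the $\sqrt{\ln T}$ factor. The ``needle-threading'' you flag at the end is therefore not a bookkeeping issue; it is impossible within your construction.

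The paper's proof is a one-dimensional, single-phase construction that exploits $\alpha>1$ directly: take $g_t=t^{\gamma}$ with $\gamma\in\,]-1,-1/\alpha[$, an interval that is nonempty precisely because $\alpha>1$. Then $L_t=1$ while $V_{\alpha,t}=\sum_{s\le t}s^{\alpha\gamma}$ \emph{converges}, so the hypothesis confines the learner to $|\what w_t|=O(\sqrt{\ln t})$, and the comparator $w=-2b\sqrt{V_{\alpha,T}\ln T}$ also has norm $O(\sqrt{\ln T})$ --- hence every quantity on the right-hand side, including the $\ln(1+\|\w\|T)^{\beta}$ prefactor, is polylogarithmic in $T$. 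Meanwhile all gradients have the same sign and $\sum_{t\le T}t^{\gamma}\gtrsim T^{\gamma+1}$ diverges polynomially, so the regret is at least $b\sqrt{V_{\alpha,T}\ln T}\cdot\sum_{t\le T}g_t=\Omega(T^{\gamma+1}\sqrt{\ln T})$, which overwhelms the polylogarithmic right-hand side once $T$ is large. The gap between a convergent $V_{\alpha,T}$ and a divergent drift $\sum_t g_t$ is the ingredient your proposal is missing; the big-gradient and Rademacher phases are not needed.
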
 
By combining the results of Lemma \ref{lem:firstbound} and \ref{lem:seconbound}, we have the following regret lower bound for algorithms with can adapt to both $L$ and $\|\w\|$:
\begin{theorem}
	\label{thm:lower1}
For any $\alpha \in]1,2]$, $c>0$ and $\nu\geq 1$, there exists no algorithm that guarantees, up to multiplicative log-factors in $\|\w\|$ and $T$, a regret bound of the form $c\cdot (L_T \|\w\|^\nu +  L_T^{1-\alpha/2} (\|\w\|+1) \sqrt{V_{\alpha,T} \ln T})$, for all $T\geq 1$, $\w\in \reals^d$, and $(\g_t)\subset \reals^d$, where $V_{\alpha,T} \coloneqq  \sum_{t=1}^T \|\g_t\|^\alpha$.
\end{theorem}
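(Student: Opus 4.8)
The plan is to prove Theorem~\ref{thm:lower1} by contradiction, dovetailing the two preceding lemmas: Lemma~\ref{lem:firstbound} forces any algorithm with a $\widetilde{O}(\sqrt T)$-type guarantee to keep its iterates small, and Lemma~\ref{lem:seconbound} says that any small-iterate algorithm must pay the claimed lower bound (with an extra log-factor that can be made arbitrarily heavy). Concretely, suppose some online algorithm guaranteed $\sum_{t=1}^T\inner{\g_t}{\what\w_t-\w}\le c_0\,\ell(\|\w\|,T)\,\big(L_T\|\w\|^\nu+L_T^{1-\alpha/2}(\|\w\|+1)\sqrt{V_{\alpha,T}\ln T}\big)$ for all $T\ge1$, $\w\in\reals^d$, $(\g_t)\subset\reals^d$, where $\ell$ is some fixed multiplicative log-factor, say $\ell(\|\w\|,T)\le\ln(e+\|\w\|+T)^{D}$ for a constant $D$.

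First I would read off a cap on the iterate norms from Lemma~\ref{lem:firstbound}. Fix $t\ge1$ and a history $(\g_s)_{s<t}$; the iterate $\what\w_t$ is chosen before $\g_t$, hence is the same for every continuation. By (the proof of) Lemma~\ref{lem:firstbound}, for every sufficiently large $x>0$ there is a $\g_t$ aligned with $\what\w_t$ of norm $x$ with $\sum_{s\le t}\inner{\g_s}{\what\w_s}\ge\|\what\w_t\|\,L_t/2$, and for such a $\g_t$ one has $L_t=x$ and $V_{\alpha,t}=V_{\alpha,t-1}+x^\alpha$. Running the algorithm on $(\g_1,\dots,\g_{t-1},\g_t)$ followed by zeros and applying its guarantee at horizon $t$ against $\w=\vzero$ gives $\|\what\w_t\|\,x/2\le c_0\ln(e+t)^{D}\,x^{1-\alpha/2}\sqrt{(V_{\alpha,t-1}+x^\alpha)\ln t}$; dividing by $x$ and letting $x\to\infty$ erases the history and leaves $\|\what\w_t\|\le2c_0\ln(e+t)^{D}\sqrt{\ln t}$, a fixed polynomial in $\ln t$. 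Since $V_{\alpha,t}\ge L_t^\alpha$ always, this also gives $\|\what\w_t\|\le2c_0\ln(e+t)^{D}\sqrt{V_{\alpha,t}\ln t/L_t^\alpha}$ along \emph{every} gradient sequence.

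Next I would invoke Lemma~\ref{lem:seconbound} with the given $\alpha$ and $\nu$, with $\beta\coloneqq D+1$, with $c$ any positive constant, and with $b$ chosen large enough (depending only on $c_0$ and $D$) that the cap above implies $\|\what\w_t\|\le b\sqrt{V_{\alpha,t}\ln t/L_t^\alpha}$ along the specific hard instance the lemma builds. This last step is where one uses that that instance keeps $V_{\alpha,t}/L_t^\alpha$ growing — e.g.\ long runs of equal-norm gradients, for which $\sqrt{V_{\alpha,t}\ln t/L_t^\alpha}\asymp\sqrt{t\ln t}$ dominates $\ln(e+t)^{D}\sqrt{\ln t}$ for all $t$ past an absolute threshold, the finitely many exceptional small $t$ being absorbed into $b$. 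Lemma~\ref{lem:seconbound} then produces $(\g_t)_{t\le T}$ and $\w$ for which the algorithm's iterates, obeying the required norm bound, force $\sum_{t=1}^T\inner{\g_t}{\what\w_t-\w}\ge c\,\ln(1+\|\w\|T)^{\beta}\,\big(L_T\|\w\|^\nu+L_T^{1-\alpha/2}(\|\w\|+1)\sqrt{V_{\alpha,T}\ln T}\big)$. Comparing with the assumed upper bound and cancelling the common strictly-positive factor $L_T\|\w\|^\nu+L_T^{1-\alpha/2}(\|\w\|+1)\sqrt{V_{\alpha,T}\ln T}$, the algorithm would have to satisfy $c\,\ln(1+\|\w\|T)^{D+1}\le c_0\ln(e+\|\w\|+T)^{D}$ on this instance; but $\ln(1+\|\w\|T)$ and $\ln(e+\|\w\|+T)$ are comparable, so the left side is a higher power of a comparable logarithm, and on any instance with $\|\w\|T$ not too small this is false. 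Since a single fixed instance cannot support the lemma's conclusion for all $\beta$, the instances it delivers for $\beta=D+1$ have $\|\w\|T$ large enough, giving the contradiction. This proves Theorem~\ref{thm:lower1}; instantiating at $\alpha=2$ recovers the concrete statement discussed before Lemma~\ref{lem:seconbound}.

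The step I expect to be the main obstacle is the bookkeeping that reconciles the two lemmas' interfaces. Lemma~\ref{lem:firstbound} only ever yields a cap of the shape $\|\what\w_t\|\lesssim\mathrm{polylog}(t)$, whereas Lemma~\ref{lem:seconbound} is phrased for the larger admissible set $\|\what\w_t\|\le b\sqrt{V_{\alpha,t}\ln t/L_t^\alpha}$; one must check that the former sits inside the latter for the \emph{specific} adversary of Lemma~\ref{lem:seconbound}, which is precisely what dictates that that adversary must keep $V_{\alpha,t}/L_t^\alpha$ growing, and one must track the bounded-degree log-factors carefully enough that the $\ln(1+\|\w\|T)^{\beta}$ from the lower bound genuinely swallows the hypothesized $\ell(\|\w\|,T)$ once $\beta$ is taken one unit past $D$. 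Everything else — the precise threshold on $x$ in Lemma~\ref{lem:firstbound}, the edge cases $t\in\{1,2\}$, and the constants $b$ and $c$ — is routine.
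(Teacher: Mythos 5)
Your overall architecture is exactly the paper's: Lemma~\ref{lem:firstbound} caps the iterate norms of any algorithm with the hypothesized guarantee, and Lemma~\ref{lem:seconbound} then defeats every algorithm obeying that cap. Your derivation of the cap (align $\g_t$ with $\what\w_t$, test against $\w=\vzero$, let $\|\g_t\|=x\to\infty$ so that $x^{-\alpha/2}\sqrt{V_{\alpha,t-1}+x^\alpha}\to 1$) is correct and is in fact more explicit than the paper's one-line invocation.

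The problem is in the step you yourself single out as the main obstacle. You claim the hard instance of Lemma~\ref{lem:seconbound} "keeps $V_{\alpha,t}/L_t^\alpha$ growing," citing long runs of equal-norm gradients with $\sqrt{V_{\alpha,t}\ln t/L_t^\alpha}\asymp\sqrt{t\ln t}$. That is the instance of Lemma~\ref{lem:secondbound3} (used for Theorem~\ref{thm:lower2}), not of Lemma~\ref{lem:seconbound}. For $\alpha\in\,]1,2]$ the paper's adversary sets $g_t=t^\gamma$ with $\gamma\in\,]-1,-\alpha^{-1}[$, so $\alpha\gamma<-1$, $L_t\equiv 1$, $V_{\alpha,t}$ is \emph{bounded}, and by \eqref{eq:sandwitch} the admissible set is only $\|\what w_t\|\le bq\sqrt{\ln t}$. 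This decay is essential: it makes the comparator $|w|\asymp\sqrt{\ln T}$, so $L_T|w|^\nu$ is polylogarithmic and is beaten by the $\Omega(T^{\gamma+1}\sqrt{\ln T})$ regret for \emph{every} $\nu\ge 1$. Your substitute instance of equal-norm gradients forces $|w|\asymp\sqrt{T\ln T}$, and then $L_T|w|^\nu\asymp T^{\nu/2}$ swamps the achievable $\Omega(T^{3/2}\sqrt{\ln T})$ once $\nu\ge 3$, so it cannot prove the theorem in the stated generality. Moreover, on the correct (decaying) instance your cap $2c_0\ln(e+t)^{D}\sqrt{\ln t}$ does \emph{not} sit inside $b\sqrt{V_{\alpha,t}\ln t/L_t^\alpha}\asymp b\sqrt{\ln t}$ for any constant $b$ when $D>0$, so the containment you need genuinely fails; the repair is not to change the instance but to observe that Lemma~\ref{lem:seconbound}'s proof tolerates an extra $\mathrm{polylog}(t)$ factor in the iterate constraint (the free exponent $\beta$ absorbs the resulting polylog inflation of $|w|$ and of the right-hand side), a strengthening that neither you nor, to be fair, the paper's two-sentence proof spells out. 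As written, though, your reconciliation of the two lemmas' interfaces rests on a misidentification of the hard instance and would not go through for $\nu\ge 3$.
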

\begin{proof}
	By Lemma \ref{lem:firstbound}, the only candidate algorithms are those whose outputs $(\what \w_t)$ satisfy $\|\what\w_t\| \leq b \sqrt{V_{\alpha,t} \ln (t)/L^{\alpha}_t}$, for all $t\geq 1$, for some constant $b>0$. By Lemma \ref{lem:seconbound}, no such algorithms can achieve the desired regret bound.
	\end{proof}
The regret lower bound in Theorem \ref{thm:lower1} does not apply to the case where $\alpha =1$. In fact, thanks to Corollary~\ref{cor:instan} and our main algorithm in Section~\ref{sec:mainalg} (which can play the role of Algorithm \textsc{A} in Corollary \ref{cor:instan}), we know that there exists an algorithm \textsc{B} which guarantees a regret bound of order $\wtilde{O}(L_T \|\w\|^3 + \|\w\| \sqrt{V_T\ln(\|\w\| T)} + L_T \sqrt{\max_{t\in[T]} B_t})$, where $B_t = \sum_{s=1}^t \|\g_s\|/L_t$. Next we show that if one insists on a regret bound of order $\sqrt{B_T}$, or even $\sqrt{T}$ (up to log-factors), the exponent in $\|\w\|^3$ is unimprovable (the proof of Theorem \ref{thm:lower2} is in Appendix \ref{sec:proofoflower2}).
\begin{theorem}
	\label{thm:lower2}
	For any $\nu \in[1,3[$ and $c>0$, there exists no algorithm that guarantees, up to multiplicative log-factors in $\|\w\|$ and $T$, a regret bound of the form $c\cdot( L_T \|\w\|^\nu +  L_T (\|\w\|+1) \sqrt{T\ln T})$, for all $T\geq 1$, $\w\in \reals^d$, and $(\g_t)\subset \reals^d$.
	\end{theorem}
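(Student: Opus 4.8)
The plan is to combine Lemma~\ref{lem:firstbound} with a one-dimensional hard instance in the spirit of the construction behind Corollary~\ref{cor:instan}: a comparator $\w$ of large norm $R$ together with a constant gradient stream pointing opposite to $\w$, run for a horizon $T \asymp R^2$. Any learner whose iterates must stay in a $\bigto(\sqrt t)$-ball at time $t$ -- which, by Lemma~\ref{lem:firstbound}, is forced on every learner attaining $\bigto(\sqrt T)$ regret against $\vzero$ -- pays $\Omega(R^3)$ regret against such a $\w$ on this instance, whereas the candidate bound only promises $\bigto(R^{\max(\nu,2)})$ against that $\w$; since $\nu < 3$ gives $\max(\nu,2) < 3$, these cannot both hold once $R$ is large.

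Concretely, I would fix a candidate algorithm attaining the stated bound, normalise so that the gradients used in the hard instance below have unit norm (legitimate by scale-freeness), and write $\lambda(\norm\w, T) \ge 1$ for the hidden poly-logarithmic factor. Evaluating the bound at $\w = \vzero$ and horizon $t$ bounds the linearized regret against $\vzero$ by $c\,\lambda(0,t)\,L_t\sqrt{t\ln(t+1)}$ for \emph{any} continuation of the gradient stream; applying Lemma~\ref{lem:firstbound} to the prefix $(\g_s)_{s<t}$, there is a gradient $\g_t$ (possibly large, which only raises $L_t$) making this same quantity at least $\norm{\what\w_t}\,L_t/2$, so the $L_t$ cancels and $\norm{\what\w_t} \le \theta_t \coloneqq 2c\,\lambda(0,t)\sqrt{t\ln(t+1)} = \bigto(\sqrt t)$. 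Since $\what\w_t$ depends only on $(\g_s)_{s<t}$, this bound holds along every sequence, in particular along the one constructed next.

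Then I would set $\w \coloneqq R u$ for a fixed unit vector $u$ and a large scale $R$, and let $T = T(R)$ be the largest integer with $\sum_{t=1}^T \theta_t \le \tfrac12 RT$; since $\sum_{t\le T}\theta_t = \bigto(T^{3/2})$ this forces $\sqrt T \asymp R/\mathrm{polylog}(R)$, hence $T \to \infty$ and $T \le R^2$ for $R$ large. Feeding $\g_1 = \dots = \g_T = -u$, the iterate bound and Cauchy--Schwarz give
\[
  \sum_{t=1}^T \Inner{\g_t}{\what\w_t - \w}
  ~=~ RT - \sum_{t=1}^T \Inner{u}{\what\w_t}
  ~\ge~ RT - \sum_{t=1}^T \norm{\what\w_t}
  ~\ge~ \tfrac12 RT ,
\]
which is $\Omega(R^3/\mathrm{polylog}(R))$ by the choice of $T$. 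On the other hand, the candidate bound at this $\w$ and horizon $T$ is at most $c\,\lambda(R,T)\del*{R^\nu + (R+1)\sqrt{T\ln(T+1)}}$, which is $\bigto\del*{R^{\max(\nu,2)}}$ since $T \le R^2$ and $\lambda(R,T) = \mathrm{polylog}(R)$. As $\max(\nu,2) < 3$, the polynomial gap beats every poly-logarithmic factor, so for $R$ large enough the lower bound exceeds the claimed upper bound -- the desired contradiction. Working in $\reals$ (i.e.\ $d=1$) throughout is harmless.

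I expect the only delicate point to be the quantitative balancing in the previous paragraph: the horizon $T$ must be taken as close to $R^2$ as the iterate bound allows, so that $RT$ is of order $R^3$ up to logs, while staying small enough that the logarithmic slack $\theta_t$ eats at most a constant fraction of $RT$. The conceptual content -- bounded iterates plus a far-away comparator force a cubic penalty -- is precisely the mechanism already present in the term $\sum_t \norm{\g_t}\,\norm{\w - \Pi_{\cW_t}(\w)}$ of Lemma~\ref{lem:reduc2} and Corollary~\ref{cor:instan}, and it is reassuring that the argument degenerates exactly at $\nu = 3$, matching the $L_T\norm{\w}^3$ penalty those results do achieve. (One may alternatively package the first two steps as a Lemma~\ref{lem:seconbound}-type statement with $\alpha = 2$, mirroring the proof of Theorem~\ref{thm:lower1}.)
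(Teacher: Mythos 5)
Your proposal is correct and follows essentially the same route as the paper: Lemma~\ref{lem:firstbound} forces $\|\what\w_t\| = \bigto(\sqrt{t})$, and then a one-dimensional stream of constant unit gradients against a comparator with $\|\w\| \asymp \sqrt{T\ln T}$ yields $\Omega(T^{3/2})$ regret, which beats the claimed $\bigto(T^{\max(\nu,2)/2})$ bound for $\nu<3$ (the paper packages this as Lemma~\ref{lem:secondbound3}, fixing $T$ and choosing $w=-2b\sqrt{T\ln T}$, whereas you fix $\|\w\|=R$ and choose $T\asymp R^2$ up to logs --- a purely cosmetic reparametrization).
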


\section{Application to Learning Linear Models with Online Algorithms}
\label{sec:linear}
In this section, we consider the setting of online learning of linear models which is a special case of OCO. At the start of each round $t\geq 1$, a learner receives a feature vector $\x_t \in \cW = \reals^d$, then issues a prediction $\what y_t\in \reals$ in the form of an inner product between $\x_t$ and a vector $\what\u_t\in \reals^d$, \emph{i.e.} $\what y_t =\what\u_t^\top \x_t$. The environment then reveals a label $y_t \in \reals$ and the learner suffers loss $\ell(y_t,\what y_t)$, where $\ell\colon \reals^2 \rightarrow \reals$ is a fixed loss function which is convex and $1$-Lipschitz in its second argument; this covers popular losses such as the logistic, hinge, absolute and Huberized squared loss. (Technically, the machinery developed so far and the reductions in Section \ref{sec:helpful} allow us to handle the non-Lipschitz case).

In the current setting, the regret is measured against the best fixed ``linear model'' $\w\in \reals^d$ as
\begin{align}
\textsc{Regret}_T(\w) \coloneqq \sum_{t=1}^T \ell(y_t, \what y_t) - \sum_{t=1}^T \ell(y_t, \w^{\top} \x_t) \leq \sum_{t=1}^T \delta_t \inner{\x_t}{\what\u_t -\w}, \label{eq:regretlin}
\end{align}
where the last inequality holds for any sub-gradients $\delta_t \in \partial^{(0,1)} \ell(y_t, \what y_t)$, $t\geq 1$, due to the convexity of $\ell$ in its second argument, which in turn makes the function $f_t(\w) \coloneqq \ell(y_t, \w^{\top} \x_t)$ convex for all $\w \in \cW = \reals^d$. Here, $\partial^{(0,1)} \ell$ denotes the sub-differential of $\ell$ with respect to its second argument. Thus, minimizing the regret in~\eqref{eq:regretlin} fits into the OCO framework described in Section \ref{sec:prelim}. In fact, we will show how our algorithms from Section \ref{sec:mainalg} can be applied in this setting to yield scale-free, and even \emph{rotation-free}, (all with respect to the feature vectors $(\x_t)$) algorithms for learning linear models. These algorithms can, without any prior knowledge on $\w$ or $(\w^{\top} \x_t)$, achieve regret bounds against any $\w\in \reals^d$ matching (up to log-factors) that of OGD with optimally tuned learning rate.

As in Section \ref{sec:mainalg}, we focus on algorithms which make predictions based on observed sub-gradients ($\g_t$); in this case, $\g_t =  \x_t \delta_t \in \x_t \cdot \partial^{(0,1)} \ell(y_t, \what y_t)= \partial f_t(\what \u_t)$, $t\geq 1$, where $f_t(\w)=\ell(y_t,\w^{\top} \x_t)$. Since the loss $\ell$ is $1$-Lipschitz, we have $|\delta_t|\leq 1$, for all $\delta_t \in \partial^{(0,1)}\ell(y_t, \what y_t)$ and $t\geq 1$, and so $\|\g_t\|\leq \|\x_t\|$. Since $\x_t$ is revealed at the beginning of round $t\geq 1$, the hint 
\begin{align}h_t =  \max_{s \leq t }\|\x_s\| \geq L_T = \max_{s\leq t} \|\g_s\| \label{eq:lin2} \end{align} 
is available ahead of outputting $\what \u_t$, and so our algorithms from Section \ref{sec:mainalg} are well suited for this setting.

\paragraph{Improvement over Current Algorithms.} We improve on current state-of-the-art algorithms in two ways; First, we provide a (coordinate-wise) scale-invariant algorithm which guarantees regret bound, against any $\w\in \reals^d$, of order 
\begin{align}
\sum_{i=1}^d |w_i| \sqrt{V_{T,i} \ln (|w_i| \sqrt{V_{T,i}}T)} + |w_i| \ln_+(|w_i|\sqrt{V_{T,i}}T),  \label{eq:olbound}
\end{align}
where $V_{T,i}\coloneqq |x_{1,i}|^2 + \sum_{t=1}^T \delta^2_t|x_{t,i}|^2, i\in[d]$, which improves the regret bound of the current state-of-the-art scale-invariant algorithm $\textsc{ScLnOL}_1$ \citep{KempkaKW19} by a $\sqrt{\ln (\|\w\| T)}$ factor. Second, we provide an algorithm that is both scale and rotation invariant with respect to the input feature vectors $(\x_t)$ with a state-of-the-art regret bound; by scale and rotation invariance we mean that, if the sequence of feature vectors $(\x_t)$ is multiplied by $c \bm{O}$, where $c >0$ and $\bm{O}$ is any special orthogonal matrix in $\reals^{d\times d}$, the outputs ($\what y_t$) of the algorithm remain unchanged. Arguably the closest algorithm to ours in the latter case is that of \cite{Kotlowski17} whose regret bound is essentially of order $\wtilde{O}(\sqrt{\w^{\top} \bm{S}_T \w})$ for any comparator $\w\in \reals^d$, where $\bm{S}_T = \sum_{t=1}^T \x_t \x_t^\top$. However, in our case, instead of the matrix $\bm{S}_T$, we have $\bV_T \coloneqq  \|\x_1\|^2  \bm I   + \sum_{t=1}^T \x_t \x_t^\top \delta_t^2 $, where $\delta_t \in \partial^{(0,1)} \ell(y_t, \what y_t), t\geq 1$, which can yield a much smaller bound for small $(\delta_t)$ (this typically happens when the algorithm starts to ``converge'').

\paragraph{A Scale-Invariant Algorithm.} To design our first scale-invariant algorithm, we will use the outputs ($\what\w_t$) of \freegrad{} in \eqref{eq:predictunbounded} with $(h_t)$ as in \eqref{eq:lin2}, and a slight modification of \freerange{} (see Algorithm \ref{alg:newfreerange}). This modification consists of first scaling the outputs $(\what \w_t)$ of \freegrad\ by the initial hint of the current epoch to make the predictions $(\what y_t)$ scale-invariant. By Theorem \ref{thm:scaled} below, the regret bound corresponding to such scaled outputs will have a lower-order term which, unlike in the regret bound of Theorem \ref{thm:firstbound}, does not depend on the initial hint. This breaks our current analysis of \freerange{} in the proof of Theorem \ref{thm:freegrad1} which we used to overcome the range-ratio problem. To solve this issue, we further scale the output $\what\w_t$ at round $t\geq 1$ by the sum $\sum_{s=1}^\tau \|\x_s\|/h_s$, where $\tau$ denotes the first index of the current epoch (see Algorithm \ref{alg:newfreerange}). Due to this change, the proof of the next theorem differs slightly from that of Theorem \ref{thm:freegrad1}.

 First, we study the regret bound of Algorithm \ref{alg:newfreerange} in the case where $\cW=\reals$.
\begin{theorem}
	\label{thm:freegradol}
	Let $d=1$ and $(h_t)$ be as in \eqref{eq:lin2}. If $(\what u_t)$ are the outputs of Algorithm \ref{alg:newfreerange}, then for all $w \in \reals; T\geq 1$; $(x_t,y_t)\subset \reals^2$, s.t. $h_1=|x_1|> 0$; and $\delta_t \in \partial^{(0,1)} \ell\left(y_t,x_t \what u_t \right)$, $t\in[T]$,
	\begin{align}
\sum_{t=1}^T \delta_t x_t \cdot \left(\what u_t- w\right) & \leq  2 |w| \sqrt{V_{T}\ln_+ (2 |w|^2 V_{T} c_T )}  \\ & \quad  +  h_{T} |w| ( 14 \ln _+(  2 |w| \sqrt{2V_{T} c_T }  ) + 1 )+  {2+ \ln B_T}, \label{eq:regbound}
	\end{align}
where $V_T \coloneqq |x_1|^2 +\sum_{t=1}^T \delta_t^2 x_t^2$, $c_T\coloneqq  2 B_T^2  \sum_{t=1}^T\left(\sum_{s=1}^t \frac{|x_s |}{h_s}\right)^2 \leq T^5$, and $B_T =\sum_{s=1}^{T} \frac{|x_s |}{h_s}\leq T$.
\end{theorem}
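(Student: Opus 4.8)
The plan is to transplant the epoch‑decomposition argument behind Theorem~\ref{thm:freegrad1} to the rescaled iterates of Algorithm~\ref{alg:newfreerange}, invoking on each epoch the scale‑invariant single‑run guarantee (Theorem~\ref{thm:scaled}) in place of Theorem~\ref{thm:firstbound}. Let $1=\tau_1<\tau_2<\dots<\tau_m\le T$ be the restart rounds, put $\tau_{m+1}\coloneqq T+1$, and write $E_k\coloneqq\{\tau_k,\dots,\tau_{k+1}-1\}$ for the $k$‑th epoch and $\beta_k\coloneqq\sum_{s=1}^{\tau_k}|x_s|/h_s$ for the scaling factor in force during $E_k$; since $h_1=|x_1|$ we have $1=\beta_1\le\beta_2\le\dots\le\beta_m\le B_T$. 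The restart test not firing inside $E_k$ gives $h_t/h_{\tau_k}\le\sum_{s=1}^{t-1}|x_s|/h_s+2$ for every $t\in E_k$; firing at $\tau_{k+1}$ gives $h_{\tau_{k+1}}/h_{\tau_k}>2$, so the initial hints $h_{\tau_1}<\dots<h_{\tau_m}\le h_T$ grow at least geometrically. I would also record that each epoch runs \freegrad{} with the running gradient sum and running squared‑gradient total reset but with the variance floor kept at $|x_1|^2$, so the internal variance on $E_k$ is $V^{(k)}\coloneqq|x_1|^2+\sum_{s\in E_k}\delta_s^2x_s^2\le V_T$.

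The first real step is a per‑epoch bound. During $E_k$ the algorithm plays $\what u_t=\beta_k\,v_t$, where $(v_t)_{t\in E_k}$ are the $h_{\tau_k}$‑rescaled \freegrad{} iterates launched afresh at $\tau_k$, so
\[
\sum_{t\in E_k}\delta_t x_t(\what u_t-w)=\beta_k\sum_{t\in E_k}\delta_t x_t\left(v_t-w/\beta_k\right),
\]
and Theorem~\ref{thm:scaled}, applied to the run on $E_k$ with comparator $w/\beta_k$, bounds the right‑hand side. Because $\beta_k$ multiplies the whole bound while $w/\beta_k$ enters linearly in front of---and quadratically inside---the square‑root and logarithms, the $\beta_k$'s cancel in front of the leading term $2|w|\sqrt{V^{(k)}\ln_+(\cdots)}$ and re‑appear only as a factor $\beta_k^2\le B_T^2$ inside the logarithm; this is exactly the $B_T^2$ in $c_T$, while the companion $\sum_{t=1}^T(\sum_{s\le t}|x_s|/h_s)^2$ in $c_T$ is the scale‑invariant ratio that replaces the uncontrolled $V_T/|x_1|^2$ of Theorem~\ref{thm:firstbound}, exactly as $b_T$ does in Theorem~\ref{thm:freegrad1}; together these give $c_T\le T^5$. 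The whole point of the preliminary $h_{\tau_k}$‑scaling is that, by Theorem~\ref{thm:scaled}, the lower‑order part of this per‑epoch bound no longer depends on the initial hint, only on $h_T|w|$ and on an absolute constant.

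Next I would sum over $k=1,\dots,m$. As in the proof of Theorem~\ref{thm:freegrad1} and Remark~\ref{rem:restartremark}, the regret over all but the last two epochs is of lower order: the geometric growth of the $h_{\tau_k}$ turns the $h_T|w|$‑proportional pieces into a convergent geometric series summing to $O(h_T|w|\ln_+(\cdots))$, while the genuinely constant pieces collapse---via a telescoping argument against the non‑decreasing $\beta_k$---to the additive $2+\ln B_T$. Each of the two terminal epochs has $V^{(k)}\le V_T$ and $\beta_k\le B_T$, hence contributes at most the stated leading term $2|w|\sqrt{V_T\ln_+(2|w|^2V_Tc_T)}$ (in the corner case $h_{\tau_k}^2>V_T$ the epoch's $\sqrt{\cdot}$ term is instead swallowed by the $h_T|w|$ term). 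Summing $\sum_{t=1}^T\delta_t x_t(\what u_t-w)=\sum_{k=1}^m\sum_{t\in E_k}\delta_t x_t(\what u_t-w)$ and collecting the pieces yields the displayed inequality.

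The main obstacle is the joint bookkeeping of the two scaling factors together with the telescoping of the lower‑order terms: one must check that replacing the $h_\tau$‑only rescaling of plain \freerange{} by the combined $h_\tau\cdot\beta_k$ rescaling---which we are forced into so that the predictions $\what y_t=\what u_t x_t$ become scale‑invariant---does not spoil the proof that non‑terminal epochs are negligible. This goes through because $\beta_k$ is non‑decreasing in $k$ and bounded by $B_T$, so the extra factor is harmlessly absorbed into the logarithm of the leading term and telescopes cleanly in the lower‑order term; the remaining estimates are a routine re‑run of those proving Theorem~\ref{thm:freegrad1}.
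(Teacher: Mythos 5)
Your overall strategy is the paper's: decompose into the epochs of the modified \freerange{}, apply Theorem~\ref{thm:scaled} on each epoch with the epoch's scaling factor in the role of $\epsilon$, use the restart condition to control the range ratio (producing $c_T$), and treat the last two epochs separately from the earlier ones. However, the one formula you actually write for the per-epoch reduction is inverted, and the inversion is load-bearing. Algorithm~\ref{alg:newfreerange} outputs $\what u_t=\what w_t/(h_{\tau_k}B_{\tau_k})$, i.e.\ it \emph{divides} by $\beta_k=B_{\tau_k}$; the correct move is therefore to invoke Theorem~\ref{thm:scaled} directly with $\epsilon=h_{\tau_k}B_{\tau_k}$ and comparator $w$ itself, which yields a per-epoch additive term $h_{\tau_k}/\epsilon=1/B_{\tau_k}$, and these sum over epochs to $\ln B_T$ via $\sum_i \frac{1}{B_{\tau_i}}=\sum_i\frac{|x_{\tau_i}|/h_{\tau_i}}{B_{\tau_i}}\le \ln B_T$. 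Your version $\what u_t=\beta_k v_t$ (with $v_t$ the $h_{\tau_k}$-rescaled iterates) instead multiplies the whole per-epoch bound by $\beta_k$, making the additive piece $\beta_k=B_{\tau_k}$ per epoch; these can sum to order $kB_T$, not $2+\ln B_T$, so the step as written fails. Your stated conclusions are consistent with the correct direction, so this reads as a slip, but it must be fixed for the argument to go through.

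Two smaller gaps. First, the non-terminal epochs' $\sqrt{V^{(k)}}$ terms are not tamed by a geometric series here: the paper applies Cauchy--Schwarz, $\sum_{i\le k-2}\sqrt{V_{\tau_i:\tau_{i+1}}}\le\sqrt{k\sum_i V_{\tau_i:\tau_{i+1}}}$, and cancels the resulting factor $k$ using $\sum_{s\le\tau_{k-1}}|x_s|^2\le h_{\tau_k}^2/k$, which in turn rests on the identity $h_{\tau_i}=|x_{\tau_i}|$ (a restart can only fire in a round where the hint has just increased). You never record that identity, yet it underlies both this step and the $\ln B_T$ bound above; without it the ``telescoping against the non-decreasing $\beta_k$'' you invoke has nothing to grip. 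Second, the restart threshold in Algorithm~\ref{alg:newfreerange} is $+1$, not the $+2$ you quote; this is harmless for the doubling claim but is exactly what makes $h_{\tau_i}=|x_{\tau_i}|$ and $B_{\tau_k}\ge k$ come out cleanly.
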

The proof of Theorem \ref{thm:freegradol} is in Appendix \ref{sec:sec5proofs}. If $(\what u_t)$ are the outputs of Algorithm \ref{alg:newfreerange} in the one-dimensional case, then by Theorem \ref{thm:freegradol} and \eqref{eq:regretlin}, the algorithm which, at each round $t\geq 1$, predicts $\what y_t=x_t \what u_t$ has regret bounded from above by the RHS of \eqref{eq:regbound}. Note also that the outputs $(\what y_t)$ are scale-invariant.

Now consider an algorithm $\textsc{A}$ which at round $t\geq 1$ predicts $\what y_t =\sum_{i=1}^d x_{t,i} \what u_{t,i}$, where $(\what u_{t,i}), i\in[d]$, are the outputs of Algorithm \ref{alg:newfreerange} when applied to coordinate $i$; in this case, we will have a sequence of hints $(h_{t,i})$ for each coordinate $i$ satisfying $h_{t,i} = \max_{s\leq t}|x_{t,i}|$, for all $t\geq 1$. Algorithm $\textsc{A}$ is coordinate-wise scale-invariant, and due to \eqref{eq:regretlin} and Theorem \ref{thm:freegradol}, it guarantees a regret bound of the form \eqref{eq:olbound}. We note, however, that a factor $d$ will appear multiplying the lower-order term $(2+\ln B_T)$ in \eqref{eq:regbound} (since the regret bounds for the different coordinates are added together). To avoid this, at the cost of a factor $d$ appearing inside the logarithms in \eqref{eq:olbound}, it suffices to divide the outputs of algorithm \textsc{A} by $d$. To see why this works, see Theorem \ref{thm:scaled} in the appendix.

\begin{algorithm}[btp]
	\caption{Modified \freerange\ for the setting of online learning of linear models.}
	\label{alg:newfreerange}
	\begin{algorithmic}[1]
		\REQUIRE The hints $(h_t)$ as in \eqref{eq:lin2}.
		\STATE Set $\tau=1$; 
		\FOR{$t=1,2,\dots$}
		\STATE Observe hint $h_{t}$;
		\IF{${h_{t}}/{h_{\tau}} >    \sum_{s=1}^{t-1} {\|\x_s\|}/{h_s} +1$}
		\STATE Set $\tau=t$;
		\ENDIF  
		\STATE  Output $\what\u_t= \what\w_{t} \cdot \left(h_\tau  \cdot {\sum_{s=1}^{\tau } \frac{\|\x_s\|}{h_s}}\right)^{-1}$, where $\what \w_t$ is as in \eqref{eq:predictunbounded} with $(h_1, V_{t-1}, \G_{t-1})$ replaced by $(h_{\tau}, h_\tau^2+ \sum_{s=\tau}^{t-1} \|\g_s\|^2,\ \sum_{s=\tau}^{t-1}\g_s)$; \label{eq:lastline}
		\ENDFOR
	\end{algorithmic}
\end{algorithm}
\paragraph{A Rotation-Invariant Algorithm.} To obtain a rotation and scale-invariant online algorithm for learning linear models we will make use of the outputs of \matfreegrad{} instead of \freegrad{}. Let $(\what y_t)$ be the sequence of predictions defined by
\begin{align}\hspace{-0.2cm}\what y_t=\x_t^\top \what\w_t/h_{1}, \  t\geq 1, \label{eq:pred2}
\end{align}
with $(h_t)$ as in \eqref{eq:lin2} and where $\what\w_t$ are the predictions of a variant of \matfreegrad{}, where the leading $h_1$
in the potential \eqref{eq:matrixpotential} is replaced by $1$ (we analyze this variant in Appendix~\ref{sec:multidim.control.real}).
\begin{theorem}\label{thm:multidimlinear}
	Let $\gamma>0$ and $(h_t)$ be as in \eqref{eq:lin2}. If $(\what y_t)$ are as in \eqref{eq:pred2}, then
	\[
\forall \w \in \reals^d, \forall T\geq 1, \forall (\g_t)\subset \reals^d, \ \ \textsc{Regret}_T(\w)
	~\le~
	1
	+
	\sqrt{Q_T^\w \ln_+\del*{{
				\det\del*{ \gamma h_1^2 \bm \Sigma_T}^{-1}
			} Q_T^\w}},\ \ \text{where}
	\]
	\[
	Q_T^\w
	\coloneqq 
	\max \set*{
		\w^{\top} \bm \Sigma^{-1}_T \w
		,
		\frac{1}{2} \del*{\frac{h_T \norm{\w}^2}{\rho(\gamma)^{2}} \ln \del*{
				\frac{h_T \norm{\w}^2}{\rho(\gamma)^{2}}
				\det \left(\gamma  h^2_1 \bm \Sigma_T\right)^{-1}
			}
			+ \w^{\top} \bm \Sigma^{-1}_T \w}
	}
	,
	\]
and $\bm \Sigma^{-1}_T \df  \gamma h_1^2 \bm I + \gamma \sum_{t=1}^T \g_t \g_t^\top$.
\end{theorem}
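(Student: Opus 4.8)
The plan is to mimic the proof of Theorem~\ref{thm:multidim}, specialized to the linear‑model reduction of Section~\ref{sec:linear}. First I would linearize via \eqref{eq:regretlin}: for any $\delta_t\in\partial^{(0,1)}\ell(y_t,\what y_t)$,
\[
\textsc{Regret}_T(\w)\ \le\ \sum_{t=1}^T \delta_t\,\inner{\x_t}{\what\u_t-\w},\qquad \what\u_t\coloneqq\what\w_t/h_1,
\]
where $\what\u_t$ is the effective linear model realizing $\what y_t=\x_t^\top\what\u_t$. Put $\g_t\coloneqq\delta_t\x_t$. Because $\ell$ is $1$‑Lipschitz in its second argument, $|\delta_t|\le1$, hence $\norm{\g_t}\le\norm{\x_t}\le h_t$ with $(h_t)$ as in \eqref{eq:lin2}, and $(h_t)$ is non‑decreasing. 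Thus the surrogate pair $\big((\g_t),(h_t)\big)$ meets exactly the input requirements of the \matfreegrad\ variant, and $\textsc{Regret}_T(\w)\le\sum_{t=1}^T\inner{\g_t}{\what\u_t-\w}$. (The protocol is well posed, since $\what\w_t$, and hence $\what y_t$ and $\g_t$, depends only on $(\g_s)_{s<t}$ and the observable hint $h_t$.)

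Next I would run the potential argument. Write $\overline{\Psi}$ for the potential \eqref{eq:matrixpotential} with its leading $h_1$ replaced by $1$, and set $\G_t=\sum_{s\le t}\g_s$, $\bm V_t=\sum_{s\le t}\g_s\g_s^\top$. The analogue of Lemma~\ref{lemma:multidim.control} for this variant, established in Appendix~\ref{sec:multidim.control.real}, states that whenever $\norm{\g_t}\le h_t$ the iterate satisfies $\inner{\g_t}{\what\u_t}\le\overline{\Psi}(\G_{t-1},\bm V_{t-1},h_t)-\overline{\Psi}(\G_t,\bm V_t,h_t)$. Since $\overline{\Psi}(\G,\bm V,\cdot)$ is non‑increasing in its hint argument (the hint enters only through the $\lambda$‑penalty inside the infimum) and $(h_t)$ is non‑decreasing, summing and telescoping yields $\sum_{t=1}^T\inner{\g_t}{\what\u_t}\le\overline{\Psi}(\vzero,\vzero,h_1)-\overline{\Psi}(\G_T,\bm V_T,h_T)=1-\overline{\Psi}(\G_T,\bm V_T,h_T)$; this is precisely why the variant uses leading $1$ rather than $h_1$, so the additive term stays scale‑free. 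By Fenchel's inequality, $-\overline{\Psi}(\G_T,\bm V_T,h_T)\le\inner{\w}{\G_T}+\overline{\Psi}_T^{\star}(-\w)$ for all $\w\in\reals^d$, where $\overline{\Psi}_T^{\star}$ is the conjugate of $\z\mapsto\overline{\Psi}(\z,\bm V_T,h_T)$; combining gives $\textsc{Regret}_T(\w)\le1+\overline{\Psi}_T^{\star}(-\w)$.

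It then remains to bound $\overline{\Psi}_T^{\star}(-\w)$ by $\sqrt{Q_T^\w\,\ln_+\!\big(\det(\gamma h_1^2\bm{\Sigma}_T)^{-1}\,Q_T^\w\big)}$, the heart of the argument, following the conjugate computation in the proof of Theorem~\ref{thm:multidim}. One writes $\overline{\Psi}(\z,\bm V_T,h_T)=\inf_{\lambda\ge0}g_\lambda(\z)$, where $g_\lambda$ is, up to a positive scalar, a Gaussian exponential potential with precision $\bm{\Sigma}_T^{-1}+\lambda\bm I$ (and $\bm{\Sigma}_T^{-1}=\gamma h_1^2\bm I+\gamma\bm V_T$), normalized by $\det(\gamma h_1^2\bm{\Sigma}_T)^{-1}$ and corrected by the $\lambda$‑penalty that carries the hint $h_T$. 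Since the conjugate of an infimum is the supremum of the conjugates, $\overline{\Psi}_T^{\star}(\v)=\sup_{\lambda\ge0}g_\lambda^{\star}(\v)$, and each $g_\lambda^{\star}$ is controlled by the standard ``Gaussian exp‑concavity'' estimate (reduce to one dimension along $(\bm{\Sigma}_T^{-1}+\lambda\bm I)\v$, then $\sup_{r\ge0}\{ar-\beta e^{r^2/2}\}\le a\sqrt{2\ln_+(a/\beta)}$). Optimizing the resulting bound over $\lambda\ge0$ — equivalently over $s=\v^\top(\bm{\Sigma}_T^{-1}+\lambda\bm I)\v\ge\v^\top\bm{\Sigma}_T^{-1}\v$ — the optimum is attained either at the boundary $\lambda=0$, giving $\sqrt{\v^\top\bm{\Sigma}_T^{-1}\v\cdot\ln_+(\cdots)}$, or at an interior critical point; a self‑bounding estimate of that critical $s$ and a simplification of the logarithm produce exactly the two‑branch quantity $Q_T^\w$ and the stated bound. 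The hard part will be this last conjugate/optimization step: turning $\sup_{\lambda\ge0}g_\lambda^{\star}(-\w)$ into the closed‑form $Q_T^\w$ with exactly the right constants and the correct power of $h_T$, while cleanly handling the $\ln_+$ truncation and the small‑argument regime. A secondary point is checking that changing the leading constant and composing with $\what\u_t=\what\w_t/h_1$ leaves the per‑round potential‑control inequality (the analogue of Lemma~\ref{lemma:multidim.control}) intact — the technical content relegated to Appendix~\ref{sec:multidim.control.real} — and verifying the claimed invariance of $(\what y_t)$ under joint scaling, and rotation, of the feature vectors.
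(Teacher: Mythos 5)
Your proposal is correct and follows essentially the same route as the paper: the paper's proof is a one-line instantiation of Theorem~\ref{thm:rephrased} with $(\epsilon,\sigma^{-2})=(1,\gamma h_1^2)$ after linearizing via \eqref{eq:regretlin}, and your telescoping-plus-Fenchel-conjugate argument is exactly the content of that theorem's proof in Appendix~\ref{sec:pf.multidim} (including the per-round control of Lemma~\ref{lemma:multidim.control} for the general potential and the optimization over $\lambda$ yielding the two-branch $Q_T^\w$).
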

\begin{proof}
It suffices to use \eqref{eq:regretlin} and instantiate the regret bound in Theorem~\ref{thm:rephrased} with $(\epsilon,\sigma^{-2})= (1,\gamma h^2_1)$.
\end{proof}
The range-ratio problem manifests itself again in Theorem \ref{thm:multidimlinear} through the term $	\det(\gamma  h^2_1 \bm \Sigma_T)^{-1}$. This can be solved using the outputs of Algorithm \ref{alg:newfreerange}, where in Line \ref{eq:lastline}, $\what \w_t$ is taken to be as in \eqref{eq:pred2} (see Remark \ref{rem:restartremark}).
	\newpage

		\acks{
		This work was supported by the Australian Research Council and
		Data61.}
	
	\DeclareRobustCommand{\VAN}[3]{#3} 
	\bibliography{biblio}

\begin{thebibliography}{18}
\providecommand{\natexlab}[1]{#1}
\providecommand{\url}[1]{\texttt{#1}}
\expandafter\ifx\csname urlstyle\endcsname\relax
  \providecommand{\doi}[1]{doi: #1}\else
  \providecommand{\doi}{doi: \begingroup \urlstyle{rm}\Url}\fi

\bibitem[Cesa-Bianchi et~al.(1997)Cesa-Bianchi, Freund, Haussler, Helmbold,
  Schapire, and Warmuth]{CesaBianchiEtAl1997}
Nicol\`{o} Cesa-Bianchi, Yoav Freund, David Haussler, David~P. Helmbold,
  Robert~E. Schapire, and Manfred~K. Warmuth.
\newblock How to use expert advice.
\newblock \emph{J. ACM}, 44\penalty0 (3):\penalty0 427--485, 1997.

\bibitem[Cutkosky(2019)]{cutkosky2019artificial}
Ashok Cutkosky.
\newblock Artificial constraints and hints for unbounded online learning.
\newblock In \emph{Conference on Learning Theory}, pages 874--894, 2019.

\bibitem[Cutkosky and Boahen(2017)]{cutkosky2017online}
Ashok Cutkosky and Kwabena Boahen.
\newblock Online learning without prior information.
\newblock In \emph{Conference on Learning Theory}, pages 643--677, 2017.

\bibitem[Cutkosky and Orabona(2018)]{cutkosky2018}
Ashok Cutkosky and Francesco Orabona.
\newblock Black-box reductions for parameter-free online learning in {B}anach
  spaces.
\newblock In \emph{Conference On Learning Theory, {COLT} 2018, Stockholm,
  Sweden, 6-9 July 2018.}, pages 1493--1529, 2018.

\bibitem[{\VAN{Erven}{Van}{van}}~Erven and Koolen(2016)]{Erven2016}
Tim {\VAN{Erven}{Van}{van}}~Erven and Wouter~M. Koolen.
\newblock Metagrad: Multiple learning rates in online learning.
\newblock In \emph{Advances in Neural Information Processing Systems 29
  ({NIPS})}, pages 3666--3674, 2016.

\bibitem[Foster et~al.(2017)Foster, Rakhlin, and
  Sridharan]{FosterRakhlinSridharan2017}
Dylan~J. Foster, Alexander Rakhlin, and Karthik Sridharan.
\newblock Zigzag: A new approach to adaptive online learning.
\newblock In \emph{Proc.\ of the 2017 Annual Conference on Learning Theory
  ({COLT})}, pages 876--924, 2017.

\bibitem[Foster et~al.(2018)Foster, Rakhlin, and Sridharan]{Foster2018}
Dylan~J. Foster, Alexander Rakhlin, and Karthik Sridharan.
\newblock Online learning: Sufficient statistics and the {B}urkholder method.
\newblock In \emph{Conference On Learning Theory, {COLT} 2018, Stockholm,
  Sweden, 6-9 July 2018.}, pages 3028--3064, 2018.

\bibitem[Hazan(2016)]{HazanOCOBook2016}
Elad Hazan.
\newblock Introduction to online convex optimization.
\newblock \emph{Foundations and Trends in Optimization}, 2\penalty0
  (3--4):\penalty0 157--325, 2016.

\bibitem[Hiriart-Urruty and Lemar{\'e}chal(2004)]{Hiriart-Urruty}
Jean-Baptiste Hiriart-Urruty and Claude Lemar{\'e}chal.
\newblock \emph{Fundamentals of Convex Analysis}.
\newblock Springer Science \& Business Media, 2004.

\bibitem[Kempka et~al.(2019)Kempka, Kot{\l}owski, and Warmuth]{KempkaKW19}
Michal Kempka, Wojciech Kot{\l}owski, and Manfred~K. Warmuth.
\newblock Adaptive scale-invariant online algorithms for learning linear
  models.
\newblock In \emph{Proceedings of the 36th International Conference on Machine
  Learning, {ICML} 2019, 9-15 June 2019, Long Beach, California, {USA}}, pages
  3321--3330, 2019.

\bibitem[Kot{\l}owski(2017)]{Kotlowski17}
Wojciech Kot{\l}owski.
\newblock Scale-invariant unconstrained online learning.
\newblock In \emph{International Conference on Algorithmic Learning Theory,
  {ALT} 2017, 15-17 October 2017, Kyoto University, Kyoto, Japan}, pages
  412--433, 2017.

\bibitem[McMahan and Orabona(2014)]{mcmahan2014}
H.~Brendan McMahan and Francesco Orabona.
\newblock Unconstrained online linear learning in {H}ilbert spaces: Minimax
  algorithms and normal approximations.
\newblock In \emph{Conference on Learning Theory}, pages 1020--1039, 2014.

\bibitem[McMahan and Streeter(2010)]{McMahanStreeter2010}
H.~Brendan McMahan and Matthew~J. Streeter.
\newblock Adaptive bound optimization for online convex optimization.
\newblock In \emph{Proceedings of the 23rd Annual Conference on Learning Theory
  ({COLT})}, pages 244--256, 2010.

\bibitem[Mhammedi et~al.(2019)Mhammedi, Koolen, and van Erven]{mhammedi19}
Zakaria Mhammedi, Wouter~M. Koolen, and Tim van Erven.
\newblock Lipschitz adaptivity with multiple learning rates in online learning.
\newblock In \emph{Conference on Learning Theory, {COLT} 2019, 25-28 June 2019,
  Phoenix, AZ, {USA}}, pages 2490--2511, 2019.

\bibitem[Orabona and P{\'a}l(2016{\natexlab{a}})]{orabona2016coin}
Francesco Orabona and D{\'a}vid P{\'a}l.
\newblock Coin betting and parameter-free online learning.
\newblock In \emph{Advances in Neural Information Processing Systems}, pages
  577--585, 2016{\natexlab{a}}.

\bibitem[Orabona and P{\'a}l(2016{\natexlab{b}})]{orabona2016open}
Francesco Orabona and D{\'a}vid P{\'a}l.
\newblock Open problem: Parameter-free and scale-free online algorithms.
\newblock In \emph{Conference on Learning Theory}, pages 1659--1664,
  2016{\natexlab{b}}.

\bibitem[Ross et~al.(2013)Ross, Mineiro, and Langford]{ross2013normalized}
Stephane Ross, Paul Mineiro, and John Langford.
\newblock Normalized online learning.
\newblock In \emph{Proceedings of the Twenty-Ninth Conference on Uncertainty in
  Artificial Intelligence}, pages 537--545, 2013.

\bibitem[Wintenberger(2017)]{Wintenberger2017}
Olivier Wintenberger.
\newblock Optimal learning with {B}ernstein online aggregation.
\newblock \emph{Machine Learning}, 106\penalty0 (1):\penalty0 119--141, 2017.

\end{thebibliography}
	
	\DeclareRobustCommand{\VAN}[3]{#2} 
	\clearpage

	\begin{appendix}
	\section{Proof of Theorem \ref{thm:unconstlearner}}
\label{sec:potentialdiff}
The proof of Theorem \ref{thm:unconstlearner} relies on the following key lemma:
\begin{lemma}
	\label{lem:secondkeylem}
	For $G,g\in \reals$ and, $V>0$, define
	\begin{align}
	\Theta(G,V,g)\coloneqq \frac{\sqrt{V}}{\sqrt{V+g^2}}\cdot \exp\left({\frac{(  G+  g)^2}{2V+2g^2 + 2|  G+  g|} } -\frac{G^2}{  2V +  2|G|} \right)   -  \frac{   g   G   (  |G|  +2   V)}{2(  |G| +  V)^2} -1. \label{eq:keyeq}
	\end{align}
	It holds that $\Theta(G,V,g)\leq 0$, for all $G \in \reals$, $V> 0$, and $g \in[-1,1]$.
\end{lemma}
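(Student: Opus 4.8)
The plan is to treat this as an elementary (if delicate) scalar inequality in the three parameters, with the crux being a polynomial estimate after the transcendental part has been stripped off. First I would use the symmetry $\Theta(-G,V,-g)=\Theta(G,V,g)$ --- every term depends on $(G,g)$ only through the invariants $G^2$, $g^2$, $|G|$, $|G+g|$ and $gG$ --- to reduce to the case $G\ge 0$. Next, writing $\psi(G,V)\coloneqq V^{-1/2}\exp\!\bigl(\tfrac{G^2}{2V+2|G|}\bigr)$, one checks $\partial_G\ln\psi(G,V)=\tfrac{G(|G|+2V)}{2(|G|+V)^2}$ for $G\ge0$, so that
\[
\Theta(G,V,g)\;=\;\frac{\psi(G+g,\,V+g^2)}{\psi(G,V)}-\bigl(1+g\,\partial_G\ln\psi(G,V)\bigr).
\]
In particular $\Theta(G,V,0)=0$ and $\partial_g\Theta(G,V,g)\big|_{g=0}=0$ --- the latter being exactly the first-order stationarity that the {\freegrad} iterate \eqref{eq:predictunbounded} was built to enforce --- so the lemma says precisely that $g\mapsto\Theta(G,V,g)$ is maximized on $[-1,1]$ at $g=0$, with value $0$. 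Since $0\le\partial_G\ln\psi(G,V)<\tfrac12$ (the inequality is equivalent to $0<V^2$), we have $1+g\,\partial_G\ln\psi(G,V)\ge 1-\tfrac12|g|\ge\tfrac12>0$, so all quantities are positive and we may take logarithms: the claim becomes $H(g)\ge0$ for $g\in[-1,1]$, where
\[
H(g)\;\coloneqq\;\ln\!\bigl(1+g\,\partial_G\ln\psi(G,V)\bigr)+\tfrac12\ln\!\bigl(1+g^2/V\bigr)-\frac{(G+g)^2}{2\bigl(V+g^2+|G+g|\bigr)}+\frac{G^2}{2(V+G)},
\]
and $H(0)=H'(0)=0$.

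Then I would reduce $H\ge0$ to purely rational inequalities. The kink of $\psi$ in its first argument forces a split on the sign of $G+g$ (note $G+g<0$ can occur only if $g<-G$, hence $G\in[0,1)$), and the sharpest one-sided bound on $\ln(1+u)$ depends on the sign of $g$, so I expect roughly four cases. In each, I would bound the two logarithmic terms by estimates that are tight to second order at $g=0$: $\ln(1+v)\ge\frac{v}{1+v}$ for the $\tfrac12\ln(1+g^2/V)$ term, and, with $u\coloneqq g\,\partial_G\ln\psi(G,V)$ (so $|u|<\tfrac12$), either $u-\ln(1+u)\le\tfrac12u^2$ when $u\ge0$ or $u-\ln(1+u)\le\frac{u^2}{2(1+u)}$ when $u\le0$. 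For instance, in the case $G+g\ge0$, writing $A\coloneqq V+G+g+g^2$ and $B\coloneqq V+G$, a short manipulation (using $\tfrac{Bcg}{A}=cg-\tfrac{cg^2(1+g)}{A}$, with $c=\partial_G\ln\psi(G,V)$) rewrites the target as
\[
u-\ln(1+u)\;\le\;\frac{u\,g(1+g)}{A}-\frac{g^2\,(V+G-G^2)}{2AB}+\frac{g^2}{2(V+g^2)},\qquad u=g\,\partial_G\ln\psi(G,V),
\]
and inserting the $\ln$-bound and clearing the manifestly positive denominators turns this into a polynomial inequality in $(G,V,g)$.

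The main obstacle is verifying these polynomial inequalities. They are genuinely multivariate and resist naive term-by-term sign-chasing: one cannot simply argue by convexity of $H$ (this fails, e.g., for small $V$ and $|g|$ bounded away from $0$, where $V-g^2<0$), and the sign of $V+G-G^2$ varies. My plan here is to treat each polynomial as a low-degree polynomial in one distinguished variable (e.g.\ in $g$ on $[0,1]$, or in $G$), arranging a further case split --- in particular on whether $G\le\tfrac12(1+\sqrt{1+4V})$, i.e.\ whether $V+G-G^2$ is nonnegative --- and using $|g|\le1$ so that the remaining coefficients become sign-definite; the degenerate cases $G=0$ (where the bound collapses pleasantly to $\ln(1+g^2/V)\ge\frac{g^2}{V+g^2}\ge\frac{g^2}{V+g^2+|g|}$), $V\downarrow0$, and $|g|=1$ should be checked directly. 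A point to be careful about is that the elementary $\ln$-estimates must be kept tight to second order at $g=0$, since $H$ vanishes there to second order; in particular the cruder bound $\ln(1+u)\ge u-u^2$ is too lossy (it would force a false second-order condition) and must be avoided in favour of the sign-dependent bounds above.
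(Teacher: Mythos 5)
Your reductions are all correct, and up to sign your $H$ is exactly the function $\Gamma$ that the paper analyzes: the symmetry $\Theta(-G,V,-g)=\Theta(G,V,g)$ justifying $G\ge 0$, the identity $\partial_G\ln\psi(G,V)=\tfrac{G(|G|+2V)}{2(|G|+V)^2}$, the positivity $1+g\,\partial_G\ln\psi>\tfrac12$ that licenses taking logarithms, and the observation $H(0)=H'(0)=0$ all check out (I also verified your algebraic identity $\tfrac{(G+g)^2}{2A}-\tfrac{G^2}{2B}=u-\tfrac{ug(1+g)}{A}+\tfrac{g^2(V+G-G^2)}{2AB}$ in the case $G+g\ge0$). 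Where you genuinely diverge from the paper is the endgame: the paper proves $\Gamma\le 0$ by showing $\partial\Gamma/\partial V\ge 0$ and invoking $\lim_{V\to\infty}\Gamma=0$, which reduces the problem to nonnegativity of six explicit polynomials $\alpha_0,\dots,\alpha_5$ in $(g,G)$; you instead propose to lower-bound the two logarithms by rational functions tight to second order at $g=0$ and case-split. Your route has the advantage of not requiring the limit argument and of making the second-order tightness at $g=0$ structurally visible, but it trades one family of polynomial inequalities for another.

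The gap is that the decisive step --- certifying the resulting polynomial inequalities over the whole domain $G\ge 0$, $V>0$, $g\in[-1,1]$ --- is never carried out; you describe a plan (``treat each polynomial as a low-degree polynomial in one distinguished variable \dots so that the remaining coefficients become sign-definite'') but produce neither the polynomials nor the verification. This cannot be waved through as routine, for two reasons. First, the inequality is degenerate not only at $g=0$ but also as $V\to\infty$, where the $O(1/V)$ contributions of all four terms of $H$ cancel identically, so every bound you substitute must preserve that cancellation as well; your bound $\tfrac12\ln(1+g^2/V)\ge\tfrac{g^2}{2(V+g^2)}$ costs an extra $\tfrac{g^4}{4V^2}$, which is of the same order as the quantity being controlled. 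Second, the slack that remains after your bounds is vanishingly small in places: for example at $G=1$, $g=-1$ (so $G+g=0$) your chain leaves exactly the margin $\tfrac{1}{2(V+1)^2}-\tfrac{c^2}{2(1-c)}=\tfrac{1}{4(V+1)^2(2V^2+2V+1)}$, i.e.\ the inequality you must prove holds by an $O(V^{-4})$ whisker while the individual terms are $O(V^{-2})$. This is precisely the kind of razor-thin multivariate positivity that the paper itself could only dispatch with computer algebra (Mathematica's \texttt{FullSimplify} on the $\alpha_i$), and the paper's authors explicitly lament that no more conceptual argument was found. Until you exhibit the final polynomial inequalities for each of your cases and verify them (by hand, by SOS, or by a certified computation), the proof is not complete, and there is a real risk that one of the chosen elementary bounds is too lossy in some corner of the domain, forcing a different estimate and a redone case analysis.
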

\begin{proof}
	For notational simplicity we assume $G \ge 0$. Let us look at
	\begin{align}
	\label{eq:thegamma}
	\Gamma(G,V,g)
	~\df~
	\frac{1}{2}\frac{(g+G)^2}{V+g^2 + |  G+  g|}
	-\frac{1}{2}\frac{G^2}{G +V}
	- \ln \del*{1+\frac{g G (G +2 V)}{2 (G +V)^2}}
	-\frac{1}{2} \ln \del*{1+\frac{g^2}{V}}.
	\end{align}
	Since $\ln$ is increasing, we have that $\Theta \leq 0$, if and only if, $\Gamma \leq 0$, and so we want to show $\Gamma \le 0$ for all $V > 0, G \ge 0$, and $g \in [-1,1]$. Our approach will be to show that $\Gamma$ is increasing in $V$. The result then follows from $\lim_{V \to \infty} \Gamma ~=~ 0$. It remains to study the derivative
	\begin{align}
	\frac{\partial  \Gamma}{\partial V}
	&~=~
	- \frac{1}{2} \frac{(g+G)^2}{\left(\left| g+G\right| +g^2+V\right)^2}
	+ \frac{2 g G V}{(G +V) (2 (G +V)^2+g G (G +2 V))} \\ 
	& \quad + \frac{1}{2} \frac{G^2}{(G+V)^2}	+ \frac{1}{2} \frac{g^2}{g^2 V+V^2}.
	\end{align}
	Factoring this as a ratio of polynomials, we obtain:
	\begin{align}
	\frac{\partial  \Gamma}{\partial V}
	~=~
	\frac{
		\alpha_0 + \alpha_1 V + \alpha_2 V^2 + \alpha_3 V^3 + \alpha_4 V^4 + \alpha_5 V^5
	}{
		2 V \left(g^2+V\right) (G+V)^2 \left((g+2) G^2+2 (g+2) G V+2 V^2\right) \left(\left| g+G\right| +g^2+V\right)^2
	},
	\end{align}
	where $\alpha_i, i\in[5]$, are polynomials in $g$ and $G$ whose explicit (yet gruesome) expressions are:
	\begin{align*}
	\alpha_0 &~=~ g^2 (g+2) G^4 \left(\left| g+G\right| +g^2\right)^2
	\\
	\alpha_1 &~=~ g^2 (g+2) G^3 \left(2 \left(g^2 (G+4)+G\right) \left| g+G\right| +g^4 (G+4)+2 g^2 (G+2)+8 g G+4 G^2\right)
	\\
	\alpha_2 &~=~ g^2 G^2 \left(
          \begin{aligned}
            &2 \left(g^3 (2 G+9)+4 g^2 (G+3)+2 g G (G+2)+4 G (G+2)\right) \left| g+G\right| \\
            &+3 g^2 (3 g+4) \left(g^2+1\right)-2 (g+2) G^3\\
            &+(g (g (3 g+2)+2)+14) G^2+2 g (g (g (g (g+2)+3)+15)+12) G
          \end{aligned}
        \right)
	\\
	\alpha_3 &~=~ G \left(
          \begin{aligned}
            &2 \left(6 g^5+2 g^4 (G+4)+g^3 G (4 G+13)+4 g^2 G (2 G+3)+g G^3+2 G^3\right) \left| g+G\right| \\
            &+(g (g+2) (2 g+13)+20) g^3 G+2 \left(g \left(3 g^2+g-6\right)+10\right) g^2 G^2\\
            &+2 (3 g+4) \left(g^2+1\right) g^4-2 (g+2) G^4-2 (g (g+4)+2) g G^3
          \end{aligned}
        \right)
	\\
	\alpha_4 &~=~ 2 \left(
          \begin{aligned}
            &2 \left(g^4+(5 g+4) g^2 G+(g+2) G^3+2 (g+1) g G^2\right) \left| g+G\right| +g^6 +g^4\\
            & +(g (7 g+4)+4) g^3 G+((g-2) g+6) g^2 G^2-2 (g+2) G^4+(g (2 g-3)-8) g G^3
          \end{aligned}
        \right)
	\\
	\alpha_5 &~=~ 2 \left(2 \left| g+G\right| ^3+g^4+4 g^3 G-6 g G^2-2 G^3\right)
      \end{align*}
      Under our assumptions $V>0$, $G>0$ and $g\in [-1,1]$, the denominator of $\partial \Gamma/\partial V$ above is positive. Furthermore, its numerator, regarded as a polynomial in $V$, has exclusively positive coefficients $\alpha_i \ge 0$, as can be verified using computer algebra software (we used Mathematica's \texttt{FullSimplify}---see Figure \ref{fig:mat}). This implies that $\partial \Gamma/\partial V\geq 0$, for all $G\in \reals$, $V> 0$, and $g\in [-1,1]$, and so $\Gamma \leq \lim_{V\to \infty}  \Gamma =0$.
\end{proof}

	\begin{figure}
	\includegraphics[trim=2cm 9.5cm 2cm 2cm, clip, width=\textwidth]{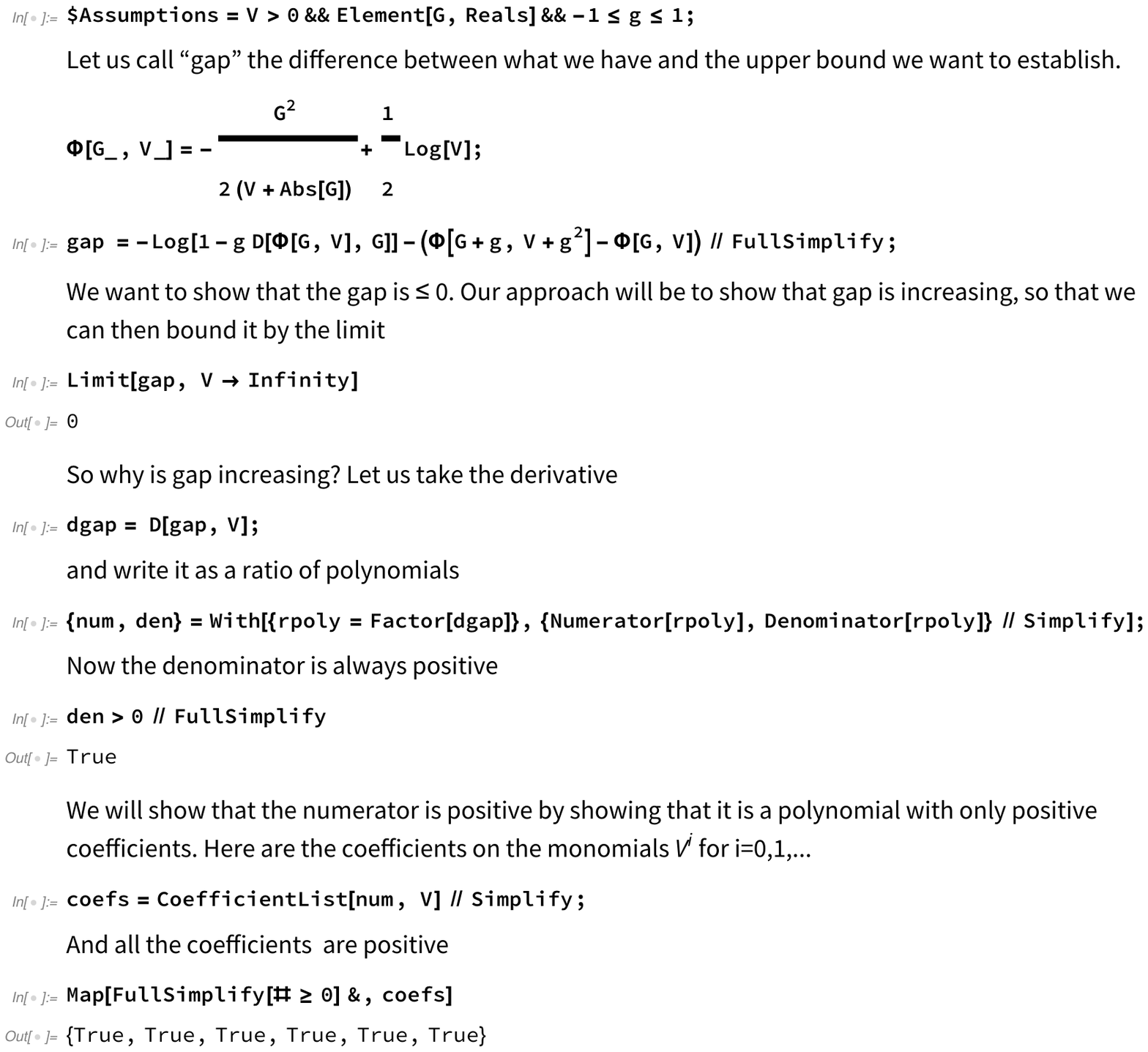}
	\caption{Mathematica notebook in support of Theorem~\ref{thm:unconstlearner}.}
	\label{fig:mat}
\end{figure}

We are ourselves a bit disgruntled about the opacity of the above proof. On the one hand, it is just a tedious verification of an analytic statement about a function of three scalar variables, and one might expect that tighter statements require more sophisticated techniques \cite[c.f.][Appendix~F]{Kotlowski17}. It is quite plausible that positivity may be established in a somewhat more streamlined fashion using Sum-of-Squares techniques. Yet on the other hand, we were hoping to gain, from the proof, a deeper insight into the design of potential functions. Unfortunately this did not materialize. In particular, we still do not know how to address the multi-dimensional case of our Section~\ref{sec:multidim} with a similar potential. Controlling the intuitive upgrade of \eqref{eq:potential} where the exponential is replaced by
\[
%  \Phi(\G, \bm V, L)
%  ~=~
  \exp \del*{
    \sup_{\u} \frac{(\u^\top \G)^2}{2 (\u^\top \bm V \u + L \norm{\u} \cdot \abs{\u^\top \G})}
    - \frac{1}{2} \ln \det \bm V
  }
\]
is impossible, as witnessed by numerical counterexamples returned by random search, already in dimension $2$.

We need one more result before we prove Theorem \ref{thm:unconstlearner}:
\begin{lemma}
	\label{lem:firstterm}
Let $G, s \in \reals$ and $V, h\geq 0$. Then, the function
\begin{align}
g \mapsto \frac{1}{\sqrt{V + g^2}} \exp \left( \frac{g^2+ 2 s + G^2}{2 V + 2 g^2 + 2 h \sqrt{g^2 + 2 s + G^2}}  \right),
\end{align}	
is non-increasing on $\{ g\geq 0 \mid  g^2 +2 s + G^2\geq 0 \}$.
\end{lemma}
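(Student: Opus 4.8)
The plan is to pass to the variable $a := g^2$, take logarithms, and reduce the monotonicity claim to an elementary inequality among nonnegative quantities. Since $g \mapsto g^2$ is strictly increasing on $g \ge 0$, and the function in the statement depends on $g$ only through $g^2$ (both $V + g^2$ and $g^2 + 2s + G^2$ do), it suffices to prove that
\[
\tilde F(a) \;:=\; \frac{1}{\sqrt{V+a}}\exp\!\left(\frac{a+c}{2V+2a+2h\sqrt{a+c}}\right), \qquad c := 2s+G^2,
\]
is non-increasing on the interval $\{a \ge 0 : a + c \ge 0\}$, i.e.\ on $[\max\{0,-c\},\infty)$. On this interval $V + a > 0$ (otherwise $\tilde F$ is undefined), so it is legitimate to study $\ln \tilde F$.

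Next I would write $\ln \tilde F(a) = -\tfrac12\ln(V+a) + E(a)$ with $E(a) = \tfrac{u}{2w}$, where $u := a+c \ge 0$ and $w := V + a + h\sqrt{u} > 0$, and differentiate. Using $u' = 1$ and the key observation that $u\,w' = u + \tfrac{h}{2}\sqrt u$ (so the $\sqrt u$ singularity of $w'$ never actually appears), one gets $E'(a) = \tfrac{1}{2w^2}\bigl(w - u - \tfrac{h}{2}\sqrt u\bigr)$. Now introduce the shorthand $p := V + a > 0$ and $q := h\sqrt u \ge 0$; the identity $V - c = p - u$ gives $w - u = (p-u) + q$, hence $w = p+q$ and $w - u - \tfrac h2\sqrt u = (p-u) + \tfrac q2$, so
\[
\frac{d}{da}\ln \tilde F(a) \;=\; -\frac{1}{2p} + \frac{(p-u) + q/2}{2(p+q)^2}.
\]
Because $p > 0$ and $(p+q)^2 > 0$, this is $\le 0$ if and only if $p\bigl[(p-u) + \tfrac q2\bigr] \le (p+q)^2$. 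This last inequality is immediate: since $pu \ge 0$ the left side is at most $p^2 + \tfrac{pq}{2}$, and since $p, q \ge 0$ we have $p^2 + \tfrac{pq}{2} \le p^2 + 2pq + q^2 = (p+q)^2$.

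Therefore $\ln \tilde F$, and hence $\tilde F$, is non-increasing on the interior of its domain; by continuity of $\tilde F$ on the closed domain this extends to the left endpoint $a + c = 0$, and unwinding the substitution $a = g^2$ yields the claim. I do not anticipate a real obstacle here — the decisive step is just the sign inequality $p[(p-u)+q/2] \le (p+q)^2$ for nonnegative $p,q,u$, in pleasant contrast to the delicate computer-assisted Lemma~\ref{lem:secondkeylem}. The only points requiring a little care are (i) verifying $V + a > 0$ throughout the domain so that logarithms and the division by $p$ are valid, and (ii) the mild non-smoothness of $\sqrt{a+c}$ at the left endpoint of the domain, both of which are routine.
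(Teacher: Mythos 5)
Your proof is correct and follows essentially the same route as the paper's: take the logarithm, differentiate, and verify the derivative is non-positive (indeed, your expression $-\tfrac{1}{2p}+\tfrac{(p-u)+q/2}{2(p+q)^2}$ agrees exactly with the paper's factored derivative of $\Xi$ after the chain rule for $a=g^2$). Your substitution and the shorthand $p,q,u$ merely make the final sign check $p[(p-u)+q/2]\le (p+q)^2$ more transparent than the paper's direct presentation of the factored ratio.
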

\begin{proof}
	It suffices to show that the function
	\begin{align}
	  \Xi(g) \coloneqq  \frac{g^2+ 2 s + G^2}{2 V + 2 g^2 + 2 h \sqrt{g^2 + 2 s + G^2}} - \frac{1}{2}\ln \left(V + g^2 \right),
	\end{align}	
	is non-increasing on $\{ g\geq 0 \mid  g^2 +2 s + G^2\geq 0 \}$. Evaluating the derivative of $\Xi$, we find that 
	\begin{align}
	\frac{\mathrm d  \Xi}{\mathrm d  g}(g) = -\frac{g N \cdot  \left(2 h^2 N+2 V
		N+2 g^2 N+3 g^2 h+3 h V\right)}{2
		\left(g^2+V\right) \left(h N+g^2+V\right)^2}, \label{eq:deriv}
	\end{align}
	where $N\coloneqq \sqrt{g^2+G^2+2s}$. The derivative in \eqref{eq:deriv} is non-positive for all $V, h\geq 0$ and $g \geq 0$ such that $g^2 + 2 s + G^2 \geq 0$.
	\end{proof}
\bigskip
\begin{proof}{\textbf{of Theorem \ref{thm:unconstlearner}.}}
	We will proceed by induction. By the fact that $\|\G_0\| =0$ and the definition of the potential in \eqref{eq:potential}, we have $\Phi_0=h_1$. Now let $t\geq 0$, $h_1>0$, and $(S_t,V_t,h_t,\G_{t})\in  \reals \times \reals_{\geq 0} \times   \reals_{> 0}\times  \reals^d$. We will show that $\Phi_{t+1} -\Phi_t\leq 0$. First, note that for any $h_{t+1}\geq h_t$, we have
	\begin{align}
	S_t + \frac{h_1^2}{\sqrt{\eps V_t}} \cdot   \exp\left( \frac{ \| \mathbf{G}_t \|^2}{2V_t + 2h_{t+1}  \|\mathbf{G}_t \| }  \right) \leq  \Phi_t = S_t + \frac{h_1^2}{\sqrt{\eps V_t}} \cdot   \exp\left( \frac{ \| \mathbf{G}_t \|^2}{2V_t + 2h_{t}  \|\mathbf{G}_t \| }  \right). \label{eq:potineq}
	\end{align}
	Thus, for any $\g_{t+1}\in \reals^d$ such that $\|\g_{t+1}\|\leq h_{t+1}$, and $(S,V,h,\G,\g)\coloneqq (S_t,V_t,h_{t+1},\G_t,\g_{t+1})$,
	\begin{align}
	\Phi_{t+1} - \Phi_t &\leq \frac{h_1^2}{\sqrt{\eps V+\eps \|\g\|^2}} \cdot \exp \left(\frac{\|\grad+\G\|^2}{ 2 V+2\|\grad\|^2+2 h  \| \grad+\G\| }\right)  \\   &\quad  -\left(1+ \frac{  \inner{\grad}{\G}  \cdot (2 h \| \G\| +2  V)}{2(h \| \G\| + V)^2} \right) \cdot\frac{h_1^2}{\sqrt{\eps V}}  \cdot  \exp\left({\frac{\|\G\|^2}{2V+ 2h  \| \G\| }} \right). \label{eq:diff000}
	\end{align}
	Let $\g_\star$ be the vector $\g \in \mathcal{B}_h$ which maximizes the RHS of \eqref{eq:diff000}, where $\mathcal{B}_h$ is the ball in $\mathbb{R}^d$ of radius $h$. Suppose that $\G\neq \bm 0$, and let $\mathcal{H} \coloneqq \{\g \in \reals^d \mid \inner{\g}{\G} = \inner{\g_\star}{\G}\}$. Note that within the hyperplane $\mathcal{H}$, only the first term on the RHS of \eqref{eq:diff000} varies. Since $\g_\star$ is the maximizer of the RHS of \eqref{eq:diff000} within $\mathcal{B}_h$, instantiating Lemma \ref{lem:firstterm} with $s\coloneqq \inner{\g_\star}{\G}$ and $G\coloneqq \|\G\|$, implies that $\g_\star \in \argmin \{ \|\g\| \mid \g \in \mathcal{H}\}$. Adding this to the fact that $\mathcal{H}$ is a hyperplane orthogonal to $\G$ implies that $\g_\star$ and $\G$ must be aligned, \emph{i.e.} there exists a $c_\star\in \reals$ such that $\g_\star =c_\star \G/\|\G\|$.  
      %  The gradient of the RHS of \eqref{eq:diff000} with respect to $\g$ involves only a linear combination of $\g$ and $\G$, and this remains true even if one adds a Lagrange penalty $\lambda \cdot (h -\|\g\|^2)$, for some $\lambda \in \reals$, to the RHS. This, implies that either $\G=\bm{0}$ or there exists a $c\in \reals$ such that the supremum of the RHS of \eqref{eq:diff000} with respect to $\g$'s in the ball of radius $h$, is attained at $\g_\star = c  \G$. 
 Therefore, we have $\|\g_\star+\G\|=|g_\star+G|$, where  $$g_\star\coloneqq \left\{ \begin{matrix} c_\star, &  \text{if } G>0;\\   \|\g_\star\|,  & \text{otherwise}. \end{matrix}\right.$$ 
        Further, note that $|g_\star|\leq h$. Thus, the RHS of \eqref{eq:diff000} is bounded from above by
	\begin{align}
	\Delta & \coloneqq	\frac{h_1^2}{\sqrt{\eps V+\eps g_\star^2}} \cdot \exp \left(\frac{(g_\star+G)^2}{ 2 V+2 g_\star^2+2 h |g_\star+G| }\right)  \\ \quad & -\left(1+ \frac{  g_\star G \cdot (2  V+2 h |G|)}{2(V+h|G|)^2} \right) \cdot\frac{h_1^2}{\sqrt{\eps V}}  \cdot  \exp\left({\frac{G^2}{2V+ 2h  |G| }} \right).\label{eq:delta}
	\end{align}
Note that $\Delta$ in \eqref{eq:delta} can be written in terms of the function $\Theta$ in Lemma \ref{lem:secondkeylem} as:
	\begin{align}
	\Delta = \frac{h_1^2}{\sqrt{\eps V}} \cdot \exp\left(\frac{G^2}{2V+2hG}\right) \cdot \Theta\left(\frac{G}{h},\frac{ V}{h^2},\frac{g_\star}{h}\right).\label{eq:rewrite}
	\end{align}
	Since $(G/h,V/h^2,g_\star/h)\in \reals \times \reals_{>0}\times [-1,1]$, Lemma \ref{lem:secondkeylem} implies that $\Theta(G/h,V/h^2,g_\star/h)\leq 0$, and so due to \eqref{eq:delta}, we also have $\Delta\leq 0$. Since $\Delta$ is an upper-bound on the RHS of \eqref{eq:diff000}, it follows that $\Phi_{t+1}-\Phi_t\leq 0$ as desired.  
\end{proof}

	\section{Proofs of Section \ref{sec:commonVgrad}}
	\label{sec:proofs}
	\subsection{Proof of Theorem \ref{thm:firstbound}}
	\label{sec:epsilon}
The proof of Theorem \ref{thm:firstbound} follows from the next theorem by setting $\epsilon=1$. Theorem \ref{thm:scaled} essentially gives the regret bound of \freegrad\ if its outputs $(\what\w_t)$ are scaled by a constant $\epsilon>0$. This will be useful to us later.
	\begin{theorem}
		\label{thm:scaled}
	Let $\epsilon>0$, and $\what \u_t \coloneqq \what \w_t/\epsilon$, for $(\what\w_{t})$ as in \eqref{eq:predictunbounded}.	Then, under Assumptions~\ref{assum:assum2} and~\ref{assum:assum1}:
		\begin{align*}
		\sum_{t=1}^T \inner{\g_t}{\what \u_t-\w}  \leq  \left[ 2 \|\w\| \sqrt{V_T\ln_+ \left(\frac{ 2 \epsilon \|\w\| V_T }{h_1^2} \right)}   \right]  \vee \left[ 4 h_T \|\w\|  \ln \left(  \frac{4 h_T\epsilon \|\w\| \sqrt{V_T}  }{ h_1^2} \right)   \right]+\frac{h_1}{\epsilon} , 
		\end{align*}
		for all $\w \in \cW = \reals^d, T\geq 1$.
	\end{theorem}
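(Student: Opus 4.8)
The plan is to run the standard regret--reward duality argument, keeping the external scaling factor $\epsilon$ explicit throughout. By Theorem~\ref{thm:unconstlearner}, $\Phi_T \le \Phi_0 = h_1$ for the unscaled iterates $(\what\w_t)$ of \eqref{eq:predictunbounded}. Writing $\Phi_T = S_T + \Psi_T(\G_T)$ with $\Psi_T(\G) = \tfrac{h_1^2}{\sqrt{V_T}}\exp\!\big(\tfrac{\|\G\|^2}{2V_T + 2h_T\|\G\|}\big)$ and $S_T = \sum_{t=1}^T \inner{\g_t}{\what\w_t}$, this rearranges to $\sum_{t=1}^T \inner{\g_t}{\what\w_t} \le h_1 - \Psi_T(\G_T)$. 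Dividing by $\epsilon$ and using $\what\u_t = \what\w_t/\epsilon$ gives $\sum_{t=1}^T \inner{\g_t}{\what\u_t} \le \tfrac{h_1}{\epsilon} - \tfrac1\epsilon\Psi_T(\G_T)$.

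Next I would subtract $\inner{\G_T}{\w}$ from both sides and invoke Fenchel's inequality: for $\z = \G_T$, $-\tfrac1\epsilon\Psi_T(\G_T) - \inner{\G_T}{\w} \le \big(\tfrac1\epsilon\Psi_T\big)^\star(-\w) = \tfrac1\epsilon\Psi_T^\star(-\epsilon\w)$, where the last identity is the positive-scaling rule for conjugates, $(\lambda f)^\star(y) = \lambda f^\star(y/\lambda)$, applied with $\lambda = 1/\epsilon$. Hence $\sum_{t=1}^T \inner{\g_t}{\what\u_t - \w} \le \tfrac{h_1}{\epsilon} + \tfrac1\epsilon\Psi_T^\star(-\epsilon\w)$. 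Since $\Psi_T$ depends on $\G$ only through $\|\G\|$, both it and $\Psi_T^\star$ are radially symmetric, so $\Psi_T^\star(-\epsilon\w) = \psi_T^\star(\epsilon\|\w\|)$, where $\psi_T^\star(r) = \sup_{x \ge 0}\big\{ xr - \tfrac{h_1^2}{\sqrt{V_T}}\exp\!\big(\tfrac{x^2}{2V_T + 2h_T x}\big)\big\}$ is the one-dimensional conjugate.

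The remaining and genuinely technical step is a scalar inequality: I would show $\psi_T^\star(r) \le 2r\sqrt{V_T\,\ln_+\!\big(\tfrac{2 r V_T}{h_1^2}\big)} \;\vee\; 4 h_T r\,\ln\!\big(\tfrac{4 h_T r\sqrt{V_T}}{h_1^2}\big)$, which upon substituting $r = \epsilon\|\w\|$ and dividing by $\epsilon$ yields exactly the two displayed terms, the additive $h_1/\epsilon$ being $\Phi_0/\epsilon$. To prove it, I would use $2V_T + 2h_T x \le 4\max\{V_T, h_T x\}$ so that the exponent is at least $\tfrac{x^2}{4V_T}$ when $h_T x \le V_T$ and at least $\tfrac{x}{4 h_T}$ when $h_T x \ge V_T$; accordingly the supremum over $x$ splits into a ``Gaussian'' regime, handled by the usual bound on the conjugate of $x \mapsto a\exp(x^2/c)$, and a ``Lipschitz'' regime, handled by the bound on the conjugate of $x \mapsto a\exp(x/c)$. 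Each of these follows from the potential-function calculus of \cite{mcmahan2014,cutkosky2018,cutkosky2019artificial}, and combining them (and checking that the transition region $h_T x \approx V_T$ does not fall through the cracks) produces the stated maximum.

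The main obstacle is precisely this last scalar bound: pinning down the exact constants ($2$ and $4$), obtaining $\ln_+$ rather than $\ln$ in the first branch, and arguing that the two regimes glue together without a gap near $h_T x = V_T$. The presence of the $2 h_T x$ term in the denominator of the exponent is what forces the max-of-two-terms shape and is the part that needs care. Everything upstream---the potential bound from Theorem~\ref{thm:unconstlearner}, the division by $\epsilon$, the Fenchel step, and the reduction to one dimension---is routine.
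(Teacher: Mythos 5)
Your proposal follows essentially the same route as the paper's proof: potential control via Theorem~\ref{thm:unconstlearner}, division by $\epsilon$, Fenchel's inequality with the positive-scaling rule $(\lambda f)^\star(\y)=\lambda f^\star(\y/\lambda)$, and reduction to a one-dimensional conjugate by radial symmetry. The only divergence is the final scalar step: where you propose to re-derive the bound on $\psi_T^\star$ by splitting the denominator $2V_T+2h_T x$ into a ``Gaussian'' and a ``Lipschitz'' regime, the paper instead quotes existing conjugate bounds (Lemmas 18--19 of \cite{cutkosky2018} and Lemma 18 of \cite{orabona2016coin}), which express $\psi_T^\star$ via the Lambert function $W$, and then applies $(W(x))^{1/2}-(W(x))^{-1/2}\le\sqrt{\ln_+ x}$ (Lemma~\ref{eq:Xfn}) to obtain exactly the two displayed terms with the constants $2$ and $4$. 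Your regime-splitting plan is plausible but, as you note, leaves the constants and the transition region unverified; citing the Lambert-function lemmas closes that gap directly and is the intended finish.
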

\begin{proof}
Since the assumptions of Theorem \ref{thm:unconstlearner} are satisfied, we have 
	\begin{align}
\Phi_T = \sum_{t=1}^T \what \w_t^\top \g_t  +\frac{h_1^2}{\sqrt{\eps V_T}} \cdot   \exp\left( \frac{  \|\G_T\| ^2}{2V_T + 2h_{T} \|\G_T\| }  \right) \leq  \Phi_0=h_1, \label{eq:potentialineq0}
\end{align}
	Dividing both sides of \eqref{eq:potentialineq0} by $\epsilon>0$ and rearranging yields  
\begin{align}
\sum_{t=1}^T \what \u_t^\top \g_t  \leq  \frac{h_1}{\epsilon}-  \Theta_T(\G_T), \quad \text{where} \ \  \Theta_T(\G) \coloneqq  \frac{h_1^2}{\epsilon\sqrt{\eps V_T}} \cdot   \exp\left( \frac{  \|\G\|^2}{2V_T + 2h_{T}  \|\G\| }  \right), \G\in \reals^d, \label{eq:potentialineqq0}
\end{align}
By duality, we further have that 
\begin{align}
\sum_{t=1}^T \what \u_t^\top \g_t  \leq  \frac{h_1}{\epsilon}+ \w^\top \G_T +  \Theta^{\star}_T(-\w), \quad \text{for all } \w\in \reals^d. \label{eq:start0}
\end{align}
Since $\Theta_T(\G) = \Psi_T(\G)/\epsilon$, for all $\G\in \reals^d$, where $\Psi_T$ is the function defined in \eqref{eq:dual}, we have by the properties of the Fenchel dual \cite[Prop. 1.3.1]{Hiriart-Urruty} that 
\begin{align}
\Theta_T^\star(\w) =	\Psi^\star_T( \epsilon \w)/\epsilon, \quad \text{for all } \w \in \reals^d. \label{eq:dualed}
\end{align}
We now bound $\Psi^\star_T(-\w)$ from above, for $\w\in \reals^d$. For this, note that $\Psi_T(\G) = \psi_T(\|\G\|/h_T)$, for $\G\in \reals^d$, where 
\begin{align}\psi_T(x) \coloneqq \frac{h_1^2}{\sqrt{V_T}} \cdot \exp\left(\frac{x^2}{2V_T/h_T^2 + 2|x|}\right).\label{eq:psi} \end{align} Thus, according to \cite[Lemma 3]{mcmahan2014} and the properties of duality \cite[Prop. E.1.3.1]{Hiriart-Urruty}, we have \begin{align} \label{eq:relation}\Psi^\star_T(-\w)=   \psi^\star_T(h_T \|\w\|).\end{align} On the other hand, \cite[Lemma  18, 19]{cutkosky2018} and \cite[Lemma  18]{orabona2016coin} provides the following upper-bound on $\psi^\star_T(u)$, $u\in \reals$, using the Lambert function $W$ (where $W(x)$ is defined as the principal solution to $W(x) e^{W(x)} = x$): \begin{gather}   \psi^\star_T(u)\leq  \Lambda_T(u) \vee \left( 4 u \cdot \ln \left(  \frac{4 u \sqrt{V_T}  }{ h_1^2} \right)   \right),\label{eq:dualupper} \\
\text{where} \quad  \Lambda_T(y) \coloneqq y \sqrt{ \frac{2 V_T}{h_T}}  \cdot \left( \left(W\left(c_T^2 y^2 \right)\right)^{1/2} - \left(W\left(c_T^2 y^2 \right)\right)^{-1/2} \right), y\in \reals,
\end{gather}
and $c_T \coloneqq {\sqrt{2} V_T }/{(h_T h_1^2)}$. Using the fact that the Lambert function satisfies $(W(x))^{1/2} -(W(x))^{-1/2}\leq \sqrt{\ln_+ x}$, for all $x\geq 0$ (see Lemma \ref{eq:Xfn}), together with \eqref{eq:dualupper} and \eqref{eq:relation} implies that 
\begin{align}
\Psi_T^\star(-\w)  \leq   \left[ 2 \|\w\| \sqrt{V_T\ln_+ \left(\frac{ 2\|\w\| V_T }{h_1^2} \right)}   \right]  \vee \left[ 4 h_T \|\w\|  \ln \left(  \frac{4 h_T \|\w\| \sqrt{V_T}  }{ h_1^2} \right)   \right],\label{eq:upperbound}
\end{align}
for all $\w\in \reals^d$. Combining this with \eqref{eq:start0} and \eqref{eq:dualed} leads to the desired regret bound.
	\end{proof}

\begin{proof}{\textbf{of Theorem \ref{thm:firstbound}.}}
Invoke Theorem \ref{thm:scaled} with $\epsilon=1$.
\end{proof}

\subsection{Proof of Theorem \ref{thm:freegrad1}}
\begin{proof}
Fix $\w\in \reals^d$ and let $k\geq 1$ be the total number of epochs. We denote by $\tau_i\geq 1$ the start index of epoch $i\in[k]$. Further, for $\tau,\tau' \in \mathbb{N}$, we define $\tilde\tau \coloneqq \tau -1$ and $V_{\tau:\tau'}\coloneqq h_\tau^2+ \sum_{s=\tau}^{\tilde\tau'} \|\g_s\|^2$ (note how the upper index is exclusive). %For $\tau \in \mathbb{N}$, we will also denote $\bar{\tau}\coloneqq \tau +1$ and $\underline{\tau}=\tau-1$.
	Recall that at epoch $i\in[k]$, the restart condition in Algorithm \ref{alg:wrapper} is triggered at $t=\tau_{i+1}> \tau_{i}$ only if 
	\begin{align}
	\frac{h_t}{h_{\tau_i}}  > \sum_{s=1}^{t-1} \frac{\|\g_s\|}{h_s} +2\geq  \sum_{s=1}^t \frac{\|\g_s\|}{h_s}, \label{eq:cond}
	\end{align}
where the last inequality follows by Assumption \ref{assum:assum2}.	We note that \eqref{eq:cond} also implies that
\begin{align}
h_{{\tau}_{i+1}}  > 2 h_{\tau_{i}}, \quad \text{for all } i\in[k]. \label{eq:doubling}
\end{align}
On the other hand, within epoch $i\in[k]$, $\frac{h_t}{h_{\tau_i}} \leq \sum_{s=1}^{t-1} \frac{\|\g_s\|}{h_s} +2$, for all  $\tau_{i}  \leq  t\leq  \tilde{\tau}_{i+1}$, and thus
\begin{align}
 \frac{h_{\tilde\tau_{i+1}}}{h_{\tau_{i}}} \leq \frac{\sqrt{V_{\tau_{i}:\tau_{i+1}}}}{h_{\tau_i}} \leq  \sqrt{\sum_{t=\tau_{i}}^{\tilde{\tau}_{i+1}} \left(\sum_{s=1}^{t-1} \frac{\|\g_s\|}{h_s}+2\right)^2}  \leq \sqrt{\frac{b_T}{2}}, 	 \label{eq:cons1}
	\end{align}
	where $b_T \coloneqq 2 \sum_{t=1}^T (\sum_{s=1}^{t-1} {\|\g_s\|}/{h_s} +2)^2$. Therefore, by the regret bound of Theorem \ref{thm:firstbound} and \eqref{eq:cons1}:
	\begin{align}
\hspace{-0.5cm}	\sum_{s=\tau_{i}}^{\tilde\tau_{i+1}} \inner{\g_s}{\what\w_s-\w} &\leq 2 \|\w\| \sqrt{V_{\tau_i:\tau_{i+1}} \ln_+\left( \|\w\|b_T \right)} +   (4 \|\w\| \ln (2 \|\w\| b_T) + 1)h_{{\tau_{i+1}}},\ \  i\in[k]. \label{eq:epochs}
	\end{align}
	Summing this inequality over $i=1,\dots, k-2$, we get:
		\begin{align}
\hspace{-0.4cm}	\sum_{s=1}^{\tilde\tau_{k-1}} \inner{\g_s}{\what\w_s-\w} \leq 2 \|\w\|\sum_{i=1}^{k-2} \sqrt{ V_{\tau_i:\tau_{i+1}} \ln_+\left( \|\w\|b_T \right)} +  \sum_{i=1}^{k-2}   ( 4 \|\w\| \ln (2 \|\w\| b_T) + 1) h_{\tau_{i+1}} . \label{eq:epochs2}
	\end{align}
	Now using \eqref{eq:cond} at $t=\tau_{i+2}$, we have for all $i\in [k]$,  \begin{align}
	\hspace{-0.5cm}
V_{\tau_i:\tau_{i+1}} \leq 	  h^2_{\tau_{i}}+ \sum_{s=1}^{\tilde{\tau}_{i+1}} \frac{\|\g_s\|^2}{h^2_{\tau_{i+1}}} \cdot h^2_{\tau_{i+1}} & \leq  h^2_{\tau_{i}}+ \sum_{s=1}^{\tilde{\tau}_{i+2}} \frac{\|\g_s\|^2}{h^2_{s}} \cdot h^2_{\tau_{i+1}}, \\ &   \stackrel{(*)}{\leq} h^2_{\tau_{i}} + \sum_{s=1}^{\tilde \tau_{i+2}} \frac{\|\g_s\|}{h_{s}} \cdot h^2_{\tau_{i+1}}, \\ & \stackrel{\eqref{eq:cond}}{\leq} h^2_{\tau_{i}}+ h_{\tau_{i+1}}  h_{\tau_{i+2}} \leq 2 h_{\tau_{i+2}}^2, \label{eq:vbound}
	\end{align}
	where the inequality ${(*)}$ follows by Assumption \ref{assum:assum2}. Now by \eqref{eq:doubling}, we also have 
	\begin{align}
	\sum_{i=1}^{k-2} h_{\tau_{i+1}} \leq \left(\sum_{i=1}^k\frac{1}{2^i} \right) h_{\tau_{k}}\leq h_{\tau_k}. \label{eq:Lbound}
	\end{align}
	Thus, substituting \eqref{eq:Lbound} and \eqref{eq:vbound} into \eqref{eq:epochs2}, and using the fact that $h_{\tau_{k}}\leq h_T$, we get:
	\begin{align}
	\sum_{s=1}^{\tilde\tau_{k-1}} \inner{\g_s}{\what\w_s-\w} & \leq 4 \|\w\| h_{T} \sqrt{2\ln_+\left(\|\w\|b_T \right)} +  h_T \cdot ( 4 \|\w\| \ln (2 \|\w\| b_T) + 1),\\
	& \leq  h_T \cdot ( 8 \|\w\| \ln_+(2 \|\w\| b_T) + 2\|\w\| +1), \label{eq:pre}
	\end{align}
	where in the last inequality, we used the fact that $\sqrt{2 x}\leq x+1/2$, for all $x\geq 0$. Now, summing \eqref{eq:epochs} over the last two epochs, yields
	\begin{align}
\sum_{s=\tau_{k-1}}^{T} \inner{\g_s}{\what\w_s-\w} &\leq 2 \|\w\| \sqrt{2 V_{T} \ln_+\left(\|\w\|b_T \right)} +  2 h_{T} \cdot(4 \|\w\| \ln_+ (2 \|\w\| b_T) + 1). \label{eq:post}
\end{align}
Adding \eqref{eq:pre} and \eqref{eq:post} together leads to 
\begin{align}
\sum_{s=1}^{T} \inner{\g_s}{\what\w_s-\w} &\leq 2 \|\w\| \sqrt{2 V_{T} \ln_+\left(\|\w\|b_T \right)} +  (16 \|\w\| \ln_+ (2 \|\w\| b_T)+2\|\w\| + 3 ) h_T. \label{eq:final}
\end{align}
This concludes the proof.
	\end{proof}

	\section{Proofs for Section~\ref{sec:multidim}}\label{sec:pf.multidim}

In this section we work on a version of the potential function that does not have the tuning for Section~\ref{sec:multidim} substituted in yet, so that we can prove the result necessary for Section~\ref{sec:linear} in one go. The potential is parameterized by a \emph{prior variance} $\sigma^2 > 0$, \emph{initial wealth} $\epsilon > 0$ and, as before \emph{prod factor} $\gamma > 1$. It is defined by
        \begin{align}
  \Psi(\G, \bm V, h)
  \coloneqq
   \frac{\epsilon \exp \del*{
      \inf_{\lambda \ge 0} \set*{
        \frac{1}{2} \G^\top \del*{\sigma^{-2} \bm I + \gamma \bm V + \lambda \bm I}^{-1} \G
        + \frac{\lambda \rho(\gamma)^2}{2 h^2}
      }
    }
  }{
    \sqrt{\det\del*{\bm I + \sigma^2 \gamma \bm V}}
  }
  ,
  \label{eq:matrixpotential.real}
\end{align}

\subsection{Proof of Lemma~\ref{lemma:multidim.control}}\label{sec:multidim.control.real}

We prove the claim in Lemma~\ref{lemma:multidim.control} for the more general potential \eqref{eq:matrixpotential.real}.
	Let $\lambda_\star \ge 0$ be the minimizer in the problem $\Psi(\G_{t-1}, \bm V_{t-1}, h_t)$. With that notation, we see that $\what\w_t = - \Psi(\G_{t-1}, \bm V_{t-1}, h_t) \cdot
	\del*{\sigma^{-2} \bm I + \gamma \bm V + \lambda_\star \bm I}^{-1} \G_{t-1}$. To prove the lemma, it suffices to prove the stronger statement obtained by picking the sub-optimal choice $\lambda=\lambda_\star$ for the problem $\Psi(\G_t, \bm V_t, h_t)$, and dividing by $\Psi(\G_{t-1}, \bm V_{t-1}, h_t) > 0$, \emph{i.e.}\
	\begin{align*}
	&- \g_t\cdot  \del*{\sigma^{-2} \bm I + \gamma \bm V_{t-1} + \lambda_\star \bm I}^{-1} \G_{t-1}
	\\
	&~\le~
	1 - \frac{\exp \del*{
			\frac{1}{2} \G_t^\top \del*{\sigma^{-2} \bm I + \gamma \bm V_t + \lambda_\star \bm I}^{-1} \G_t
			+ \frac{\lambda_\star \rho(\gamma)^2}{2 h_t^2}
			- \frac{1}{2} \ln \det\del*{\bm I + \sigma^2 \gamma \bm V_t}
	}}{\exp \del*{
			\frac{1}{2} \G_{t-1}^\top \del*{\sigma^{-2} \bm I + \gamma \bm V_{t-1} + \lambda_\star \bm I}^{-1} \G_{t-1}
			+ \frac{\lambda_\star \rho(\gamma)^2}{2 h_t^2}
			- \frac{1}{2} \ln \det\del*{\bm I + \sigma^2 \gamma \bm V_{t-1}}
	}}.
	\end{align*}
	Let us abbreviate $\bm \Sigma^{-1} = \sigma^{-2} \bm I + \gamma \bm V_{t-1} + \lambda_\star \bm I$. The matrix determinant lemma and monotonicity of matrix inverse give
	\[
	\ln \frac{
		\det\del*{\bm I + \sigma^2 \gamma \bm V_t}
	}{
		\det\del*{\bm I + \sigma^2 \gamma \bm V_{t-1}}
	}
	~=~
	\ln \del*{1 + \gamma \g_t^\top \del*{\sigma^{-2} \bm I + \gamma \bm V_{t-1}}^{-1} \g_t}
	~\ge~
	\ln \del*{1 + \gamma \g_t^\top \bm \Sigma \g_t}.
	\]
	Then Sherman-Morrison gives
	\[
	\G_t^\top \del*{\sigma^{-2} \bm I + \gamma \bm V_t + \lambda_\star \bm I}^{-1} \G_t
	~=~
	\G_t^\top \bm \Sigma \G_t
	-
	\gamma
	\frac{
		(\g_t^\top \bm \Sigma \G_t)^2
	}{
		1 + \gamma \g_t^\top \bm \Sigma \g_t
	}
	\]
	and splitting off the last round $\G_t = \G_{t-1} + \g_t$ gives
	\[
	\G_t^\top \del*{\sigma^{-2} \bm I + \gamma \bm V_t + \lambda_\star \bm I}^{-1} \G_t
	~=~
	\G_{t-1}^\top \bm \Sigma \G_{t-1}
	+
	\frac{
		2 \G_{t-1}^\top \bm \Sigma \g_t
		+ \g_t^\top \bm \Sigma \g_t
		- \gamma (\g_t^\top \bm \Sigma \G_{t-1})^2
	}{
		1 + \gamma \g_t^\top \bm \Sigma \g_t
	}
	.
	\]
	All in all, it suffices to show
	\[
	- \g_t^\top \bm \Sigma \G_{t-1}
	~\le~
	1 - \exp \del*{
		\frac{
			2 \G_{t-1}^\top \bm \Sigma \g_t
			+ \g_t^\top \bm \Sigma \g_t
			- \gamma (\g_t^\top \bm \Sigma \G_{t-1})^2
		}{
			2(1 + \gamma \g_t^\top \bm \Sigma \g_t)
		}
		- \frac{1}{2} \ln \del*{1 + \gamma \g_t^\top \bm \Sigma \g_t}
	}.
	\]
	Introducing scalars $r = \g_t^\top \bm \Sigma \G_{t-1}$ and $z = \g_t^\top \bm \Sigma \g_t$, this simplifies to
	\[
	- r
	~\le~
	1 - \exp \del*{
		\frac{
			2 r
			+ z
			- \gamma r^2
		}{
			2(1 + \gamma z)
		}
		- \frac{1}{2} \ln \del*{1 + \gamma z}
	}
	\]
	Being a square, $z \ge 0$ is positive. In addition, optimality of $\lambda_\star$ ensures that $\norm{\bm \Sigma \G_{t-1}} = \frac{\rho(\gamma)}{h_t}$; this follows from the fact that $\frac{\mathrm d}{\mathrm d \lambda}\left. \G_{t-1}^\top (\sigma^{-2} \bm I + \gamma \bm V+\lambda \bm{I})^{-1} \G_{t-1}\right|_{\lambda = \lambda_\star} = \|\bm  \Sigma \G_{t-1}\|^2$. In combination with $\norm{\g_t} \le h_t$, we find $\abs{r} \le \rho(\gamma) \le 1$. The above requirement may hence be further reorganized to
	\[
	2 r
	- \gamma r^2
	~\le~
	- z
	+ (1 + \gamma z) \del*{
		\ln \del*{1 + \gamma z}
		+ 2 \ln (1+r)
	}.
	\]
	The convex right hand side is minimized subject to $z \ge 0$ at
	\[
	z
	~=~
	\max \set*{0,
		\frac{
			e^{
				\frac{1}{\gamma} - 1
				- 2 \ln (1+r)
			} - 1
		}{
			\gamma
		}
	}
	\]
	so it remains to show
	\[
	2 r
	- \gamma r^2
	~\le~
	\begin{cases}
	\frac{1}{\gamma}
	- (1+r)^{-2} e^{
		\frac{1}{\gamma} - 1
	},
	& \text{if}\ \frac{1}{\gamma} - 1
	\ge 2 \ln (1+r);
	\\
	2 \ln (1+r),
	& \text{otherwise.}
	\end{cases}
	\]
	The function $\rho$ in \eqref{def.xi} is designed to satisfy the hardest case, where $r = -\rho(\gamma)$, with equality.

	\subsection{Proof of Theorem~\ref{thm:multidim}}

We restate the claim for the potential~\eqref{eq:matrixpotential.real} before tuning:
\begin{theorem}[Theorem~\ref{thm:multidim} rephrased]
  \label{thm:rephrased}
  Let $\bm \Sigma^{-1}_T \df \sigma^{-2} \bm I + \gamma \bm V_T$. For $(\what\w_t)$ as in \eqref{eq:FTLR.multid}, we have
  \[
    \sum_{t=1}^T \tuple*{\what\w_t - \w, \g_t}
    ~\le~
  \epsilon
  +
  \sqrt{Q_T^\w \ln_+\del*{\frac{
      \det\del*{\sigma^2 \bm \Sigma^{-1}_T}
    }{
      \epsilon^2
    } Q_T^\w}},\quad \text{for all $\w\in \reals^d$, where}
\]
\[
  Q_T^\w
 \coloneqq
  \max \set*{
    \w^\top \bm \Sigma^{-1}_T \w
    ,
    \frac{1}{2} \del*{\frac{h_T^2 \norm{\w}^2}{\rho(\gamma)^2} \ln \del*{
        \frac{
          \det\del*{\sigma^2 \bm \Sigma^{-1}_T}
        }{
          \epsilon^2
        }
        \frac{h_T^2 \norm{\w}^2}{\rho(\gamma)^2}
      }
      + \w^\top \bm \Sigma^{-1}_T \w}
  }
  .
\]
\end{theorem}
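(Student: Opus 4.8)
The plan is: (i) telescope the per-round bound of Lemma~\ref{lemma:multidim.control} to control $\sum_{t}\langle\g_t,\what\w_t\rangle$ by $\epsilon$ minus a terminal potential; (ii) turn this ``reward'' statement into a regret bound by Fenchel duality, reducing the whole theorem to an upper bound on the convex conjugate $\Psi_T^\star$ of the potential in its first argument; and (iii) compute that conjugate by writing $\Psi_T$ as an infimum of Gaussian-type potentials and optimizing.

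For (i) and (ii), first observe that $\Psi(\G,\bm V,h)$ is non-increasing in $h$, since $h$ enters \eqref{eq:matrixpotential.real} only through the term $\lambda\rho(\gamma)^2/(2h^2)$, which decreases in $h$ for every fixed $\lambda\ge0$, hence so does the infimum over $\lambda$, hence so does $\Psi$. As $(h_t)$ is non-decreasing, Lemma~\ref{lemma:multidim.control} gives for every $t$ that $\langle\g_t,\what\w_t\rangle\le\Psi(\G_{t-1},\bm V_{t-1},h_t)-\Psi(\G_t,\bm V_t,h_t)\le\Psi(\G_{t-1},\bm V_{t-1},h_{t-1})-\Psi(\G_t,\bm V_t,h_t)$, where for $t=1$ the first term equals $\Psi(\bm 0,\bm 0,h_1)=\epsilon$ (the infimum is attained at $\lambda=0$ with value $0$, and the denominator is $1$). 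Telescoping yields $\sum_{t=1}^T\langle\g_t,\what\w_t\rangle\le\epsilon-\Psi(\G_T,\bm V_T,h_T)$. Writing $\Psi_T(\cdot)\coloneqq\Psi(\cdot,\bm V_T,h_T)$ and applying Fenchel's inequality $-\Psi_T(\G_T)-\langle\G_T,\w\rangle\le\Psi_T^\star(-\w)$ (regret--reward duality, exactly as in the proof of Theorem~\ref{thm:firstbound}), together with $\langle\G_T,\w\rangle=\sum_t\langle\g_t,\w\rangle$, we obtain $\sum_{t=1}^T\langle\g_t,\what\w_t-\w\rangle\le\epsilon+\Psi_T^\star(-\w)$, so it suffices to prove $\Psi_T^\star(-\w)\le\sqrt{Q_T^\w\ln_+\del*{\tfrac{\det(\sigma^2\bm\Sigma_T^{-1})}{\epsilon^2}\,Q_T^\w}}$.

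For (iii), since $\exp$ is increasing we have $\Psi_T=\inf_{\lambda\ge0}\Psi_T^{(\lambda)}$, where $\Psi_T^{(\lambda)}(\G)=\kappa_\lambda\exp\del*{\tfrac12\G^\top\bm A_\lambda^{-1}\G}$ with $\bm A_\lambda\coloneqq\sigma^{-2}\bm I+\gamma\bm V_T+\lambda\bm I\succ0$ and $\kappa_\lambda\coloneqq\epsilon\,e^{\lambda\rho(\gamma)^2/(2h_T^2)}/\sqrt{\det(\bm I+\sigma^2\gamma\bm V_T)}$. Because the conjugate of an infimum is the supremum of the conjugates, $\Psi_T^\star(-\w)=\sup_{\lambda\ge0}(\Psi_T^{(\lambda)})^\star(-\w)$. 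Each $\Psi_T^{(\lambda)}$ is a rotated Gaussian potential, so after the change of variables $\G=\bm A_\lambda^{1/2}\y$ its conjugate at $-\w$ depends only on $r_\lambda\coloneqq\sqrt{\w^\top\bm A_\lambda\w}=\sqrt{\w^\top\bm\Sigma_T^{-1}\w+\lambda\norm{\w}^2}$ and equals $\sup_{v\ge0}\set*{r_\lambda v-\kappa_\lambda e^{v^2/2}}$; its stationarity condition is $v^2e^{v^2}=(r_\lambda/\kappa_\lambda)^2$, i.e.\ $v^2=W\del*{(r_\lambda/\kappa_\lambda)^2}$, with value $r_\lambda\del*{W^{1/2}-W^{-1/2}}$ (argument $(r_\lambda/\kappa_\lambda)^2$), which is at most $r_\lambda\sqrt{\ln_+\del*{(r_\lambda/\kappa_\lambda)^2}}$ by the Lambert-function bound $W(x)^{1/2}-W(x)^{-1/2}\le\sqrt{\ln_+x}$ used in the proof of Theorem~\ref{thm:scaled}. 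Unfolding $\kappa_\lambda$ and using $\det(\bm I+\sigma^2\gamma\bm V_T)=\det(\sigma^2\bm\Sigma_T^{-1})$, this gives, with $\mathsf D\coloneqq\det(\sigma^2\bm\Sigma_T^{-1})/\epsilon^2$ and $\mu\coloneqq\rho(\gamma)^2/h_T^2$, the bound $\Psi_T^\star(-\w)\le\sup_{\lambda\ge0}\sqrt{x(\lambda)\,\ln_+\del*{\mathsf D\,x(\lambda)\,e^{-\lambda\mu}}}$ with $x(\lambda)\coloneqq\w^\top\bm\Sigma_T^{-1}\w+\lambda\norm{\w}^2$.

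It remains to show this last supremum is at most $\sqrt{Q_T^\w\ln_+(\mathsf D\,Q_T^\w)}$. I would split on whether $x(\lambda)\le Q_T^\w$: in that case $\mathsf D\,x(\lambda)e^{-\lambda\mu}\le\mathsf D\,Q_T^\w$ and $y\mapsto y\ln_+(\mathsf D y)$ is non-decreasing, so $x(\lambda)\ln_+(\mathsf D\,x(\lambda)e^{-\lambda\mu})\le Q_T^\w\ln_+(\mathsf D\,Q_T^\w)$; in the complementary case $x(\lambda)>Q_T^\w$, the second (logarithmic) branch in the definition of $Q_T^\w$ — namely that $x(\lambda)$ exceeds $\tfrac12\del*{\tfrac{\norm{\w}^2}{\mu}\ln(\mathsf D\norm{\w}^2/\mu)+\w^\top\bm\Sigma_T^{-1}\w}$ — forces $\lambda\mu$ to be large enough that the damping factor $e^{-\lambda\mu}$ pulls the $\ln_+$-argument back down quickly enough for the product to remain at most $Q_T^\w\ln_+(\mathsf D\,Q_T^\w)$; unwinding $x(\lambda)=\w^\top\bm\Sigma_T^{-1}\w+\lambda\norm{\w}^2$ then reproduces exactly the two-term $\max$ defining $Q_T^\w$. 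Combining with $\sum_t\langle\g_t,\what\w_t-\w\rangle\le\epsilon+\Psi_T^\star(-\w)$ proves the theorem; the tuned instance of Theorem~\ref{thm:multidim} follows by setting $\sigma^{-2}=\gamma h_1^2$ and $\epsilon=h_1$, whereupon $\bm\Sigma_T^{-1}/h_1^2$ and $h_T/h_1$, and hence every quantity in the bound, are invariant under a common rescaling of $(\g_t)$ and $(h_t)$, which gives scale-freeness. I expect the main obstacle to be precisely this scalar optimization over $\lambda$: balancing the linearly growing prefactor $x(\lambda)$ against a $\ln_+$ whose argument first grows and is then exponentially damped in $\lambda$ is exactly what produces the slightly unwieldy nested-logarithm shape of $Q_T^\w$, and pinning down the constants there needs some care with the elementary estimate $te^{-t}\le 1/e$ and with the Lambert function.
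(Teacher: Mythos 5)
Your steps (i) and (ii) are correct and match the paper: monotonicity of $\Psi$ in $h$, telescoping Lemma~\ref{lemma:multidim.control} to get $\sum_t\inner{\g_t}{\what\w_t}\le\epsilon-\Psi(\G_T,\bm V_T,h_T)$, and Fenchel duality reducing the theorem to a bound on the conjugate of the terminal potential. Your step (iii) also starts the same way as the paper (conjugate of an infimum of Gaussian potentials, reduction to a scalar problem in $\lambda$, Lambert function). The genuine gap is in the final scalar optimization, exactly where you anticipated trouble — but the problem is worse than "needs care with constants": the inequality you reduce to is false. You apply $X(\theta)\le\sqrt{\ln_+\theta}$ \emph{inside} the supremum over $\lambda$ and then need
\begin{equation}
\sup_{\lambda\ge0}\; x(\lambda)\,\ln_+\!\left(\theta_0\, x(\lambda)\, e^{-\lambda\mu}\right)\;\le\; Q_T^\w\,\ln_+\!\left(\theta_0\, Q_T^\w\right),
\qquad \theta_0\coloneqq\frac{\det(\sigma^2\bm\Sigma_T^{-1})}{\epsilon^2}.
\end{equation}
Take $d=1$, $\|\w\|^2=1$, $a\coloneqq\w^\top\bm\Sigma_T^{-1}\w=3$, $\mu=1$, $\theta_0=e$ (realizable: $\gamma=2$, $\sigma=1$, $w=1$, $\bm V_T=1$, $h_T=\rho(2)$, $\epsilon^2=3/e$). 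Then $\tfrac{n}{\mu}\ln(\theta_0 n/\mu)=1<a$, so $Q_T^\w=3$ and the right-hand side is $3\ln(3e)\approx 6.2958$, while at $\lambda=0.06$ the left-hand side is $3.06\,(1+\ln 3.06-0.06)\approx 6.2988$. The violation is tiny, but it shows that your concluding case analysis for $x(\lambda)>Q_T^\w$ ("the damping factor pulls the argument back down quickly enough") cannot be completed as stated: the post-Lambert objective genuinely peaks strictly above $Q_T^\w\ln_+(\theta_0 Q_T^\w)$.

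The paper avoids this by reversing the order of the two operations. It keeps the exact conjugate $\sup_{\lambda\ge0}\sqrt{y(\lambda)}\,X\!\left(y(\lambda)e^{-\lambda Z}\right)$ with $X(\theta)=W(\theta)^{1/2}-W(\theta)^{-1/2}$, and optimizes over $\lambda$ \emph{first}: the stationarity condition has the explicit solution $\lambda_\star=\frac{\ln(\norm{\w}^2/Z)}{2Z}-\frac{\w^\top\bm Q\w}{2\norm{\w}^2}$, the optimum is at $\lambda_\star\vee 0$ with crossover at $\frac{\norm{\w}^2}{Z}\ln\frac{\norm{\w}^2}{Z}=\w^\top\bm Q\w$, and $y(\lambda_\star\vee 0)$ is exactly the two-branch maximum defining $Q_T^\w$. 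The residual exponential $e^{-\lambda_\star Z}\le 1$ in the argument of $X$ is then discarded by monotonicity of $X$, giving $f^\star\le\sqrt{Q_T^\w}\,X(\theta_0 Q_T^\w)$, and only at that last moment is $X(\theta)\le\sqrt{\ln_+\theta}$ invoked. So to repair your write-up, postpone the Lambert bound until after the $\lambda$-optimization; everything else in your proposal is sound.
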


	Using that $\Psi(\G, \bm V, h)$ is decreasing in $h$, we can telescope to obtain
	\[
	\sum_{t=1}^T \g_t^\top \what\w_t
	~\le~
	\Psi(\vzero, \vzero, h_1)
	- \Psi(\G_T, \bm V_T, h_T)
	\]
	Using the definition reveals $\Psi(\vzero, \vzero, h_1) = \epsilon$, yielding
	\begin{align}
	\sum_{t=1}^T \g_t^\top \what\w_t
	~\le~
	\epsilon
	- \frac{
		\epsilon
		\exp \del*{
			\inf_{\lambda \ge 0}~
			\frac{1}{2} \G_T^\top \del*{\bm \Sigma^{-1}_T + \lambda \bm I}^{-1} \G_T
			+ \frac{\lambda \rho(\gamma)^2}{2 h_T^2}
		}
	}{
		\sqrt{\det\del*{\sigma^2 \bm \Sigma^{-1}_T}
		}
	}
	. \label{eq:predual}
	\end{align}
	To transform this into a regret bound, it remains to compute the convex conjugate of the RHS of \eqref{eq:predual} in $\G_T$. To this end, let
	\[
	f(\G)
	~=~
	\exp \del*{
		\inf_{\lambda \ge 0}~
		\frac{1}{2} \G^\top \del*{\bm Q + \lambda \bm I}^{-1} \G
		+ \frac{\lambda Z}{2}
	}.
	\]
	The Fenchel dual of this function is
	\begin{align*}
	f^\star(\bm u)
	&~=~
	\sup_{\G}~
	\w^\top \G
	- \exp \del*{
		\inf_{\lambda \ge 0}~
		\frac{1}{2} \G^\top \del*{\bm Q + \lambda \bm I}^{-1} \G
		+ \frac{\lambda Z}{2}
	}
	\\
	&~=~
	\sup_{\G, \lambda \ge 0}~
	\w^\top \G
	- \exp \del*{
		\frac{1}{2} \G^\top \del*{\bm Q + \lambda \bm I}^{-1} \G
		+ \frac{\lambda Z}{2}
	}
	\\
	&~=~
	\sup_{\alpha, \lambda \ge 0}~
	\alpha \w^\top \del*{\bm Q + \lambda \bm I} \w
	- \exp \del*{
		\frac{\alpha^2}{2} \w^\top \del*{\bm Q + \lambda \bm I} \w
		+ \frac{\lambda Z}{2}
	}
	\\
	&~=~
	\sup_{\lambda \ge 0}~
	\sqrt{\w^\top \del*{\bm Q + \lambda \bm I} \w}
	X\del*{\w^\top \del*{\bm Q + \lambda \bm I} \w e^{-\lambda Z}},
	\end{align*}
	where the model complexity is measured for $\theta \ge 0$ through the function $X(\theta) \df
	\sup_{\alpha}~
	\alpha
	-  e^{
		\frac{\alpha^2}{2}
		- \frac{1}{2} \ln \theta
	}.$
	One can write $X(\theta) = W\left(\theta\right)^{1/2} - W\left(\theta\right)^{-1/2}$ in terms of the Lambert function $W$ (where $W(x)$ is defined as the principal solution to $W(x) e^{W(x)} = x$). We will further use that $X(\theta)$ is increasing, and that it satisfies $X(\theta) \le \sqrt{\ln_+ \theta}$ (see Lemma~\ref{eq:Xfn}). Zero derivative of the above objective for $\lambda$ occurs at the pleasantly explicit
	\[
	\lambda
	~=~
	\frac{\ln \frac{\norm{\w}^2}{Z}}{2 Z}
	- \frac{\w^\top \bm Q \w}{2 \norm{\w}^2}
	,
	\]
	and hence the optimum for $\lambda$ is either at that point or at zero, whichever is higher, with the crossover point at $\frac{\norm{\w}^2}{Z} \ln \frac{\norm{\w}^2}{Z} = \w^\top \bm Q \w$.
	Plugging that in, we find that
	\[
f^\star(\w) =	\begin{cases}
	\sqrt{\frac{1}{2} \del*{C
			+ \w^\top \bm Q \w}} X\del*{\frac{1}{2} \del*{
			C
			+ \w^\top \bm Q \w} e^{
			- \frac{\ln \frac{\norm{\w}^2}{Z}}{2}
			+ \frac{Z \w^\top \bm Q \w}{2 \norm{\w}^2}
	}},
	&
\text{if}\	C
	\ge \w^\top \bm Q \w;
	\\
	\sqrt{\w^\top \bm Q \w} X(\w^\top \bm Q \w),
	&
	\text{otherwise,}
	\end{cases}
	\]
	where $C \df \frac{\norm{\w}^2}{Z} \ln \frac{\norm{\w}^2}{Z}$.
	Using that $X(\theta)$ is increasing, we may drop the exponential in its argument in the first case, and obtain
	\[
	f^\star(\w)
	~\le~
	\sqrt{Q_T^\w} X(Q_T^\w)
	\quad
	\text{where}
	\quad
	Q_T^\w
	~\df~
	\max \set*{
		\w^\top \bm Q \w
		,
		\frac{1}{2} \del*{\frac{\norm{\w}^2}{Z} \ln \frac{\norm{\w}^2}{Z}
			+ \w^\top \bm Q \w}
	}
	.
	\]
	Note that this is a curious maximum between $\w^\top \bm Q \w$ (the larger for modest $\w$), and the \emph{average} between that very same term and another quantity that grows super-linearly with $\norm{\w}^2$ (so this is the winner for extreme $\w$).
	
	Okay, now let's collect everything for the final result and undo the abbreviations. We have
	\begin{align*}
	\sum_{t=1}^T \g_t^\top \what\w_t
	&~\le~
	\epsilon
	+  \inf_\w~
	\w^\top \G_T
	+
	\frac{
		\epsilon
	}{
		\sqrt{\det\del*{\sigma^2 \bm \Sigma^{-1}_T}
		}
	}
	f^\star\del*{-\frac{
			\sqrt{\det\del*{\sigma^2 \bm \Sigma^{-1}_T}
			}
		}{
			\epsilon
		} \w},
	\\
	&~\le~
	\epsilon
	+  \inf_\w~
	\w^\top \G_T
	+
	\sqrt{Q_T^\w} X\del*{\frac{
			\det\del*{\sigma^2 \bm \Sigma^{-1}_T}
		}{
			\epsilon^2
		} Q_T^\w},
	\end{align*}
	where\[
	Q_T^\w
	\coloneqq 
	\max \set*{
		\w^\top \bm \Sigma^{-1}_T \w
		,
		\frac{1}{2} \del*{\frac{h_T^2 \norm{\w}^2}{\rho(\gamma)^2} \ln \del*{
				\frac{
					\det\del*{\sigma^2 \bm \Sigma^{-1}_T}
				}{
					\epsilon^2
				}
				\frac{h^2_T \norm{\w}^2}{\rho(\gamma)^2}
			}
			+ \w^\top \bm \Sigma^{-1}_T \w}
	}
	.
	\]
	To complete the proof of Theorem~\ref{thm:rephrased}, it remains to prove the following result.
	\begin{lemma}\label{eq:Xfn}
		For $\theta \ge 0$, define $
		X(\theta)
		\coloneqq
		\sup_{\alpha}~
		\alpha
		-  e^{
			\frac{\alpha^2}{2}
			- \frac{1}{2} \ln \theta
		}$. Then $X(\theta)=(W(\theta))^{1/2} - (W(\theta))^{-1/2} = \sqrt{\ln \theta} + o(1)$.
	\end{lemma}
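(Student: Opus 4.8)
The plan is to solve the one-dimensional concave maximisation defining $X(\theta)$ in closed form via the Lambert $W$ function, and then read off the asymptotics from the defining identity of $W$.

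First I would rewrite the objective as $\alpha \mapsto \alpha - \theta^{-1/2} e^{\alpha^2/2}$ and note that its second derivative is $-\theta^{-1/2}(1+\alpha^2)e^{\alpha^2/2} < 0$, so it is strictly concave, and it tends to $-\infty$ as $\alpha \to \pm\infty$; hence the supremum is attained at the unique stationary point. Setting the derivative to zero gives $1 = \theta^{-1/2}\alpha e^{\alpha^2/2}$, i.e.\ $\alpha e^{\alpha^2/2} = \sqrt\theta$. For $\theta > 0$ the right-hand side is positive, forcing $\alpha > 0$; squaring gives $\alpha^2 e^{\alpha^2} = \theta$, and since $\alpha^2 \ge 0$ the definition of the principal branch yields $\alpha^2 = W(\theta)$, i.e.\ the optimiser is $\alpha_\star = \sqrt{W(\theta)}$.

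Next I would evaluate the objective at $\alpha_\star$. The stationarity equation rearranges to $\theta^{-1/2}e^{\alpha_\star^2/2} = 1/\alpha_\star$, so $X(\theta) = \alpha_\star - 1/\alpha_\star = (W(\theta))^{1/2} - (W(\theta))^{-1/2}$, which is the first claimed identity. From here I would also record the two facts the main proof uses: $X$ is increasing, because $W$ is increasing on $[0,\infty)$ and $v \mapsto \sqrt v - 1/\sqrt v$ is increasing on $(0,\infty)$; and $X(\theta) \le \sqrt{\ln_+\theta}$, because for $\theta \le e$ we have $W(\theta) \le 1$ hence $X(\theta) \le 0$, while for $\theta \ge e$ the identity $W(\theta) + \ln W(\theta) = \ln\theta$ together with $\ln W(\theta) \ge 0$ gives $W(\theta) \le \ln\theta$ and therefore $X(\theta) \le \sqrt{W(\theta)} \le \sqrt{\ln\theta}$.

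Finally, for the asymptotic I would take logarithms in $W(\theta)e^{W(\theta)} = \theta$ to get $W(\theta) = \ln\theta - \ln W(\theta)$; since $W(\theta) \to \infty$ with $W(\theta) = O(\ln\theta)$ as $\theta \to \infty$, we have $\ln W(\theta) = O(\ln\ln\theta)$, so $\sqrt{W(\theta)} - \sqrt{\ln\theta} = \frac{-\ln W(\theta)}{\sqrt{W(\theta)} + \sqrt{\ln\theta}} \to 0$ and $(W(\theta))^{-1/2} \to 0$; adding these gives $X(\theta) = \sqrt{\ln\theta} + o(1)$ as $\theta \to \infty$, as claimed. The only step needing genuine care is the first one: verifying that the stationary point is the global maximiser and that the positive square root is the correct branch — both settled by the strict concavity and coercivity of the objective — after which everything reduces to routine manipulation of $W$.
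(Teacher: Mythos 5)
Your proof is correct, and it takes a more self-contained route than the paper's. The paper obtains the closed form $X(\theta)=W(\theta)^{1/2}-W(\theta)^{-1/2}$ by citation (Lemma~18 of Orabona and P\'al), whereas you derive it directly from the first-order condition, correctly justifying that the unique stationary point is the global maximiser via strict concavity and coercivity and that the positive root is the right branch. For the key bound $X(\theta)\le\sqrt{\ln_+\theta}$, the paper dualises the inner exponential using $\sup_x\{yx-e^x\}=y\ln y-y$ to write $X(\theta)=\inf_\eta\bigl\{\tfrac{1}{2\eta}+\tfrac{\eta}{2}\ln\theta+\eta\ln\eta-\eta\bigr\}$ and then plugs in suboptimal choices of $\eta$; you instead read the bound off the identity $W(\theta)+\ln W(\theta)=\ln\theta$, splitting at $\theta=e$. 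Your argument is shorter and more elementary; what the paper's dual representation buys is a slightly sharper quantitative upper bound (e.g.\ $\sqrt{\ln\theta}-\frac{1+\frac12\ln\ln\theta}{\sqrt{\ln\theta}}$ and the smoothed variant $\sqrt{\ln(e^{e^{-2}}+\theta)}$), which is recorded but not needed for the main theorems. Your asymptotic argument via $\sqrt{W(\theta)}-\sqrt{\ln\theta}=\frac{-\ln W(\theta)}{\sqrt{W(\theta)}+\sqrt{\ln\theta}}\to 0$ is also a clean, explicit justification of the $o(1)$ claim, which the paper leaves implicit. The only (immaterial) caveat is that both the statement and your proof tacitly assume $\theta>0$; at $\theta=0$ both sides degenerate to $-\infty$.
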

	
	\begin{proof}
	The fact that $X(\theta)=(W(\theta))^{1/2} - (W(\theta))^{-1/2}$ follows from \cite[Lemma 18]{orabona2016coin}. Recall that
		\[
		\sup_x~ y x - e^x
		~=~
		y \ln y - y
		\]
		Hence
		\begin{align*}
		X(\theta)
		&~=~
		\sup_{\alpha}~
		\alpha
		-  e^{
			\frac{\alpha^2}{2}
			- \frac{1}{2} \ln \theta
		}
		\\
		&~=~
		\sup_{\alpha}
		\inf_\eta
		~
		\alpha
		- \eta \del*{
			\frac{\alpha^2}{2}
			- \frac{1}{2} \ln \theta
		}
		+ \eta \ln \eta
		- \eta
		\\
		&~=~
		\inf_\eta
		~
		\frac{1}{2 \eta}
		+ \frac{\eta}{2} \ln \theta
		+ \eta \ln \eta
		- \eta
		\\
		&~\le~
		\min \set*{
			\sqrt{\ln \theta}
			- \frac{1 + \frac{1}{2} \ln \ln \theta}{\sqrt{\ln \theta}},
			\frac{\sqrt{\theta}}{2}
			- \frac{1}{\sqrt{\theta}}
		}
		\\
		&~\le~
		\sqrt{\ln_+ \theta}
		\end{align*}
		where we plugged in the sub-optimal choices
		$\eta
		=
		\frac{1}{\sqrt{\ln \theta}}$ (this requires $\theta \ge 1$)
		and $
		\eta
		=
		\frac{1}{\sqrt{\theta}}$.
		When we stick in $\eta = \frac{1}{\sqrt{\ln (e^{e^{-2}}+\theta)}}$ we find
		\[
		X(\theta)
		~\le~
		\frac{
			\ln (e^{e^{-2}}+\theta)
			+ \ln \theta
			- \ln \del*{\ln (e^{e^{-2}}+\theta)}
			- 2
		}{
			2 \sqrt{\ln (e^{e^{-2}}+\theta)}
		}
		~\le~
		\sqrt{\ln (e^{e^{-2}}+\theta)}
		\]
		Note that $e^{e^{-2}} = 1.14492$. This is less than $2$, the value of $\theta$ where $\sqrt{\theta}/2-1/\sqrt{\theta}$ becomes positive.
	\end{proof}

\section{Proofs for Section~\ref{sec:lower}}
\label{sec:lowerproof}

\subsection{Proof of Lemma \ref{lem:seconbound}}
Let $c,b,\beta\geq 0$, $\nu\geq 1$, $\alpha\in ]1,2]$, and $\gamma \in]-1,-\alpha^{-1}[$. We consider the $1$-dimensional case (\emph{i.e.} $d=1$) and set $g_t = t^{\gamma}$, for all $t\geq 1$. Since $-1<  \gamma < -1/\alpha$, we have $L_t=L_1=1$, for all $t\geq 1$, and so the sequence $(\sqrt{V_{\alpha, t} \ln (t)/L^{\alpha}_t})$ is increasing. Further, there exists $p,q>0$ such that,
\begin{align}
\forall  t\geq 1, \quad p   \sqrt{\ln t} \leq \sqrt{V_{\alpha, t} \ln (t)/L^{\alpha}_t}= \sqrt{\ln t \sum_{s=1}^t s^{\alpha \gamma}} \leq q  \sqrt{\ln t}. \label{eq:sandwitch}
\end{align}
Thus, given any sequence $(\what w_t)\in \reals$ satisfying \begin{align}|\what w_t| \leq b \sqrt{V_{\alpha, t} \ln (t)/L^{\alpha}_t}, \ \ t\geq 1,\end{align} we have, for $T\geq 1$ and $w =- 2  b  \sqrt{V_{\alpha,T} \ln (T)/L^{\alpha}_T}$,
\begin{align}
\sum_{t=1}^T g_t \cdot (\what w_t-w)&\geq  {b}\sqrt{V_{\alpha,T} \ln (T)/L^{\alpha}_T}\cdot \sum_{t=1}^T g_t, \\ & \stackrel{\eqref{eq:sandwitch}}{\geq} b p  \sqrt{\ln T}   \cdot \sum_{t=1}^T t^\gamma, \\
& \geq \frac{b p  \sqrt{\ln T}}{\gamma+1} \cdot   ((T+1)^{\gamma+1}-1). \label{eq:last}
\end{align}
Now by the choice of $w$ and \eqref{eq:sandwitch}, we have $|w| \leq 2 b q \sqrt{\ln T}$, and so by \eqref{eq:last},
\begin{align}
\sum_{t=1}^T g_t \cdot  (\what w_t-w) \geq 	L_T |w|^{\nu} \cdot  \frac{ p\cdot ((T+1)^{\gamma+1}-1)}{(\gamma +1)(2q)^\nu b^{\nu-1} (\ln T)^{\nu/2-1/2}}   . \label{eq:firstrequire}
\end{align}
Using again the fact that $|w| \leq 2 b q \sqrt{\ln T}$ and \eqref{eq:sandwitch}, we have $L_T^{1-\alpha/2} (|w| +1) \sqrt{V_{\alpha,T} \ln T}\leq  2b q^2 \ln T  + q \sqrt{\ln T}$, and so due to \eqref{eq:last}, we have
\begin{align}
\sum_{t=1}^T g_t \cdot  (\what w_t-w) \geq L_T^{1-\alpha/2} (|w| +1) \sqrt{V_T \ln T}  \cdot   \frac{(T+1)^{\gamma+1}-1}{(\gamma+1)\left(\frac{2 q^2}{p}\sqrt{\ln T} + \frac{q}{b p}\right)}. \label{eq:secondrequire}
\end{align}
Since $ \gamma>-1$, the exists $T\geq 1$ such that
\begin{align}
2 c \cdot  \ln (1+|w| T)^{\beta} \leq  \min \left( \frac{(T+1)^{\gamma+1}-1}{(\gamma+1)\left(\frac{2 q^2}{p}\sqrt{\ln T} + \frac{q}{b p}\right)}, \ \  \frac{ p\cdot ((T+1)^{\gamma+1}-1)}{(\gamma +1)(2q)^\nu b^{\nu-1} (\ln T)^{\nu/2-1/2}}  \right),
\end{align}
and so for such a choice of $T$, \eqref{eq:firstrequire} and \eqref{eq:secondrequire} imply the desired result.

\subsection{Proof of Theorem \ref{thm:lower2}}
\label{sec:proofoflower2}
We need the following lemma in the proof of Theorem \ref{thm:lower2}:
\begin{lemma}
	\label{lem:secondbound3}
	For all $b,c,\beta\geq 0$ and $\nu \in[1,3[$, there exists $(\g_t)\in\reals^d$, $T\geq 1$, and $\w\in \reals^d$, such that for any sequence $(\what\w_t)$ satisfying $\|\what\w_t\| \leq b \cdot \sqrt{t  \ln t}$, for all $t\geq 1$, we have\begin{gather}
	\sum_{t=1}^T \inner{\what\w_t-\w}{\g_t} \geq  c \cdot \ln (1+\|\w\| T)^{\beta}\cdot (L_T \|\w\|^\nu +  L_T (\|\w\|+1) \sqrt{T \ln T}).
	\end{gather}
\end{lemma}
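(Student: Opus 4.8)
The plan is to imitate the construction from the proof of Lemma~\ref{lem:seconbound}, specialised so that the a priori envelope on the iterates is exactly $\sqrt{t\ln t}$. I would work in dimension $d=1$ and present the learner with the \emph{constant} gradient sequence $g_t=1$ for $t\in[T]$. Then $L_t=1$ and $\sum_{s\le t}|g_s|^2=t$ for every $t$, so the hypothesis $|\what w_t|\le b\sqrt{t\ln t}$ is exactly the envelope $b\sqrt{(\sum_{s\le t}|g_s|^2)\ln t}/L_t$ that Lemma~\ref{lem:firstbound} forces on any candidate algorithm that insists on an $\wtilde{O}(\sqrt{T})$ regret bound; this is how the present lemma will be used in the proof of Theorem~\ref{thm:lower2}.

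Fix any sequence $(\what w_t)$ with $|\what w_t|\le b\sqrt{t\ln t}$ and take the comparator to the large negative value $w\coloneqq -2(b+1)\sqrt{T\ln T}$. Since $x\mapsto x\ln x$ is nondecreasing on $[1,\infty)$ we have $\sqrt{t\ln t}\le\sqrt{T\ln T}$ for all $t\le T$, so termwise
\[
  g_t(\what w_t-w)\;=\;\what w_t+2(b+1)\sqrt{T\ln T}\;\ge\;-b\sqrt{T\ln T}+2(b+1)\sqrt{T\ln T}\;=\;(b+2)\sqrt{T\ln T}
\]
(the $t=1$ contribution equals $|w|$ because $\ln 1=0$ forces $\what w_1=0$). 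Summing over $t\in[T]$ gives the clean lower bound $\sum_{t=1}^T g_t(\what w_t-w)\ge (b+2)\,T^{3/2}\sqrt{\ln T}\ge 2\,T^{3/2}\sqrt{\ln T}$, whose leading constant does not degenerate as $b\to 0$.

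It then remains to check that this polynomial lower bound eventually overtakes the claimed upper bound. Here $L_T=1$ and $|w|=2(b+1)\sqrt{T\ln T}$, so $|w|^\nu=\Theta\big((T\ln T)^{\nu/2}\big)$, $(|w|+1)\sqrt{T\ln T}=O(T\ln T)$, and, using $|w|T\le 2(b+1)T^2$, $\ln(1+|w|T)^\beta=O\big((\ln T)^\beta\big)$, with constants depending only on $b,\beta$. Because $\nu<3$ \emph{strictly}, the exponent $\nu/2$ is strictly below $3/2$, so $T^{3/2}\sqrt{\ln T}$ eventually dominates each of $(\ln T)^\beta(T\ln T)^{\nu/2}$ and $(\ln T)^\beta\,T\ln T$; I would pick $T$ large enough, as a function of $b,c,\beta,\nu$, so that $2\,T^{3/2}\sqrt{\ln T}$ exceeds $c\,\ln(1+|w|T)^\beta\big(L_T|w|^\nu+L_T(|w|+1)\sqrt{T\ln T}\big)$. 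That $(g_t),T,w$ is then the required instance.

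There is no genuine mathematical obstacle; the argument is bookkeeping, and the two points to get right are: (i) keeping the leading constant of the regret lower bound bounded away from zero even when $b=0$, which is ensured by the $+1$ in $w=-2(b+1)\sqrt{T\ln T}$; and (ii) the exponent comparison, where it is essential that $\nu<3$ --- at $\nu=3$ the term $|w|^\nu$ would have the same order $T^{3/2}\sqrt{\ln T}$ (up to logarithmic factors) as the lower bound and the desired strict domination would fail, which is precisely why the lemma and Theorem~\ref{thm:lower2} assume $\nu\in[1,3[$. One could alternatively use a decaying sequence $g_t=t^\gamma$ as in Lemma~\ref{lem:seconbound}, but with constant gradients the identity $\sum_{s\le t}|g_s|^2=t$ makes the envelope $\sqrt{t\ln t}$ match on the nose.
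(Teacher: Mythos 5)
Your proposal is correct and follows essentially the same route as the paper's proof: dimension one, constant gradients $g_t=1$, a negative comparator of magnitude $\Theta(\sqrt{T\ln T})$, a regret lower bound of order $T^{3/2}\sqrt{\ln T}$, and the observation that $\nu<3$ makes this dominate the claimed bound for large $T$. Your choice $w=-2(b+1)\sqrt{T\ln T}$ (rather than the paper's $-2b\sqrt{T\ln T}$) is a minor but genuine tidy-up that avoids the degenerate $b=0$ case where the paper's constants involve $1/b$.
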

\begin{proof}
	Let $c,b,\beta,\geq 0$, $\nu\in[1,3[$, and $\alpha \in]1,2]$. We consider the $1$-dimensional case (\emph{i.e.} $d=1$) and set $g_t = 1$, for all $t\geq 1$. In this case, we have $L_t=1$, for all $t\geq 1$.
	Given any sequence $(\what w_t)\in \reals$ satisfying \begin{align}|\what w_t| \leq  b \sqrt{t \ln t} , \ \ t\geq 1, \label{eq:cons} \end{align} we have, for $T\geq 1$ and $w =-2 b \sqrt{T \ln T} $,
	\begin{align}
	\sum_{t=1}^T g_t \cdot (\what w_t-w)& \stackrel{\eqref{eq:cons}}{\geq} b \sqrt{T \ln T}\cdot \sum_{t=1}^T g_t, \\ & =b  \sqrt{T\ln T}   \cdot T. \label{eq:last2}
	\end{align}
	Now since $|w| = 2 b  \sqrt{T\ln T}$, we have, by \eqref{eq:last2},
	\begin{align}
	\sum_{t=1}^T g_t \cdot  (\what w_t-w) \geq 	L_T |w|^{\nu} \cdot  \frac{ T^{3/2-\nu/2}}{2^{\nu} b^{\nu-1} (\ln T)^{\nu/2-1/2}  } . \label{eq:firstrequire2}
	\end{align}
	Using again the fact that $|w| = 2 b  \sqrt{ T\ln T}$ and $L_T=1$, we have $L_T (|w| +1) \sqrt{T\ln T}=  2b T \ln T  +  \sqrt{T \ln T}$, and so due to \eqref{eq:last2},
	\begin{align}
	\sum_{t=1}^T g_t \cdot  (\what w_t-w) \geq L_T (|w| +1) \sqrt{T \ln T}  \cdot   \frac{T}{2\sqrt{T \ln T} +1/b}. \label{eq:secondrequire2}
	\end{align}
	Since $ \nu\in[1,3[$, the exists $T\geq 1$ such that
	\begin{align}
2	c \cdot  \ln (1+|w|T)^{\beta} \leq  \min \left(  \frac{T}{2\sqrt{T \ln T} +1/b}, \ \   \frac{ T^{3/2-\nu/2}}{2^{\nu} b^{\nu-1} (\ln T)^{\nu/2-1/2}  }   \right),
	\end{align}
	and so for such a choice of $T$, \eqref{eq:firstrequire2} and \eqref{eq:secondrequire2} imply the desired result.
\end{proof}

\begin{proof}{\textbf{of Theorem \ref{thm:lower2}.}}
	By Lemma \ref{lem:firstbound}, the only candidate algorithms are the ones whose outputs $(\what \w_t)$ satisfy $\|\what\w_t\| \leq b \sqrt{t \ln t}$, for all $t\geq 1$, for some constant $b>0$. By Lemma \ref{lem:secondbound3}, no such algorithms can achieve the desired regret bound.
\end{proof}

\section{Proof of Section \ref{sec:linear}}
\label{sec:sec5proofs}
\begin{proof}{\textbf{of Theorem \ref{thm:freegradol}.}}
	The proof is similar to that of Theorem \ref{thm:freegrad1} expect for some changes to account for the fact that the modified \freerange\ wrapper scales the outputs of \freegrad.
	
	First, let us review some notation. Let $k\geq 1$ be the total number of epochs and denote by $\tau_i\geq 1$ the start index of epoch $i\in[k]$. Further, for $\tau,\tau' \in \mathbb{N}$, we define $\tilde\tau \coloneqq \tau -1$, $V_{\tau:\tau'}\coloneqq |x_\tau|^2+ \sum_{s=\tau}^{\tilde\tau'} |g_s|^2$, and $B_\tau \coloneqq \sum_{s=1}^\tau |x_s|/h_s$. In what follows, let $w\in \reals$ be fixed. 
	
	Let $(\what u_t)$ be the outputs of algorithm \ref{alg:newfreerange} and $i\in[k]$. In this case, we have $\what u_t = \what w_t/(h_{\tau_i} B_{\tau_i})$, for all $t\in\{\tau_i,\dots, \tilde{\tau}_{i+1} \}$, and by Theorem \ref{thm:scaled}, with $d=1$, $\epsilon = h_{\tau_i} B_{\tau_i}$, and $g_t \in x_t \cdot \partial^{(0,1)} \ell(y_t, x_t \what u_t)$: 
	 \begin{align}
	 \sum_{t=\tau_i}^{\tilde{\tau}_{i+1}} g_t \cdot (\what u_t-w) & \leq  2 |w| \sqrt{V_{\tau_i:\tau_{i+1}}\ln_+ \left(\frac{ 2 h_{\tau_i} B_{\tau_i}  |w| V_{\tau_i:\tau_{i+1}}  }{h^2_{\tau_i}} \right)} \\ & \quad  +  4 h_{\tau_{i+1}} |w|  \ln \left(  \frac{4 h_{\tilde \tau_{i+1}} h_{\tau_i} B_{\tau_i}  |w| \sqrt{V_{\tau_i:\tau_{i+1}}} }{ h_{\tau_i}^2} \right)  + \frac{h_{\tau_i} }{h_{\tau_i} B_{\tau_i}},\\
	 & \leq  2 |w| \sqrt{V_{\tau_i:\tau_{i+1}}\ln_+ \left(\frac{ 2B_{\tau_i}  |w| V_{\tau_i:\tau_{i+1}}  }{h_{\tau_i}} \right)} \\ & \quad  +  4 h_{\tau_{i+1}} |w|  \ln \left(  \frac{4 h_{\tilde{\tau}_{i+1}}  B_{\tau_i}  |w| \sqrt{V_{\tau_i:\tau_{i+1}}}  }{ h_{\tau_i}} \right)  + \frac{1}{B_{\tau_i}},  \label{eq:newregre}
	 \end{align}
	Recall that at epoch $i\in[k]$, the restart condition in Algorithm \ref{alg:newfreerange} is triggered at $t=\tau_{i+1}\geq \tau_{i}$ only if 
	\begin{align}
	\frac{h_t}{h_{\tau_i}}\  \stackrel{(*)}{>} \ \sum_{s=1}^{t-1} \frac{|x_s|}{h_s} +1= \sum_{s=1}^t \frac{|x_s|}{h_s}, \label{eq:cond0}
	\end{align}
	where the equality follows by the fact that when $(*)$ is satisfied for the first time, it must hold that $|x_t|=h_t$ (recall that the hints $(h_t)$ satisfy \eqref{eq:lin2}); in fact, we have,
	\begin{align}
	h_{\tau_{i}} = |x_{\tau_{i}}|, \quad  \text{for all } i\in[k]. \label{eq:equalitites}
	\end{align}
	From \eqref{eq:cond0}, we get that
	\begin{align}
	\frac{h_{\tilde\tau_{i+1}}}{h_{\tau_{i}}} \leq  \frac{\sqrt{V_{\tau_{i}:\tau_{i+1}}}}{h_{\tau_i}} \leq  \sqrt{\sum_{t=\tau_{i}}^{\tilde{\tau}_{i+1}} \left(\sum_{s=1}^{t} \frac{|x_s|}{h_s}\right)^2}  \leq \sqrt{\frac{b_T}{2}}, 	 \label{eq:cons10}
	\end{align}
	where $b_T \coloneqq 2 \sum_{t=1}^T (\sum_{s=1}^t {|x_s|}/{h_s})^2$. Plugging \eqref{eq:cons10} into \eqref{eq:newregre}, and letting $c_T\coloneqq  B_T^2 b_T$, we get:
	\begin{align}
	\sum_{t=\tau_i}^{\tilde{\tau}_{i+1}} g_t \cdot (\what u_t-w) & \leq  2 |w| \sqrt{V_{\tau_i:\tau_{i+1}}\ln_+ ( |w| \sqrt{2V_{T} c_T})}  \\ & \quad  +  4 h_{\tau_{i+1}} |w|  \ln \left(  2  |w| \sqrt{2V_{T} c_T }  \right)  + \frac{1}{B_{\tau_i}},\label{eq:epochs0}
	\end{align}
	Summing this inequality over $i=1,\dots, k-2$, we get:
	\begin{align}
		\sum_{t=1}^{\tilde\tau_{k-1}} g_t \cdot (\what u_t-w)&\leq  2 |w|  \sqrt{ k \sum_{i=1}^{k-2} V_{\tau_i:\tau_{i+1}}\ln_+ \left( |w| \sqrt{2V_{T} c_T} \right)} \\ & \quad  +    \sum_{i=1}^{k-2}  4 h_{\tau_{i+1}} |w|  \ln \left(  2  |w| \sqrt{2V_{T} c_T }  \right)  + \sum_{i=1}^{k-2}   \frac{1}{B_{\tau_i}},  \label{eq:epochspre} , \\ &\leq  2 |w|  \sqrt{ k \cdot  \left(\sum_{s=1}^{\tau_{k-1}} |x_s|^2 +\sum_{i=1}^{k-2} h^2_{\tau_i}\right)\ln_+ \left( |w| \sqrt{2V_{T} c_T} \right)} \\ & \quad  +    \sum_{i=1}^{k-2}  4 h_{\tau_{i+1}} |w|  \ln \left(  2  |w| \sqrt{2V_{T} c_T }  \right)  + \sum_{i=1}^{k-2}   \frac{1}{B_{\tau_i}},   
		\\ &\stackrel{\eqref{eq:equalitites}}{\leq}  2 |w|  \sqrt{ 2 k   \sum_{s=1}^{\tau_{k-1}} |x_s|^2 \ln_+ \left( |w| \sqrt{2V_{T} c_T} \right)} \\ & \quad  +    \sum_{i=1}^{k-2}  4 h_{\tau_{i+1}} |w|  \ln \left(  2  |w| \sqrt{2V_{T} c_T }  \right)  + \sum_{i=1}^{k-2}   \frac{1}{B_{\tau_i}}.
		 \label{eq:epochs20} 
	\end{align}
	Using \eqref{eq:cond0} again, we get that
	 \begin{align}
\sum_{s=1}^{\tau_{k-1}} |x_s|^2  = \sum_{s=1}^{\tau_{k-1}} \frac{|x_s|^2}{h^2_{\tau_{k-1}}} \cdot h^2_{\tau_{k-1}} \leq  \sum_{s=1}^{\tau_{k}} \frac{|x_s|^2}{h^2_{s}} \cdot h^2_{\tau_{k-1}}  \leq  \sum_{s=1}^{\tau_{k}} \frac{|x_s|}{h_{s}} \cdot h^2_{\tau_{k-1}} & \stackrel{(*)}{\leq}  \left(\sum_{s=1}^{\tau_{k}} \frac{|x_s|}{h_{s}} \right)^2 \cdot \frac{h^2_{\tau_{k-1}}}{k},  \\ & \stackrel{\eqref{eq:cond0}}{\leq}  \frac{h^2_{\tau_{k}}}{k}, \label{eq:vbound0}
	\end{align}
	where the inequality $(*)$ follows by the fact that $\sum_{s=1}^{\tau_{k}} {|x_s|}/{h_{s}}\geq k$ due to \eqref{eq:equalitites}. We also have
	\begin{gather}
	\sum_{i=1}^{k-2} h_{\tau_{i+1}} \leq \sum_{s=1}^{\tau_{k-1}} |x_s|    =\sum_{s=1}^{\tau_{k-1}} \frac{|x_s|}{h_{\tau_{k-1}}} \cdot h_{\tau_{k-1}}  \leq \sum_{s=1}^{\tau_{k}} \frac{|x_s|}{h_{s}} \cdot h_{\tau_{k-1}}  \stackrel{\eqref{eq:cond0}}{\leq}   h_{\tau_{k}}. \label{eq:Lbound0} \\ 
	\sum_{i=1}^{k-2} \frac{1}{B_{\tau_i}} \stackrel{\eqref{eq:equalitites}}{=} \sum_{i=1}^{k-2} \frac{\frac{|x_{\tau_i}|}{h_{\tau_i}}}{B_{\tau_i}} = \sum_{i=1}^{k-2} \frac{\frac{|x_{\tau_i}|}{h_{\tau_i}}}{\sum_{s=1}^{\tau_i} \frac{|x_s|}{h_s}}  \leq \sum_{t=1}^{\tau_{k-2}} \frac{\frac{|x_{t}|}{h_{t}}}{\sum_{s=1}^t \frac{|x_s|}{h_s}}\leq \ln B_T. \label{eq:theone}
	\end{gather}
	Thus, substituting \eqref{eq:theone}, \eqref{eq:Lbound0}, and \eqref{eq:vbound0} into \eqref{eq:epochs20}, and using the fact that $h_{\tau_k}\leq h_T$, we get:
	\begin{align}
	\sum_{t=1}^{\tilde\tau_{k-1}} g_t \cdot (\what u_t-w)  &\leq  2 |w| h_T \sqrt{2\ln_+ ( |w| \sqrt{2V_{T} c_T} )} \\ & \quad  +    4 h_{T} |w|  \ln \left(  2 |w| \sqrt{2V_{T} c_T }  \right)  +   \ln B_T, \\ 
	& \leq    h_{T} |w| \cdot( 6 \ln _+(  2 |w| \sqrt{2V_{T} c_T }  ) + 1  ) +   \ln B_T, \label{eq:pre0}
	\end{align}
	where in the last inequality, we used the fact that $\sqrt{2x}\leq x+1/2$, for all $x\geq 0$. Now, summing \eqref{eq:epochs0} over the last two epochs, yields
	\begin{align}
		\sum_{t=\tau_{k-1}}^{T}  g_t \cdot (\what u_t-w) & \leq  2 |w| \sqrt{2V_{T}\ln_+ ( |w| \sqrt{2V_{T} c_T} )}   +  8 h_{T} |w|  \ln ( 2    |w| \sqrt{2V_{T} c_T }  )   + 2 . \label{eq:post0}
	\end{align}
	Adding \eqref{eq:pre0} and \eqref{eq:post0} together implies the desired result.
	\end{proof}

\end{appendix}

%%% Local Variables:
%%% mode: latex
%%% TeX-master: "lipschitz_norm_adaptivity"
%%% End:

\end{document}